\theoremstyle{plain}
\newtheorem{theorem}{Theorem}[section]
\newtheorem{lemma}[theorem]{Lemma}
\newtheorem{corollary}[theorem]{Corollary}
\theoremstyle{definition}
\theoremstyle{remark}
\def\Paragraph#1{\vspace{.5em}\par\noindent\textbf{#1}\hspace{1em}}
\title{Rank-1 Matrix Completion with \\ Gradient Descent and Small Random Initialization}
\author{%
    Daesung Kim
    \\
    Samsung Electronics\\
    \texttt{dskim95phd@gmail.com} \\
    \And
    Hye Won Chung \\
    School of Electrical Engineering\\
    KAIST\\
    \texttt{hwchung@kaist.ac.kr}
}
\begin{document}

\maketitle

\begin{abstract}
    The nonconvex formulation of the matrix completion problem has received significant attention in recent years due to its affordable complexity compared to the convex formulation. Gradient Descent (GD) is a simple yet efficient baseline algorithm for solving nonconvex optimization problems. The success of GD has been witnessed in many different problems in both theory and practice when it is combined with random initialization. However, previous works on matrix completion require either careful initialization or regularizers to prove the convergence of GD. In this paper, we study the rank-1 symmetric matrix completion and prove that GD converges to the ground truth when small random initialization is used. We show that in a logarithmic number of iterations, the trajectory enters the region where local convergence occurs. We provide an upper bound on the initialization size that is sufficient to guarantee the convergence, and show that a larger initialization can be used as more samples are available. We observe that the implicit regularization effect of GD plays a critical role in the analysis, and for the entire trajectory, it prevents each entry from becoming much larger than the others.
\end{abstract}

\section{Introduction}
Recovering a low-rank matrix from a set of linear measurements is at the heart of many statistical learning problems. Depending on the structure of the matrix and the linear measurements, it reduces to various problems such as phase retrieval \cite{chen2019gradient}, blind deconvolution \cite{ahmed2014blind}, and matrix sensing \cite{tu2016lowrank}. Matrix completion \cite{candes2009exact} is also one such type of problem where each measurement provides an entry of the matrix, and the goal is to recover the low-rank matrix from a partial, usually very sparse, observation of the entries. One of the most notable applications of matrix completion is collaborative filtering \cite{gleich2011rank}, which aims to predict user preferences for items based on a highly incomplete observation of user-item ratings. There are also a number of different applications, such as principal component analysis \cite{candes2011robust} and image reconstruction \cite{hu2019generalized}, just to name a few.

Extensive amount of work has been dedicated to provide an efficient recovery algorithm for matrix completion with theoretical guarantees \cite{chi2019nonconvex}. The convex relaxation based nuclear norm minimization \cite{candes2009exact,candes2010power} was the first algorithm proven to recover the matrix with near optimal sample complexity. Despite its theoretical success, the convex algorithm was found hard to be used in practical scenarios due to its unaffordable computational complexity and memory size. Therefore, the nonconvex formulation of matrix completion with quadratic loss has received significant attention in recent years. Many different algorithms have been proposed for the nonconvex problem, and their convergence toward the ground truth has been analyzed. Examples include optimization on Grassmann manifolds \cite{keshavan2010matrix}, alternating minimization \cite{jain2013low}, projected gradient descent \cite{chen2015fast}, gradient descent with regularizer \cite{sun2016guaranteed}, and (vanilla) gradient descent \cite{cong2020implicit,chen2020nonconvex}.

Gradient descent (GD) has served as a baseline algorithm for solving nonconvex optimization problems. However, the convergence of GD to global minimizers is not guaranteed, and it can take exponential time to escape saddle points \cite{du2017gradient}. Nevertheless, GD with random initialization has been shown to successfully recover the global minimum in many different problems such as phase retrieval \cite{chen2019gradient}, matrix sensing \cite{li2018algorithmic}, matrix factorization \cite{ye2021global}, and neural network training \cite{du2019gradient}. Previous work on matrix completion \cite{cong2020implicit,chen2020nonconvex} proved the convergence of GD under the spectral initialization, which locates the initial point in the local region of the minima. However, the role of random initialization in solving matrix completion with GD is not fully understood yet, although its success is observed in practice. Therefore, we aim to answer the following question:
\begin{center}
	\textit{Can GD with random initialization solve the nonconvex matrix completion problem?}
\end{center}

We answer this question affirmatively and show that GD with small random initialization successfully converges to the ground truth for rank-1 symmetric matrix completion. In the analysis, we use vanilla GD, which does not incorporate any modifications, such as regularization or truncation, into the GD algorithm. We also characterize the entire trajectory that GD follows by showing that the trajectory is well approximated by the fully observed case. The small initialization plays a critical role in analyzing the trajectory of the early stages, where the randomly initialized vector is nearly orthogonal to the first eigenvector of the ground truth matrix. We provide a bound on the required initialization size for the algorithm to converge, and our bound suggests that one can use a larger initialization to improve the convergence speed as more samples are provided. However, in any case, GD with a small random initialization takes only logarithmic amount of time (with respect to the matrix dimension) to reach the point where local convergence can begin. To the best of our knowledge, this is the first result on matrix completion that proves the convergence of vanilla GD without a carefully designed initialization.

Although our result is restricted to the rank-1 case, we believe that this work provides an important evidence for understanding the more general rank-$r$ case. At the end of this paper, we will discuss some technical difficulties that the rank-$r$ case naturally has, and provide some empirical results related to them. However, studying the rank-1 matrix completion problem is not only motivated by theoretical interest, but the problem itself also appears in some practical problems such as crowdsourcing \cite{ma2018gradient,ma2020adversarial}.

\textbf{Related Works}\hspace{.5em}
This work is motivated by the recent success of small initialization in matrix factorization and matrix sensing. It was first conjectured in \cite{gunasekar2017implicit} that sufficiently small step sizes and initialization lead GD to converge to the minimum nuclear norm solution of a full-dimensional matrix sensing problem. The conjecture was proved in \cite{li2018algorithmic} for the fully overparameterized matrix sensing under the standard restricted isometry property (RIP). A recent study by \cite{stoger2021small} provided more general results by showing that the early iterations of GD with small initialization have spectral bias. Many other works such as \cite{arora2019implicit,razin2020implicit,li2021towards} have also studied how GD or gradient flow with small initialization implicitly forces the recovered matrix to be low-rank. However, the recovery guarantee for matrix completion has not been provided by any work.

For the matrix sensing where RIP holds, the loss function has global benign geometry in that it does not contain any spurious local minima or non-strict saddle points \cite{bhojanapalli2016global}. In the case of matrix completion, a similar result was obtained but with a regularizer that penalizes the matrices with large rows \cite{ge2016matrix}. Controlling the norm of each row (absolute value of each entry in the case of rank-1) is the biggest hurdle in the analysis of matrix completion. In the local convergence analysis of \cite{cong2020implicit}, it was proved that GD implicitly regularizes the largest $\ell_2$-norm of the rows of error matrices, showing that explicit regularization is unnecessary. In this paper, we also prove that such an implicit regularization is induced by GD when it starts from a point of small size. We show that the trajectory is close to the fully observed case in both $\ell_2$ and $\ell_\infty$ norms. Thus, the trajectory is confined to the region where it has benign geometry, and GD can converge without an explicit regularizer.

\textbf{Notations}\hspace{.5em}
We denote vectors with lowercase bold letters and matrices with uppercase bold letters. The components or entries of them are written without bold. We use $\vnorm{\cdot}$ and $\mnorm{\cdot}$ to denote $\ell_2$ and $\ell_\infty$-norm of vectors, respectively, and $\fnorm{\cdot}$ is used for Frobenius norm of matrices. For any norm $\norm{\cdot}$ and two vectors $\x,\y$, we let $\norm{\x \pm \y} = \min\{\norm{\x+\y}, \norm{\x-\y}\}$. Asymptotic dependencies with respect to the matrix dimension are denoted with the standard big $O$ notations, or with the symbols, $\lesssim$, $\asymp$ and $\gtrsim$.

\vfill

\section{Problem Formulation}
The matrix completion problem aims to reconstruct a low-rank matrix from partially observed entries. In this paper, we focus on the case where the ground truth matrix, denoted by $\Ms \in \real^{n \times n}$, is a rank-1 positive semidefinite matrix. Thus, the ground truth matrix is decomposed as $\Ms = \ls \us \us^\top$ with $\ls > 0$ and a unit vector $\us$. We define $\xs = \sqrt{\ls} \us$ so that $\Ms = \xs \xs^\top$. To follow the standard incoherence assumption, we let $\mnorm{\us} = \sqrt{\frac{\mu}{n}}$ and allow $\mu$ to be as large as $\poly(\log n)$. We consider a random sampling model that is also symmetric as $\Ms$. Each entry in the diagonal and the upper (or lower) triangular part of $\Ms$ is independently revealed with probability $0 < p \leq 1$. We consider the noisy case where Gaussian noise is added to each observation. Formally, we get as an observation the matrix $\Mo$ whose $(i,j)$th entry is $\frac{1}{p} \delta_{ij} (M^\star_{ij} + E_{ij})$, where $\left[\delta_{ij}\right]_{1 \leq i \leq j \leq n}$ are independent Bernoulli random variables with expectation $p$ and $\left[E_{ij}\right]_{1 \leq i \leq j \leq n}$ are independent Gaussian random variables with the distribution $\mathcal{N}(0, \sigma^2)$. They are both symmetric in the sense that $\delta_{ij} = \delta_{ji}$ and $E_{ij} = E_{ji}$ for all $1 \leq i \leq j \leq n$. We use $\E$ to denote the symmetric matrix whose entries are $E_{ij}$. We denote the set of observed entries as $\Omega := \{(i,j) \mid \delta_{ij} = 1\}$, and define an operator $\mathcal{P}_\Omega$ on matrices that sets the entries not contained in $\Omega$ to zero. (e.g. $\Mo = \frac{1}{p} \ProjO{\Ms + \E}$)

To recover the matrix $\Ms$, we find $\x \in \real^n$ that minimizes the nonconvex loss function $f(\x)$, which is the sum of the squared differences on the observed entries. It is explicitly written as $f(\x) := \frac{1}{4p} \sum_{(i,j) \in \Omega} (x_i x_j - x_i^\star x_j^\star - E_{ij})^2$. We apply vanilla GD to solve the optimization problem starting from a small randomly initialized vector $\xz$. Each entry of $\xz$ is sampled independently from the Gaussian distribution $\mathcal{N}\left(0, \frac{1}{n}\beta_0^2 \right)$, so that the squared norm of $\xz$ is expected to be $\beta_0^2$. The update rule of GD is written as
\begin{equation}
	\label{eq:xt update}
	\xtp = \xt - \eta \grad f\left(\xt \right) = \xt - \frac{\eta}{p} \mathcal{P}_\Omega \left( \xt\vec{x}^{(t)\top} \right) \xt + \eta \Mo \xt,
\end{equation}
where $\eta > 0$ is the step size.

We define $F$ as the loss function $f$ when all entries of $\Ms$ are observed without noise, i.e., $F(\x) := \frac{1}{4}\fnorm{\x\x^\top - \Ms}^2$. We also define $\txt$ as the trajectory of GD when it is applied to $F$ with the same initial point $\xz$, i.e., $\txt$ is the trajectory of the fully observed case. Specificially, it evolves with
\begin{equation}
	\label{eq:txt update}
	\txtp = \txt - \eta \grad F(\txt) = \txt - \eta \vnorm{\txt}^2\txt + \eta \Ms \txt
\end{equation}
from the same starting point $\txz = \xz$.

Lastly, we introduce the so-called \textit{leave-one-out} sequences. These were the main ingredient in controlling the $\ell_\infty$-norm of trajectory in \cite{cong2020implicit}. We use them for a similar purpose. For each $l \in [n]$, we define an operator $\projOl$ such that $\ProjOl{\X}$ is equal to $\X$ on the $l$th row and column, and equal to $\frac{1}{p}\ProjO{\X}$ otherwise. The $l$th leave-one-out sequence, $\xtl$, evolves with
\begin{equation}
	\label{eq:xtl update}
	\xtpl = \xtl - \eta \ProjOl{\xtl\xtl^\top} \xtl + \eta \Ml \xtl,
\end{equation}
for $\x^{(0,l)} = \xz$, where $\Ml = \ProjOl{\Ms} + \El$, and $\El$ is obtained by zeroing out the $l$th row and column of $\frac{1}{p} \ProjO{\E}$.

\section{Main Results}
\label{sec:main}
In this section, we present our main results. The first main result concerns the global convergence of GD with small random initialization.

\begin{theorem}
    \label{thm:main}
    Let us consider a rank-1 matrix completion problem that recovers the matrix $\Ms = \xs \xs^\top \in \real^{n \times n}$ such that $\vnorm{\xs} = \sqrt{\ls}$ and $\mnorm{\xs} = \sqrt{\frac{\mu}{n}} \vnorm{\xs}$, where $\mu = O(\poly (\log n))$.
    Let the initial point $\xz \in \real^n$ be sampled from the Gaussian distribution $\mathcal{N}(\vec{0}, \frac{1}{n} \beta_0^2 \vec{I})$ and $\xt$ be updated with \eqref{eq:xt update}. Suppose that a small step size with $\eta \ls < 0.1$ is used and the sample complexity satisfies $n^2 p \gtrsim \mu^5 n \log^{22} n$. Then, there exists $T^\star = (1 + o(1)) \frac{1}{\eta \ls} \log  \frac{\sqrt{\ls n}}{\beta_0}$ such that
	
	\begin{minipage}{0.45\textwidth}
		\vspace{-0.3cm}
		\begin{align}
			\label{eq:xt-xs l2}
			\vnorm{\xt \pm \xs} &\lesssim \frac{1}{\sqrt{\log n}} \vnorm{\xs}, \\
			\label{eq:xt-xs lm}
			\mnorm{\xt \pm \xs} &\lesssim \frac{1}{\sqrt{\log n}} \mnorm{\xs},
		\end{align}
	\end{minipage}
	\hfill
	\begin{minipage}{0.5\textwidth}
		\vspace{-0.3cm}
		\begin{align}
			\label{eq:xt-xtl thm1}
			\max_{1 \leq l \leq n} \vnorm{\xt - \xtl} &\lesssim \frac{1}{\sqrt{\log n}} \mnorm{\xs}, \\
			\label{eq:xtl-xs}
			\max_{1 \leq l \leq n} \abs{(\xtl - \xs)_l} &\lesssim \frac{1}{\sqrt{\log n}} \mnorm{\xs}
		\end{align}
	\end{minipage}

	hold at $t = T^\star$ with probability at least $1 - o(1 / \sqrt{\log n})$, if a sufficiently small initialization with
	\begin{equation}
		\label{eq:beta0}
		\sqrt{\ls} n^{-10} \lesssim \beta_0 \lesssim \sqrt{\ls} \sqrt[4]{\frac{np}{\mu^5 \log^{26} n}} \frac{1}{\sqrt[4]{n}}
	\end{equation}
	is used and the noise satisfies $\sigma \lesssim \frac{\ls \mu}{n} \sqrt{\log n}$.
\end{theorem}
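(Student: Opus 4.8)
The plan is to track two trajectories simultaneously: the actual GD iterates $\xt$ and the fully observed trajectory $\txt$ defined by \eqref{eq:txt update}, and to show they stay close for all $t \le T^\star$. The key structural fact about the fully observed dynamics is that $F$ has a particularly simple form: writing $\txt = \alpha_t \us + \vec{w}^{(t)}$ with $\vec{w}^{(t)} \perp \us$, the update \eqref{eq:txt update} decouples as $\alpha_{t+1} = \alpha_t(1 + \eta\ls - \eta\vnorm{\txt}^2)$ and $\vec{w}^{(t+1)} = \vec{w}^{(t)}(1 - \eta\vnorm{\txt}^2)$. Since $\vnorm{\txz} \approx \beta_0$ is tiny, the aligned component grows geometrically at rate $\approx 1+\eta\ls$ while the orthogonal component stays essentially frozen (it only shrinks), so after $T^\star = (1+o(1))\frac{1}{\eta\ls}\log\frac{\sqrt{\ls n}}{\beta_0}$ steps the aligned component reaches order $\sqrt{\ls}$ and the orthogonal part is negligible. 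This gives $\vnorm{\txt \pm \xs} \lesssim \frac{1}{\sqrt{\log n}}\vnorm{\xs}$ at $t = T^\star$, together with an $\ell_\infty$ version since $|\txt_i| \approx \alpha_t |\us_i| + |w^{(t)}_i|$ and the initialization is incoherent.

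The core of the argument is an induction on $t$ establishing, for all $t \le T^\star$, a package of bounds: (i) $\vnorm{\xt - \txt}$ is small relative to $\vnorm{\txt}$; (ii) $\mnorm{\xt}$ does not exceed $\mnorm{\xs}$ by more than a logarithmic factor (the implicit-regularization statement emphasized in the introduction); (iii) $\max_l \vnorm{\xt - \xtl}$ is controlled; and (iv) each leave-one-out iterate $\xtl$ is close, in its $l$th coordinate, to $\txt$. The leave-one-out sequences are the device for decoupling the randomness: because $\Ml$ and $\projOl$ do not depend on the $l$th row/column of $\ProjO{\cdot}$, the $l$th coordinate of $\xtl$ is statistically independent of the $l$th row of the sampling pattern, which lets one bound quantities like $|(\xt - \xs)_l|$ by triangle inequality through $\xtl$ and then use independence plus concentration. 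At each step one propagates the bounds using the update rules: the difference $\xtp - \txtp$ picks up a term $\eta(\Mo - \Ms)\xt$ plus $\eta(\ProjO{\xt\xt^\top}/p - \xt\xt^\top)\xt$; both are controlled by the concentration inequality $\norm{\frac{1}{p}\ProjO{Z} - Z}_{op} \lesssim \sqrt{n/p}\,\mnorm{Z}$-type bounds (for $Z = \Ms$ and for rank-1 $Z = \xt\xt^\top$, using (ii)) together with the spectral norm bound on $\El$. The incoherence of $\us$, the smallness of $\beta_0$ in \eqref{eq:beta0}, and the sample complexity $n^2 p \gtrsim \mu^5 n \log^{22} n$ are exactly what make the per-step error terms summable over the $O(\log n)$ iterations without blowing up the ratio.

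The hardest part, I expect, is the simultaneous control of the $\ell_\infty$/incoherence bound (ii) throughout the \emph{early} phase, when $\vnorm{\xt}$ is still of order $\beta_0$ and $\xt$ is nearly orthogonal to $\us$ — here the "signal" direction carries almost no mass, so the usual argument that the iterate inherits incoherence from $\xs$ does not directly apply, and one must instead argue that GD does not \emph{create} a spiky coordinate out of the random initialization before the aligned component takes over. This is where the leave-one-out machinery is essential and where the polylogarithmic slack in $\mu$ and the powers of $\log n$ in the sample complexity get consumed; one has to show the error terms $\eta(\Mo - \Ms)\xt$ do not concentrate mass on any single coordinate faster than the aligned component grows. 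A secondary obstacle is calibrating $T^\star$: one needs matching upper and lower bounds on the growth of $\alpha_t$ (and hence on when $\vnorm{\xt}$ crosses the threshold), and one must check that the fully observed trajectory and the actual trajectory reach this threshold at essentially the same time, within the $o(1)$ tolerance stated, so that the four conclusions hold simultaneously at a single time $t = T^\star$.
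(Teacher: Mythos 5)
Your skeleton matches the paper's (compare $\xt$ to the fully observed trajectory $\txt$, induct on $t$ with a package of $\ell_2$, $\ell_\infty$ and leave-one-out bounds, and read off the conclusions at $T^\star$ from the fully observed dynamics), but the proposal misses the one idea the whole argument hinges on: how to stop the difference $\xt - \txt$ from being \emph{multiplicatively} amplified during the long early phase. Writing $\H = \Mo - \Ms$, the recursion for the difference is not just "per-step error terms" of size $\eta\norm{\H}\vnorm{\xt}$ that you sum over $O(\log n)$ steps; the accumulated difference itself is hit by $\I + \eta\Mo$ at every step, so in the worst case (difference aligned with $\us$) it grows at rate $1+\eta\ls$. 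During Phase~I, which lasts long enough for $(1+\eta\ls)^t$ to reach roughly $\sqrt{\mu^4\log^{21}n/p}$ (polynomially large when $p \asymp \poly(\log n)/n$), $\vnorm{\txt}$ stays frozen at $\beta_0$, so a naive induction in which the error bound is multiplied by $1+\eta\ls$ each step and an additive $\mu\sqrt{\log n/(np)}\,\beta_0$ is appended does \emph{not} close: the ratio $\vnorm{\xt-\txt}/\vnorm{\txt}$ would blow up before the aligned component ever takes over. Your last paragraph flags the early phase as the hard part but frames it as an $\ell_\infty$/spikiness issue; the obstruction is already present at the level of $\ell_2$.

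The paper's resolution, absent from your proposal, has three ingredients. First, an intermediate sequence $\hxt$ (update \cref{eq:hxt update}, using $\vnorm{\txt}^2$ but the sampled matrix $\Mo$) is compared to $\txt$ by expanding both as matrix polynomials in $\I,\Ms,\H$ applied to the random $\xz$; the Gaussian initialization makes quantities like $\abs{\us^\top\H^s\xz}$ carry an extra $1/\sqrt{n}$, which exactly cancels the $(1+\eta\ls)^t \lesssim \sqrt{n}$ amplification attached to every occurrence of $\Ms$, so $\vnorm{\hxt-\txt}$ grows only \emph{linearly} in $t$ (\cref{lem:hxt-txt}). Second, $\vnorm{\xt-\hxt}$ is allowed to grow like $(1+\eta\ls)^t\beta_0^3$ (\cref{lem:xt-hxt}); here it is the cube of the small initialization, via the upper bound in \cref{eq:beta0}, that absorbs the exponential factor — this is the precise role of small initialization, which your proposal invokes only loosely. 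Third, for the leave-one-out differences the same amplification threat is neutralized by showing $\xt-\xtl$ stays nearly orthogonal to the top eigenvector $\ul$ of $\Ml$ (\cref{lem:ulxtl I}), so $\vnorm{\xt-\xtl}$ and $\abs{(\xtl-\txt)_l}$ also grow only polynomially in $t$ (\cref{lem:xt-txt I,lem:xtl I}). Only after Phase~I, when $\vnorm{\txt}$ itself starts growing at rate $1+\eta\ls$, does the simple "error grows no faster than the signal" bookkeeping you describe become valid (\cref{lem:II}), with a separate $O(\log\log n)$ argument (\cref{lem:II_3}) handling the saturation near $\sqrt{\ls}$ and the calibration of $T^\star$. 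Without the Phase~I mechanism your induction cannot be closed at the stated sample complexity and initialization size.
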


\cref{thm:main} proves that, starting from a small random initialization, the trajectory of GD eventually enters the local region of the global minimizers $\pm \xs$ in terms of both $\ell_2$ and $\ell_\infty$ norms. Combined with the result of \cite{cong2020implicit}, GD starts to converge linearly to either $\xs$ or $-\xs$ after $t = \Ts$, as stated in the corollary below.

\begin{corollary}
    \label{cor:gconv}
    Suppose that the conditions in \cref{thm:main} are satisfied, and let $\rho$ be a constant such that $1 - \frac{\eta}{10} \leq \rho < 1$. Then, with probability at least $1 - o(1 / \sqrt{\log n})$, we have
    \begin{align}
        \label{eq:xt-xs l2 g}
        \vnorm{\xt \pm \xs} &\lesssim \(\frac{1}{\sqrt{\log n}} \rho^{t-\Ts} + \frac{\sigma}{\ls} \sqrt{\frac{n}{p}}\) \vnorm{\xs}, \\
        \label{eq:xt-xs lm g}
        \mnorm{\xt \pm \xs} &\lesssim \(\frac{1}{\sqrt{\log n}} \rho^{t-\Ts} + \frac{\sigma}{\ls} \sqrt{\frac{n}{p}}\) \mnorm{\xs},
    \end{align}
    for all $\Ts \leq t \leq T = O(n^5)$.
\end{corollary}

The desired global convergence result is provided by \cref{cor:gconv}. Several remarks about \cref{thm:main} and \cref{cor:gconv} are in order.

\textbf{Matrix Recovery}\hspace{.5em}
Suppose $\xt$ converges to a global minimum $\ys$ of the function $f$, which is different from $\pm \xs$. In such a case, despite achieving global convergence, the reconstructed matrix $\ys \ys^\top$ deviates from the ground truth matrix $\Ms$. However, \cref{thm:main} establishes that $\xt$ converges exclusively to the correct global minima $\pm \xs$, so that the matrix $\Ms$ is recovered with high probability.

\textbf{Leave-one-out Sequence}\hspace{.5em}
To apply the local convergence result of \cite{cong2020implicit}, in addition to \cref{eq:xt-xs l2} and \cref{eq:xt-xs lm}, the existence of leave-one-out sequences $\{\xtl\}_{l \in [n]}$ satisfying \cref{eq:xt-xtl thm1} and \cref{eq:xtl-xs} is required. Leave-one-out sequences also play a critical role and appear naturally in the proof of \cref{thm:main}.


\textbf{Sample Complexity}\hspace{.5em}
The required sample complexity for \cref{thm:main} to hold is optimal up to a logarithmic factor compared to the statistical lower bound of $\Omega(n \log n)$. We have not done our best to optimize the $\log$ factors, and about half of them can be reduced with more delicate analysis. We will discuss this briefly in \cref{sec:phase II}.

\textbf{Convergence Time}\hspace{.5em}
Considering that $\beta_0^{-1}$ is at most polynomial in $n$ (due to the lower bound of \cref{eq:beta0}), only $O(\log n)$ iterations are required for GD to enter the local region. It takes $O(\log(\frac{1}{\epsilon}))$ more iterations to achieve $\epsilon$-accuracy in the local region, so the total iteration complexity is given by $O(\log n) + O (\log(\frac{1}{\epsilon}))$.

\textbf{Initialization Size}\hspace{.5em}
Although small initialization provides a good geometry to GD, a larger initialization is preferred because the convergence time, $\Ts$, is inversely proportional to $\beta_0$. When the sample complexity is optimal, i.e., $n^2 p \asymp n \poly(\log n)$, an upper bound on the initialization size given by \cref{thm:main} is $n^{-\frac{1}{4}}$, ignoring the log factors. However, as more samples are provided, we are allowed to use a larger initialization to reduce the convergence time. When the sample complexity satisfies $n^2 p \asymp n^{1 + a}$, the bound is $n^{-\frac{1}{4}(1 - a)}$ ignoring the log factors. The bound becomes nearly constant as $a$ approaches $1$, namely the fully observed case, and this is consistent with the previous result that small initialization is unnecessary for the fully observed case \cite{ye2021global}. We also note that the lower bound of \cref{eq:beta0} is necessary in the proof of \cref{thm:main}, since we derive probabilistic bounds for all iterations, and the lower bound limits the maximum number of iterations. However, we can further reduce the lower bound $n^{-10}$ to $n^{-c}$ for any constant $c > 10$ by tuning some constant factors during the proof.

\textbf{Noise Size}\hspace{.5em}
From the incoherence assumption, the maximum absolute value of entries of $\Ms$ is bounded by $\frac{\ls \mu}{n}$. The condition $\sigma \lesssim \frac{\ls \mu}{n} \sqrt{\log n}$ in \cref{thm:main} allows the standard deviation of the Gaussian noise to be much larger than the maximum entry. It also implies $\frac{\sigma}{\ls} \sqrt{\frac{n}{p}} \lesssim \mu \sqrt{\frac{\log n}{np}}$, so that the upper bounds in \cref{cor:gconv} are dominated by the first terms at $t=\Ts$ and they eventually converge to the second terms as $t$ increases.
 
\textbf{Estimation Error}\hspace{.5em}
The current estimation bounds \crefrange{eq:xt-xs l2}{eq:xtl-xs} are all proportional to $\frac{1}{\sqrt{\log n}}$ times the norms of $\xs$. However, if we do not allow the initialization size to grow with the sample complexity, we are able to obtain tighter bounds; if we use the fixed initialization size $n^{-\frac{1}{4}}$ regardless of the sample complexity, in \cref{thm:main}, the factor $\frac{1}{\sqrt{\log n}}$ is improved to $\frac{1}{\sqrt{np}} + \frac{\sigma}{\ls} \sqrt{\frac{n}{p}}$, and the upper bound on noise size is also improved to $\sigma \lesssim \frac{\ls \mu}{n} \sqrt{np}$ (not being precise on the factors of $\mu$ and $\log n$ here). Then, the estimation error in \cref{cor:gconv} is improved to $\frac{1}{\sqrt{np}} \rho^t + \frac{\sigma}{\ls} \sqrt{\frac{n}{p}}$ to match the result of \cite{cong2020implicit} which uses spectral initialization. Thus, we have a tradeoff between estimation error and initialization size.

The next main result concerns the trajectory of GD before it enters the local region. The theorem states that for all $t \leq \Ts$, $\xt$ stays close to the fully observed case $\txt$ in both $\ell_2$ and $\ell_\infty$-norm.
\begin{theorem}
    \label{thm:main 2}
	Suppose that the conditions of \cref{thm:main} hold, and $\Ts$ is defined as in \cref{thm:main}. Then, for all $t \leq \Ts$, we have
	
    \begin{minipage}[b]{0.5\textwidth}
        \vspace{-0.3cm}
        \begin{equation}
            \label{eq:xt-txt l2 main}
            \vnorm{\xt - \txt} \lesssim \frac{1}{\sqrt{\log n}} \vnorm{\txt},
        \end{equation}
    \end{minipage}
    \begin{minipage}[b]{0.5\textwidth}
        \vspace{-0.3cm}
        \begin{equation}
            \label{eq:xt-txt lm main}
            \mnorm{\xt - \txt} \lesssim \frac{1}{\sqrt{\log n}} \mnorm{\txt}
        \end{equation}
    \end{minipage}

    with probability at least $1 - o(1 / \sqrt{\log n})$.
\end{theorem}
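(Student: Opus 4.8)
\Paragraph{Proof plan.}
I would establish \cref{eq:xt-txt l2 main,eq:xt-txt lm main} as two of the invariants maintained by a single induction on $t\le\Ts$ — essentially the same induction that underlies \cref{thm:main}, but carried at every $t$ rather than only at $\Ts$. At step $t$ the induction hypothesis asserts the \emph{relative} closeness $\vnorm{\xt-\txt}\le\epsilon_t\vnorm{\txt}$ and $\mnorm{\xt-\txt}\le\epsilon_t'\mnorm{\txt}$, together with the two leave-one-out controls $\max_l\vnorm{\xt-\xtl}\le\epsilon_t''\mnorm{\txt}$ and $\max_l\abs{(\xtl)_l}\le\epsilon_t'''\mnorm{\txt}$, where the four sequences are small, (slowly) nondecreasing, equal to $0$ at $t=0$ since $\txz=\xz$, and reach the order $\frac1{\sqrt{\log n}}$ only at $t=\Ts$ (and only $\frac1{\sqrt{np}}$ if the $np$ factor is dropped from \cref{eq:beta0}); read off at an arbitrary $t\le\Ts$, the first two are precisely \cref{eq:xt-txt l2 main,eq:xt-txt lm main}. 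It is cleanest to run the induction in two regimes — a near-orthogonal Phase~I ($t\le T_1$, $T_1\asymp\frac1{\eta\ls}\log n$, in which $\abs{\us^\top\txt}$ is still dominated by $\vnorm{\txt_\perp}\asymp\beta_0$) and an alignment/saturation Phase~II ($T_1<t\le\Ts$) — glued at $T_1$, using throughout the elementary behaviour of \cref{eq:txt update} obtained by projecting onto $\us$ and its orthogonal complement: $\txt$ stays incoherent ($\mnorm{\txt}\lesssim\sqrt{\tfrac{\mu\log n}{n}}\,\vnorm{\txt}$, and $\mnorm{\txt}\asymp\sqrt{\tfrac\mu n}\,\vnorm{\txt}$ once aligned with $\us$), $\vnorm{\txt}$ is (up to constants) nondecreasing and reaches $\asymp\sqrt{\ls}$ at $\Ts$, and, as consequences of the hypothesis, $\vnorm{\xt}\asymp\vnorm{\txt}$ and $\mnorm{\xt}\lesssim\mnorm{\txt}$.

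Subtracting \cref{eq:txt update} from \cref{eq:xt update} and using $\Mo=\Ms+(\tfrac1p\projO-\mathcal I)(\Ms)+\tfrac1p\projO(\E)$ and $\tfrac1p\projO(\xt\xt^\top)=\xt\xt^\top+(\tfrac1p\projO-\mathcal I)(\xt\xt^\top)$, with $\mathcal I$ the identity operator on matrices, the error satisfies
\begin{align*}
	\xtp-\txtp &= \big(\I-\eta\vnorm{\xt}^2\I+\eta\Ms\big)(\xt-\txt)-\eta\big(\vnorm{\xt}^2-\vnorm{\txt}^2\big)\txt \\
	&\quad -\eta\big(\tfrac1p\projO-\mathcal I\big)\big(\xt\xt^\top-\Ms\big)\xt+\tfrac\eta p\projO(\E)\,\xt .
\end{align*}
The first two terms form the clean contribution (the difference of the two $\grad F$-steps): the operator $\I-\eta\vnorm{\xt}^2\I+\eta\Ms$ amplifies the $\us$-component by $1-\eta\vnorm{\xt}^2+\eta\ls$ and the transverse component by $1-\eta\vnorm{\xt}^2$, i.e. in each direction no faster than the corresponding component of $\txt$ grows, while $\eta(\vnorm{\xt}^2-\vnorm{\txt}^2)\txt$ is quadratic in $\xt-\txt$; so the clean part does not by itself inflate the relative errors, provided one tracks them separately along $\us$ and transversally. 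The noise term is controlled by the operator-norm bound $\norm{\tfrac1p\projO(\E)}\lesssim\sigma\sqrt{n/p}$ — giving, with $\sigma\lesssim\frac{\ls\mu}n\sqrt{\log n}$, a per-step contribution $\lesssim\eta\ls\mu\sqrt{\tfrac{\log n}{np}}\,\vnorm{\xt}$ — except along $\us$, where the sharper bilinear estimate $\abs{\us^\top\tfrac1p\projO(\E)\y}\lesssim\frac{\sigma}{\sqrt p}\sqrt{\log n}\,\vnorm{\y}$ (valid for fixed $\y$) is required. For the sampling term I split $\xt\xt^\top-\Ms=(\txt\txt^\top-\Ms)+\big(\xt(\xt-\txt)^\top+(\xt-\txt)\txt^\top\big)$: the first summand is independent of $\Omega$, so $\norm{(\tfrac1p\projO-\mathcal I)(\txt\txt^\top-\Ms)}\lesssim\sqrt{n/p}\,\mnorm{\txt\txt^\top-\Ms}$ (with its bilinear refinement along $\us$) applies and incoherence of $\txt,\xs$ bounds the right-hand side; the second, rank-$2$, summand produces entrywise a Hadamard product of $\xt$ or $\xt-\txt$ with a vector of the form $\big(\tfrac1p\projO(\1\1^\top)-\1\1^\top\big)(\cdot)$, whose $\ell_2$-norm is controlled by the mask concentration $\norm{\tfrac1p\projO(\1\1^\top)-\1\1^\top}\lesssim\sqrt{n/p}$. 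Each of these enters the recursion for $\epsilon_t$ with a tiny prefactor of order $\big(\vnorm{\txt}^2/\ls\big)\poly(\mu,\log n)/\sqrt{np}$, and summing the recursion geometrically over $t\le\Ts$ (using $\Ts\eta\ls\lesssim\log n$) keeps $\epsilon_t\lesssim\frac1{\sqrt{\log n}}$ under $n^2p\gtrsim\mu^5 n\log^{22}n$; this is \cref{eq:xt-txt l2 main}.

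For \cref{eq:xt-txt lm main} I work coordinate by coordinate, and here the leave-one-out sequences are indispensable, because $\tfrac1p\projO-\mathcal I$ is badly behaved in $\ell_\infty$ and $\xt$ is entangled with the $l$th row/column of $\Omega$: the term $\big[(\tfrac1p\projO-\mathcal I)(\xt\xt^\top-\Ms)\xt\big]_l=\sum_j(\tfrac{\delta_{lj}}p-1)(\xt\xt^\top-\Ms)_{lj}\,x_j^{(t)}$ is handled by replacing $\xt$ with $\xtl$ in the $l$th coordinate — at the cost of $\max_l\vnorm{\xt-\xtl}$ times a crude $1/p$, affordable since the leave-one-out distance is so small — which makes that coordinate a sum of independent mean-zero terms, to which Bernstein applies with variance proxy of order $\tfrac1p\,\mnorm{\txt\txt^\top-\Ms}^2\,\vnorm{\xtl}^2$ plus a self-interaction piece controlled by $\max_l\abs{(\xtl)_l}$; a union bound over $l$ closes the $\epsilon_t'$ recursion. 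The hypotheses on $\epsilon_t''$ and $\epsilon_t'''$ propagate by the same mechanism: by \cref{eq:xtl update}, $\xt-\xtl$ obeys a recursion with the \emph{same} clean operator and a perturbation supported on the $l$th row/column, small exactly because $\max_l\abs{(\xtl)_l}\lesssim\epsilon_t'''\mnorm{\txt}$; and $\abs{(\xtl)_l}$, a coordinate informationally decoupled from the $l$th sampling randomness, obeys yet another Bernstein-type bound. One closes $\epsilon_t,\epsilon_t',\epsilon_t'',\epsilon_t'''$ jointly, the deliberately non-tight powers of $\mu$ and $\log n$ being chosen so that each recursion reproduces a bound of the same shape at $t+1$.

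The step I expect to be the genuine obstacle is precisely this coupled $\ell_\infty$/leave-one-out induction carried \emph{uniformly} over the $\Theta(\log n)$ iterations of Phase~I: there $\abs{\us^\top\txt}$ is exponentially small (down to $\asymp\beta_0/\sqrt n$ at $t=0$), so the errors — especially along $\us$, which is amplified by the full factor $(1+\eta\ls)^{T_1}\asymp\sqrt n$ — are extremely sensitive, and one must show that the sampling and noise perturbations do not derail the near-orthogonal power iteration before it has amplified the $\us$-direction enough to enter Phase~II. This is where the bilinear (rather than operator-norm) concentration, the incoherence of $\txt$, and the smallness of the leave-one-out distances must all be combined carefully; the $1/p$ losses of the leave-one-out coordinate swaps, compounded over $\log n$ steps, are what force the (unoptimized) powers of $\mu$ and $\log n$ in the sample complexity, while a $\beta_0^2$-type residual from the transverse terms is what forces the upper limit on $\beta_0$ in \cref{eq:beta0} (and whose removal, by taking $\beta_0$ at the smaller end, is what sharpens the estimates to $\frac1{\sqrt{np}}$). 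Once Phase~I is cleared, the iterate is aligned with $\us$ and growing toward $\pm\xs$, the local geometry of $f$ is benign, the contraction factors turn favourable, and the same recursion carries $\epsilon_t,\dots,\epsilon_t'''$ to their claimed sizes at $\Ts$; gluing the two phases at $T_1$ and a union bound over the polynomially many concentration events complete the argument.
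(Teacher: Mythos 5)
There is a genuine gap, and it sits exactly where you predicted the ``genuine obstacle'' would be: the control of the $\us$-component of the error over the $\Theta(\log n)$ iterations of Phase~I. Your per-step induction bounds the injected perturbation either by operator norms or by the ``sharper bilinear estimate'' $\abs{\us^\top\tfrac1p\ProjO{\E}\y}$, $\abs{\us^\top(\tfrac1p\projO-\mathcal I)(\cdot)\y}$, which you yourself flag as ``valid for fixed $\y$''. But in your recursion $\y=\xt$, and $\xt$ is a function of the whole of $\Omega$ and $\E$; the leave-one-out sequences decouple only the $l$th row/column and are of no help for this \emph{global} dependence, so the bilinear gain of order $1/\sqrt n$ along $\us$ is not justified. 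If you retreat to operator-norm bounds, the per-step injection along $\us$ is of order $\eta\ls\mu\sqrt{\log n/(np)}\,\beta_0$, it is amplified at rate $(1+\eta\ls)$ while $\vnorm{\txt}$ stays pinned at $\beta_0$ throughout Phase~I, and after $(1+\eta\ls)^{T_1}\asymp\sqrt{n\,\mathrm{polylog}/(np)}$ of amplification the accumulated error exceeds $\vnorm{\txt}$ whenever $p$ is small --- this is precisely the failure mode the paper points out just before \cref{lem:xt-txt I}. The paper's escape is \emph{not} a per-step recursion: it introduces the proxy sequence $\hxt$ of \cref{eq:hxt update}, writes $\hxt-\txt$ as a matrix polynomial in $\H=\Mo-\Ms$ and $\Ms$ applied to the Gaussian $\xz$ (which \emph{is} independent of $\H$), and bounds every monomial containing $\Ms$ using $\abs{\us^\top\H^s\xz}\leq 2\sqrt{\log n}\,\tfrac{\beta_0}{\sqrt n}\norm{\H}^s$ for all $s\leq 30\log n$ (\cref{eq:xz H}); this yields the linear-in-$t$ bound of \cref{lem:hxt-txt}, while the genuinely exponentially growing piece $\vnorm{\xt-\hxt}\lesssim(1+\eta\ls)^t\beta_0^3$ of \cref{lem:xt-hxt} is rendered harmless by the smallness of $\beta_0$ in \cref{eq:beta0} and the definition of $T_1$. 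Your outline contains no substitute for this decoupling device (proxy sequence plus polynomial expansion, or any other mechanism such as decoupling/nets), and without it the Phase~I step of your induction does not close.

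Two secondary points. First, the same issue appears in your leave-one-out recursion: $\vnorm{\xt-\xtl}$ is also driven by the $(1+\eta\ls)$-amplified direction, and the paper can claim only linear growth in Phase~I because it separately proves that $\ul^\top(\xt-\xtl)$ stays a factor $1/\sqrt n$ below $\vnorm{\xt-\xtl}$ (\cref{lem:ulxtl I}); here a per-step argument \emph{does} work, because both $\ul$ and $\xtl$ are independent of the $l$th row/column, but this near-orthogonality invariant must be carried explicitly and your plan does not state it. Second, your description of Phase~II (``the contraction factors turn favourable'') is not how the bound is obtained: in \cref{lem:II} the error keeps \emph{expanding} at rate $(1+\eta\ls)$; \cref{eq:xt-txt l2 main,eq:xt-txt lm main} survive because $\vnorm{\txt}$ and $\mnorm{\txt}$ expand at the same rate during most of Phase~II, and the final saturation stage lasts only $O(\log\log n)$ iterations (\cref{lem:II_3}), costing a $\log^{6}n$ factor that is absorbed by the slack in the sample complexity. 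These two items are repairable; the Phase~I decoupling along $\us$ is the substantive missing idea.
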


\textbf{Trajectory of GD}\hspace{.5em}
The sequence $\txt$ is a linear combination of $\xz$ and $\us$ (see \cref{eq:txt decom} in the appendix), and it is easy to analyze how $\txt$ evolves. By showing that $\xt$ stays close to $\txt$ for all iterations, we not only show the convergence of GD with small initialization as in \cref{thm:main}, but also characterize the exact trajectory that GD follows by \cref{thm:main 2}.

\textbf{Implicit Regularization}\hspace{.5em}
One can prove that $\txt$ is incoherent up to some log factors over all iterations, and from \cref{eq:xt-txt l2 main} and \cref{eq:xt-txt lm main}, the incoherence of $\xt$ is bounded by that of $\txt$. Thus, \cref{thm:main 2} shows that the incoherence of $\xt$ is \textit{implicitly} controlled by GD without any regularizer. This is an improvement over the previous result on the global convergence of GD for matrix completion \cite{ge2016matrix}, where an explicit regularizer was used to control the $\ell_\infty$-norm of $\xt$, although no small initialization was used in that work.

\begin{wrapfigure}{r}{0.4\textwidth}
	\centering
	\raisebox{0pt}[\dimexpr\height-0.5\baselineskip\relax]{\includegraphics[width=0.4\textwidth]{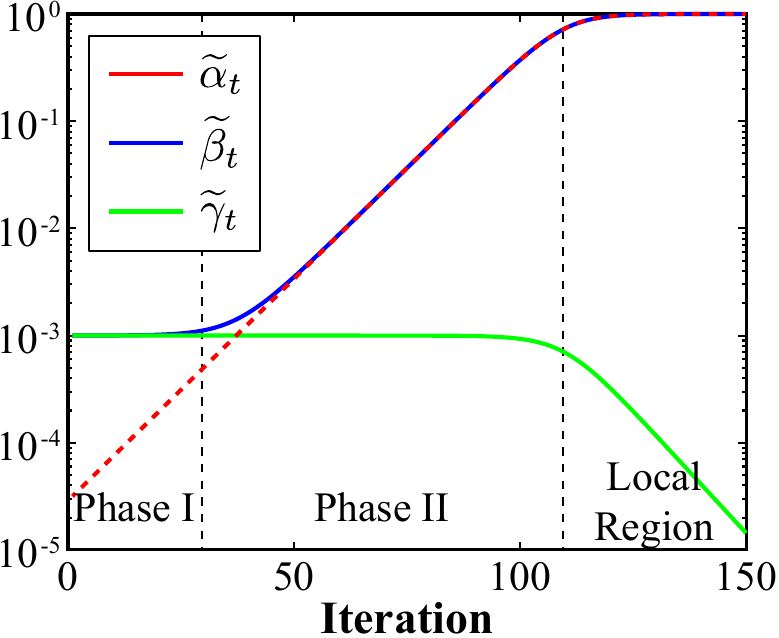}}
	\caption{\normalsize Evolution of the quantities $\ta_t$, $\tb_t$, and $\tg_t$ simulated with $\ta_0 = \frac{1}{\sqrt{n}} \beta_0$, $\beta_0 = \frac{1}{n}$, $\lambda^\star = 1$, and $n = 1000$.}%
	\vspace{-1cm}
	\label{fig:1}
\end{wrapfigure}

\section{Fully Observed Case and Proof Sketch}
\label{sec:4}
Before we explain the proof of \cref{thm:main,thm:main 2}, we describe the trajectory of the fully observed case. We characterize $\txt$ with three variables: $\ta_t = \abs{\us^\top \txt}$, $\tb_t = \vnorm*{\txt}$, and $\tg_t = \vnorm*{\txt_\perp}$, where $\txt_\perp = \txt - \us \us^\top \txt$. According to \cref{eq:txt update}, the three variables are updated with
\begin{gather*}
	\ta_{t+1} = (1 - \eta \tb_t^2 + \eta \ls) \ta_t; \quad
	\tg_{t+1} = (1 - \eta \tb_t^2 ) \tg_t; \\
	\tb_t^2 = \ta_t^2 + \tg_t^2.
\end{gather*}
At $t = 0$, due to random initialization, the initial vector is nearly orthogonal to $\us$, and we have $\ta_0 \approx \frac{1}{\sqrt{n}} \beta_0$ and $\tg_0 \approx \tb_0 = \beta_0$. Also, due to the small initialization, the term $\eta \tb_t^2$ is ignorable until $\tb_t$ becomes sufficiently large, so $\ta_t$ grows exponentially at the rate of $1 + \eta \ls$, while $\tg_t$ remains still. Thus, in the early iterations where $(1 + \eta \ls)^t$ is still much less than $\sqrt{n}$, $\tb_t$ is kept close to its initial value $\beta_0$ while the trajectory becomes more parallel to $\us$ as $\ta_t$ increases. When $(1 + \eta \ls)^t$ becomes much larger than $\sqrt{n}$, the trajectory becomes almost parallel to $\us$ in that $\tb_t \approx \ta_t \gg \tg_t$. Until $\tb_t$ (asymptotically) reaches $\frac{\sqrt{\ls}}{\sqrt{\log n}}$, we can consider $\ta_t$ as increasing at a rate of $(1 + \eta \ls)$, and it takes about $\frac{1}{\log(1 + \eta \ls)}\log\frac{\sqrt{\ls n}}{\beta_0}$ steps to reach this point. After that, we can no longer ignore the term $\eta \tb_t^2$, and $\ta_t$ increases at a slower rate as $\tb_t$ increases.  We can show that $\tb_t^2$ becomes sufficiently close to $\ls$ within $O(\log \log n)$ additional iterations, as stated in the following lemma.
\begin{lemma}
    \label{lem:II_3}
    Let $T_2^\prime$ be the largest $t$ such that $\tb_t^2 \leq \frac{\ls}{64 \log n}$. At $t = T_2^\prime + \frac{6 \log \log n}{\log(1 + \eta \ls)}$, we have $\tb_t^2 \geq \ls \(1 - \frac{1}{\log n}\)$.
\end{lemma}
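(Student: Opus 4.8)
The plan is to collapse the three coupled recursions for $\ta_t,\tg_t,\tb_t$ into a single scalar recursion for $b_t:=\tb_t^2$, recognize it as a logistic-type map, and show that the gap $\ls-b_t$ contracts geometrically once $b_t$ has been pushed away from $0$. Concretely, combining $\ta_{t+1}=(1+\eta\ls-\eta\tb_t^2)\ta_t$, $\tg_{t+1}=(1-\eta\tb_t^2)\tg_t$ and $\tb_t^2=\ta_t^2+\tg_t^2$ gives the exact identity
\[
 \tb_{t+1}^2 \;=\; \tb_t^2\big(1+\eta(\ls-\tb_t^2)\big)^2 \;-\; \eta\ls\big(2-2\eta\tb_t^2+\eta\ls\big)\,\tg_t^2 .
\]
Since $b\mapsto b(1+\eta(\ls-b))^2$ is increasing on $[0,\ls]$ when $\eta\ls<0.1$, the bound $\tb_t^2\le\ls$ propagates from $\tb_{T_2^\prime}^2\le\frac{\ls}{64\log n}$ to every $t\ge T_2^\prime$; hence $|1-\eta\tb_t^2|<1$, so $|\tg_t|$ is nonincreasing for $t\ge T_2^\prime$, and, using $\tg_{T_2^\prime}^2\lesssim\beta_0^2$ (from the preceding lemmas) together with $\beta_0^2\lesssim\ls/\log^{13}n$ from \cref{eq:beta0}, the last term $\epsilon_t:=\eta\ls(2-2\eta\tb_t^2+\eta\ls)\tg_t^2$ satisfies $0\le\epsilon_t\le 2.1\,\eta\ls\beta_0^2\ll\eta\ls\cdot\ls/\log n$. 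So it suffices to analyze $b_{t+1}=b_t(1+\eta(\ls-b_t))^2-\epsilon_t$ starting from $b_{T_2^\prime+1}>\frac{\ls}{64\log n}$ (which holds by maximality of $T_2^\prime$).

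I would then split the evolution into two geometric phases. While $b_t<\ls/2$, one has $1+\eta(\ls-b_t)>1+\eta\ls/2$, so $b_{t+1}\ge(1+\eta\ls)b_t-\epsilon_t$; unrolling and bounding the accumulated error by $\frac{2.1\eta\ls\beta_0^2}{\eta\ls}(1+\eta\ls)^k=2.1\beta_0^2(1+\eta\ls)^k$ gives $b_{T_2^\prime+1+k}\ge(1+\eta\ls)^k\big(\tfrac{\ls}{64\log n}-2.1\beta_0^2\big)\ge(1+\eta\ls)^k\tfrac{\ls}{128\log n}$, so $b_t$ exceeds $\ls/2$ within $k_1\le\frac{\log(64\log n)}{\log(1+\eta\ls)}+1$ steps. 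Once $b_t\ge\ls/2$, set $c_t:=\ls-b_t$; the same identity reads $c_{t+1}=c_t\big(1-\eta\tb_t^2(2+\eta c_t)\big)+\epsilon_t\le(1-\eta\ls)c_t+\epsilon_t$, so $c_t$ contracts by the factor $1-\eta\ls$ toward a floor $\le 2.1\beta_0^2\ll\ls/\log n$; hence $c_t\le\ls/\log n$ within $k_2\le\frac{\log\log n}{-\log(1-\eta\ls)}+1$ further steps, and the same inequality shows $c_t$ stays $\le\ls/\log n$ afterwards. Using $-\log(1-\eta\ls)\ge\eta\ls\ge\log(1+\eta\ls)$, for $n$ large one has $1+k_1+k_2\le\frac{6\log\log n}{\log(1+\eta\ls)}$, so at $t=T_2^\prime+\frac{6\log\log n}{\log(1+\eta\ls)}$ we get $\tb_t^2=\ls-c_t\ge\ls\big(1-\tfrac{1}{\log n}\big)$, as claimed. (An equivalent one-shot finish tracks $r_t:=\tb_t^2/(\ls-\tb_t^2)$, for which the exact identity yields $r_{t+1}\ge(1+\eta\ls)r_t$ throughout the relevant range; from $r_{T_2^\prime+1}>\frac{1}{64\log n}$ and the target $r_t\ge\log n-1$ this reproduces the same count.)

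The main obstacle is keeping the perturbation $\epsilon_t$ coming from $\tg_t\ne0$ negligible all the way to the end: in the second phase $c_t$ shrinks to size $\ls/\log n$, and one must verify that the additive error $\epsilon_t$ is genuinely lower order there — and also that $b_t$ never drops back below $\ls/(128\log n)$ en route — both of which rely on the upper bound on $\beta_0$ in \cref{eq:beta0} through $\tg_t^2\le\tg_{T_2^\prime}^2\lesssim\beta_0^2$. The remaining work is routine bookkeeping: chaining the two geometric phases with honest constants and converting the per-step rates $1+\eta\ls$ and $1-\eta\ls$ into the common rate $\log(1+\eta\ls)$ via $-\log(1-\eta\ls)\ge\eta\ls\ge\log(1+\eta\ls)$, which is what the loose constant $6$ (and the spare log-factors in \cref{eq:beta0}) are there to absorb.
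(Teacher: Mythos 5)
Your proposal is correct: the identity $\tb_{t+1}^2=\tb_t^2(1+\eta(\ls-\tb_t^2))^2-\eta\ls(2-2\eta\tb_t^2+\eta\ls)\tg_t^2$ checks out, the monotonicity of $b\mapsto b(1+\eta(\ls-b))^2$ on $[0,\ls]$ under $\eta\ls<0.1$ does propagate $\tb_t^2\le\ls$, and the two-phase count fits comfortably inside the $\frac{6\log\log n}{\log(1+\eta\ls)}$ budget for large $n$ — but your route is organized differently from the paper's. The paper never writes a closed recursion for $\tb_t^2$: it first replaces $\tb_t$ by the signal component $\ta_t$ via $\abs{\ta_t-\tb_t}\le 2\vnorm{\xz}\le\frac{\sqrt{\ls}}{3\log^2 n}$ (the same smallness of the orthogonal/initialization part that you package into the additive error $\epsilon_t$), and then runs a level-counting argument: with $N_i$ the last time $\ls-\ta_t^2\ge \ls e^{-i}$, the per-step growth factor on level $i$ is at least $1+0.99\,\eta\ls e^{-i}$, so each of the roughly $\log\log n$ levels costs $O(1/(\eta\ls))$ iterations, and summing the levels gives the stated bound. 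You instead keep $\tb_t^2$ itself, control the perturbation through $\tg_t\le\vnorm{\xz}\lesssim\beta_0$ and \cref{eq:beta0}, and split into a growth phase ($b_t$ multiplies by at least $1+\eta\ls$ until $\ls/2$) followed by geometric contraction of the gap $c_t=\ls-b_t$ at rate $1-\eta\ls$ down to a floor of order $\beta_0^2$. Your version buys an explicit linear-convergence statement for the gap and, as a byproduct, persistence of $\tb_t^2\ge\ls(1-\frac{1}{\log n})$ after the hitting time (the paper gets this only by asserting that $\tb_t$ is increasing); the paper's version avoids the exact-recursion and perturbation bookkeeping by reducing everything to the monotone sequence $\ta_t$, at the cost of the $e$-adic level count. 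Two trivial tidy-ups: the constant in $\epsilon_t\le 2.1\,\eta\ls\beta_0^2$ should also absorb $\vnorm{\xz}\le\frac{3}{2}\beta_0$ (say $5\eta\ls\beta_0^2$), which changes nothing downstream; and $b_t>\frac{\ls}{64\log n}$ for all $t>T_2^\prime$ is automatic from the maximality in the definition of $T_2^\prime$, so your worry about dipping below $\frac{\ls}{128\log n}$ en route is moot.
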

\vspace{-\parskip}
Finally, local convergence to $\us$ occurs in that $\ta_t$ approaches $\ls$ and $\tg_t$ decreases exponentially with the rate $(1 - \eta \ls)$. The actual behavior of quantities $\ta_t$, $\tb_t$, $\tg_t$ are plotted in \cref{fig:1}.

We define the iterates before $(1 + \eta \ls)^t$ reaches $\frac{1}{\sqrt{np}} \sqrt{n}$, within some logarithmic factors, as Phase~I, and the next iterates before $\tb_t^2$ reaches $\ls\(1 - \frac{1}{\log n}\)$ as Phase~II. Different techniques are used for each phase to prove that $\xt$ stays close to $\txt$. At the end of Phase~I, $\ta_t$ is increased to $\frac{1}{\sqrt{np}} \beta_0$ from its initial scale $\frac{1}{\sqrt{n}} \beta_0$, but it is still not dominant over $\beta_0$. Therefore, the magnitudes of both $\xt$ and $\txt$ are kept close to $\beta_0$ throughout Phase~I, and we take advantage of the small random initialization to show that the deviation of $\xt$ from $\txt$ does not increase much, and is kept at $\sqrt{\frac{1}{np}}$ times the norms of $\xt$. In Phase~II, we show that $\xt - \txt$ expands at a rate of at most $(1 + \eta \ls)$. Since the norms of $\xt$ also grows at a rate of $(1 + \eta \ls)$ during most of Phase~II, the norms of $\xt - \txt$ remain negligible compared to those of $\xt$. The next two sections give the main lemmas of Phase I and II, respectively, which are used to prove \cref{thm:main,thm:main 2}. For a visual representation of the results in the following two sections, please refer to \cref{fig:fig4}.

\section{Phase~I: Finding Direction}
We provide detailed results and proof ideas for Phase~I. Our main goal is to analyze the deviation of $\xt$ from $\txt$. First, if we look at the update equations \cref{eq:xt update} and \cref{eq:txt update}, the second term is proportional to the third power of $\vnorm{\xt}$, while the other terms depend linearly on $\vnorm{\xt}$. Thus, the second term is almost negligible due to the small initialization. Without the second terms, the difference between $\xt$ and $\txt$ at $t = 1$ is $\eta (\Mo - \Ms) \xz$. 
From concentration inequalities, one can see that the $\ell_2$ and $\ell_\infty$ norms of $\eta (\Mo - \Ms) \xz$ are about $\frac{1}{\sqrt{np}}$ times smaller than those of $\tx^{(1)}$.


Due to the third terms of \cref{eq:xt update} and \cref{eq:txt update}, the norms of $\xt - \txt$ can grow exponentially at a rate of $(1 + \eta \ls)$ in the worst case where $\xt - \txt$ is parallel to $\us$. In such a case, the norms of $\xt - \txt$ would be larger than those of $\txt$ at the end of Phase~I, since those of $\txt$ remain still in Phase~I. However, we overcome this problem by proving that the bounds grow at most \textit{polynomially} with respect to $t$, and since $t$ is at most $O(\log n)$, the bounds remain $\frac{1}{\sqrt{np}}$ times smaller than the norms of $\txt$ up to logarithmic factors throughout Phase~I.
\begin{lemma}
	\label{lem:xt-txt I}
	Let $T_1$ be the largest $t$ such that $(1 + \eta \ls)^t \leq \sqrt{\frac{\mu^4 \log^{21} n}{np}} \sqrt{n}$. Under the conditions of \cref{thm:main}, with probability at least $1 - o(1 / \sqrt{\log n})$, for all $t \leq T_1$, we have
	
	\begin{minipage}[b]{0.5\textwidth}
		\vspace{-0.3cm}
		\begin{equation}
			\label{eq:xt-txt l2 I}
			\vnorm{\xt - \txt} \lesssim \mu \sqrt{\frac{\log n}{np}} \beta_0 t,
		\end{equation}
	\end{minipage}
	\begin{minipage}[b]{0.5\textwidth}
		\vspace{-0.3cm}
		\begin{equation}
			\label{eq:xt-txt lm I}
			\mnorm{\xt - \txt} \lesssim \sqrt{\frac{\mu^3 \log^2 n}{np}} \frac{\beta_0}{\sqrt{n}} t^2.
		\end{equation}
	\end{minipage}
\end{lemma}
$T_1$ is defined to be the end of Phase~I. \cref{lem:xt-txt I} proves \cref{thm:main 2} for Phase~I.

\Paragraph{Proof of \cref{eq:xt-txt l2 I}}
We will first demonstrate how to obtain the $\ell_2$-norm bound of \cref{lem:xt-txt I}. Let us define a sequence $\hxt$ that is updated as
\begin{equation}
	\label{eq:hxt update}
	\hxtp = \hxt - \eta \vnorm{\txt}^2 \hxt + \eta \Mo \hxt; \quad \hx^{(0)} = \xz.
\end{equation}
Note that the norm of $\txt$ is used in the second term of \cref{eq:hxt update}. The update equation of $\hxt$ differs from $\txt$ in the third term  and from $\xt$ in the second term. We use $\hxt$ as a proxy for bounding $\vnorm{\xt - \txt}$. We first show that $\vnorm{\hxt - \txt}$ grows at most linearly with respect to $t$.
\begin{lemma}
	\label{lem:hxt-txt}
	With probability at least $1 - o(1 / \sqrt{\log n})$, for all $t \leq T_1$, we have
	\begin{equation}
		\label{eq:lem:hxt-txt}
		\vnorm{\hxt - \txt} \lesssim \mu \sqrt{\frac{\log n}{np}} \beta_0 t.
	\end{equation}
\end{lemma}
The proof of this lemma is based on the fact that $\hxt$ is a product of $\xz$ and a matrix polynomial of $\I$ and $\Mo$, while $\txt$ is a product between $\xz$ and a matrix polynomial of $\I$ and $\Ms$. We prove the lemma by comparing the two matrix polynomials. We remark that \cref{lem:hxt-txt} holds regardless of the small initialization, but it relies on the randomness of $\xz$.

Since $\xt$ and $\hxt$ differ only in the second term, their initial difference is proportional to $\beta_0^3$. More precisely, it is $\frac{1}{\sqrt{np}} \beta_0^3$. We show that the difference grows exponentially at a rate of $(1 + \eta \ls)$.

\begin{lemma}
	\label{lem:xt-hxt}
	If \cref{eq:xt-txt lm I} holds for all $t \leq T_1$, we have
	\begin{equation}
		\label{eq:lem:xt-hxt}
		\vnorm{\xt - \hxt} \lesssim \frac{1}{\ls} \sqrt{\frac{\mu^3 \log^3 n}{np}} (1 + \eta \ls)^t \beta_0^3
	\end{equation}
	for all $t \leq T_1$ with probability at least $1 - o(1 / \sqrt{\log n})$.
\end{lemma}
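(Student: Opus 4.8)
The plan is to track the difference $\xt-\hxt$, which vanishes at $t=0$ since both sequences start from the same random point $\xz$, and to show it grows at rate at most $1+\eta\ls$. Subtracting \eqref{eq:hxt update} from \eqref{eq:xt update} and rewriting $\eta\vnorm{\txt}^2\hxt=\eta\vnorm{\txt}^2\xt-\eta\vnorm{\txt}^2(\xt-\hxt)$ gives the affine recursion
\begin{equation*}
	\xtp-\hxtp=\bigl(\I+\eta\Mo-\eta\vnorm{\txt}^2\I\bigr)\bigl(\xt-\hxt\bigr)+\eta\,\vec{r}^{(t)},\qquad \vec{r}^{(t)}:=\vnorm{\txt}^2\xt-\tfrac1p\ProjO{\xt\xt^\top}\xt .
\end{equation*}
I would bound the homogeneous multiplier spectrally: in Phase~I the fully observed analysis keeps $\vnorm{\txt}$ at scale $\beta_0$, so $0\le\eta\vnorm{\txt}^2\le 0.1$, and the standard matrix-completion concentration estimate gives $\vnorm{\Mo-\Ms}=o(\ls/\log n)$ under the hypotheses of \cref{thm:main}; hence every eigenvalue of $\I+\eta\Mo-\eta\vnorm{\txt}^2\I$ lies in $\bigl(0,\,1+\eta\ls+o(\eta\ls/\log n)\bigr)$ and its operator norm is at most $(1+\eta\ls)\bigl(1+o(1/\log n)\bigr)$. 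Because $T_1=O(\log n)$, iterating from $\xt-\hxt=\vec{0}$ at $t=0$ gives
\begin{equation*}
	\vnorm{\xt-\hxt}\le(1+o(1))\sum_{s=0}^{t-1}(1+\eta\ls)^{t-1-s}\,\eta\,\vnorm{\vec{r}^{(s)}}.
\end{equation*}

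Everything then reduces to bounding the forcing term $\vnorm{\vec{r}^{(s)}}$. Componentwise $[\vec{r}^{(s)}]_i=(\xS)_i\bigl(\vnorm{\txS}^2-\tfrac1p\sum_j\delta_{ij}(\xS)_j^2\bigr)$, so $\vnorm{\vec{r}^{(s)}}\le\vnorm{\xS}\max_i\bigl|\vnorm{\txS}^2-\tfrac1p\sum_j\delta_{ij}(\xS)_j^2\bigr|$ with $\vnorm{\xS}\lesssim\beta_0$ in Phase~I, and I would split the scalar factor into three pieces. (i) $\vnorm{\txS}^2-\vnorm{\xS}^2$, which is at most $(\vnorm{\xS}+\vnorm{\txS})\,\vnorm{\xS-\txS}\lesssim\beta_0\,\vnorm{\xS-\txS}$, and $\vnorm{\xS-\txS}\le\vnorm{\xS-\hxS}+\vnorm{\hxS-\txS}$ is handled by the running induction on \eqref{eq:lem:xt-hxt} together with \cref{lem:hxt-txt}; this piece is the one that carries the extra factor $s$. (ii) $\sum_j(1-\tfrac1p\delta_{ij})(\vec{x}^{(s,i)})_j^2$, the sampled fluctuation with $\xS$ replaced by the leave-one-out iterate $\vec{x}^{(s,i)}$ of \eqref{eq:xtl update}, which is independent of the $i$th row and column of $\Omega$; a Bernstein inequality then applies with variance proxy $\tfrac1p\sum_j(\vec{x}^{(s,i)})_j^4\le\tfrac1p\mnorm{\vec{x}^{(s,i)}}^2\vnorm{\vec{x}^{(s,i)}}^2$ and envelope $\tfrac1p\mnorm{\vec{x}^{(s,i)}}^2$, where the assumed bound \eqref{eq:xt-txt lm I} and the incoherence of $\txt$ give $\mnorm{\vec{x}^{(s,i)}}\lesssim\sqrt{\mu\poly(\log n)/n}\,\beta_0$; a union bound over $i\in[n]$ then yields $\lesssim\sqrt{\mu^3\log^3 n/(np)}\,\beta_0^2$. (iii) the replacement error $\sum_j(1-\tfrac1p\delta_{ij})\bigl((\xS)_j^2-(\vec{x}^{(s,i)})_j^2\bigr)$, bounded by Cauchy--Schwarz in terms of $\max_l\vnorm{\xS-\xSl}$, which is small throughout Phase~I by the leave-one-out closeness established alongside \eqref{eq:xt-txt lm I}, the factor $1/p$ being absorbed because this deviation sits at the $\ell_\infty$ scale $\sqrt{1/n}\,\beta_0$. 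Altogether $\vnorm{\vec{r}^{(s)}}\lesssim\sqrt{\mu^3\log^3 n/(np)}\,\beta_0^3\,(1+s)$.

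Feeding this back and bounding $1+s\le t$ inside the geometric sum, $\sum_{s=0}^{t-1}(1+\eta\ls)^{t-1-s}\,\eta\,(1+s)\le\tfrac{t}{\ls}(1+\eta\ls)^t$, so $\vnorm{\xt-\hxt}\lesssim\tfrac1{\ls}\sqrt{\mu^3\log^3 n/(np)}\,(1+\eta\ls)^t\,\beta_0^3\,t$, which is \eqref{eq:lem:xt-hxt}; the probability $1-o(1/\sqrt{\log n})$ comes from the union bound over the $n$ entries and the $O(\log n)$ iterations in the Bernstein step, together with the high-probability bound on $\vnorm{\Mo-\Ms}$. I expect the crux to be pieces (ii)--(iii): decoupling the quadratic form $\tfrac1p\sum_j\delta_{ij}(\xS)_j^2$ from the sampling pattern $\Omega$ on which $\xS$ itself depends --- this is exactly what forces the leave-one-out sequences into the argument and what \eqref{eq:xt-txt lm I} is assumed for --- and making all the polylog factors line up uniformly over $i$ and $s$.
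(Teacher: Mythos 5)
Your plan is correct in substance and follows the paper's skeleton for this lemma: the same affine recursion obtained by subtracting \cref{eq:hxt update} from \cref{eq:xt update}, with homogeneous multiplier $\I-\eta\vnorm{\txt}^2\I+\eta\Mo$ whose norm is $1+\eta\ls+O(1/\log^2 n)$, the same forcing term (the paper writes it as $\eta(\vnorm{\xt}^2\I_{\xt}-\vnorm{\txt}^2\I)\xt$, i.e.\ $\max_i\abs{\norm{\xt}_{2,i}^2-\vnorm{\txt}^2}$ times $\vnorm{\xt}\lesssim\beta_0$), and the same geometric summation at the end. Where you genuinely diverge is the crux concentration step. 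The paper's proof contains no leave-one-out argument at all: it swaps $\xt$ for $\txt$ inside the sampled sum at the cost of $n\mnorm{\xt-\txt}\mnorm{\xt+\txt}$ --- which is exactly what the assumed bound \cref{eq:xt-txt lm I} controls --- and then applies Bernstein to $\txt$ via \cref{lem:x2i-x2}, exploiting the fact that $\txt$ is a function of $\xz$ and $\Ms$ only and hence independent of the sampling. That is why \cref{eq:xt-txt lm I} is the only hypothesis the lemma needs. You instead decouple by swapping $\xt$ for the leave-one-out iterate $\xtl$ and running Bernstein conditionally on it, which is a valid standard device and your three-piece bookkeeping (including absorbing the $(1+\eta\ls)^s\beta_0^2$ factor from the running induction via the Phase~I definition and \cref{eq:beta0}, and the final estimate $\sum_{s}(1+\eta\ls)^{t-1-s}\eta(1+s)\le(t/\ls)(1+\eta\ls)^t$) does close to the stated bound; but it is heavier than necessary: it silently imports the leave-one-out closeness \cref{eq:xt-xtl I} and the incoherence of $\xtl$ as additional hypotheses beyond the one stated, plus a union bound over $l\in[n]$ and $s\le T_1$. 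This is legitimate inside the paper's joint induction (\cref{lem:I1} proves all these bounds simultaneously), so it is a bookkeeping mismatch rather than a gap, but the paper's route buys a self-contained one-step proof under the stated hypothesis, whereas your route would be the right tool only if no sampling-independent proxy such as $\txt$ were available.
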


The upper bound in \cref{eq:lem:xt-hxt} becomes smaller than that of \cref{eq:lem:hxt-txt} if $(1 + \eta \ls)^t \beta_0^2 \leq \ls \sqrt{\frac{1}{\mu \log^2 n}}$. One can check that this condition is satisfied from the definition of $T_1$ given in \cref{lem:xt-txt I} and the bound on the initialization size \cref{eq:beta0}. Thus, \cref{eq:xt-txt l2 I} is proved by \cref{eq:lem:hxt-txt} and \cref{eq:lem:xt-hxt}.

\begin{figure}[t]
    \centering
    \includegraphics[width=\textwidth]{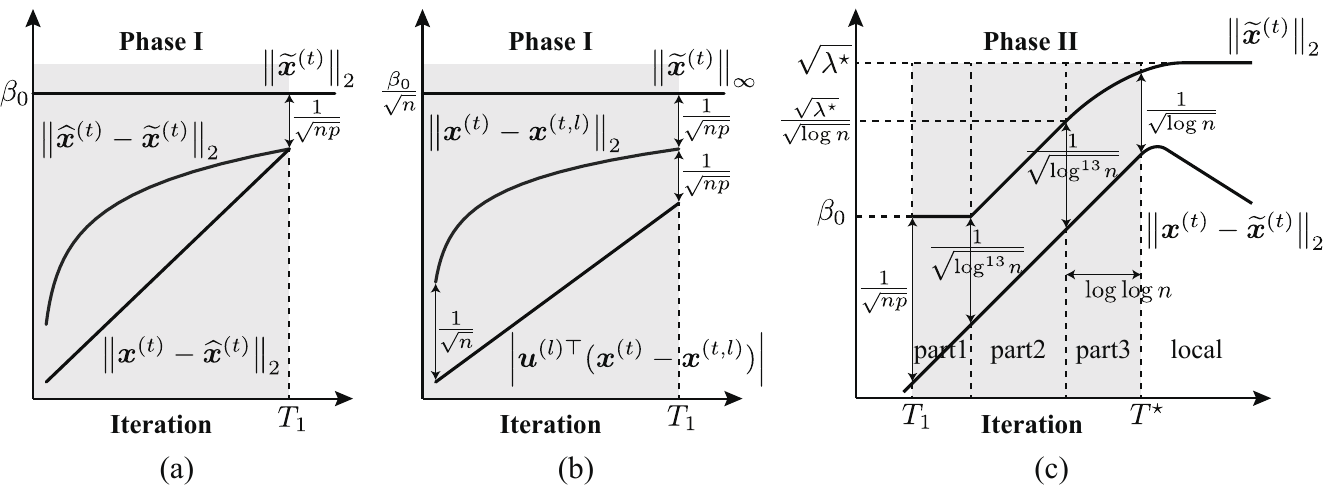}
    \caption{An illustrative description of trajectory of various quantities compared to the norms of $\bm{x}^{(t)}$ on logarithmic scales. Arrows between lines represent the ratio between them. The quantities depicted are not precise, and only the key factors are shown for simplicity. (a) In Phase I, $\|\hat{\bm{x}}^{(t)} - \tilde{\bm{x}}^{(t)}\|_2$ increases linearly and $\|\bm{x}^{(t)} - \hat{\bm{x}}^{(t)}\|_2$ increases exponentially with the rate of $(1 + \eta \lambda^\star)$. They have the same scale at the end of Phase I. (b) In Phase I, even if the $\u^\star$ component of $\bm{x}^{(t)} - \bm{x}^{(t,l)}$ grows exponentially, it remains almost orthogonal to $\u^\star$ throughout the phase. (c) Phase II is divided into three parts according to the growth speed of $\bm{x}^{(t)}$ and $\bm{x}^{(t)} - \tilde{\bm{x}}^{(t)}$. The ratio between them at the start and the end of each part is described, and it is at most $\frac{1}{\sqrt{\log n}}$ in Phase II.}
    \label{fig:fig4}
\end{figure}

\Paragraph{Proof of \cref{eq:xt-txt lm I}}
We control the $l$th component of $\xt - \txt$ using the $l$th leave-one-out sequence. Leave-one-out sequences have two important properties. First, because they are defined without only one row/column, they are extremely close to $\xt$, and at $t = 1$, $\vnorm{\xt - \xtl}$ is about $\frac{1}{\sqrt{np}} \frac{\beta_0}{\sqrt{n}}$. Second, the $l$th component of the $l$th leave-one-out sequence evolves similarly to that of $\txt$ and is easy to analyze. With these two properties, we bound the $l$th component of $\xt - \txt$ as
\begin{equation}
	\label{eq:xt-txt l}
	\abs{\(\xt - \txt\)_l} \leq \vnorm{\xt - \xtl} + \abs{\(\xtl - \txt\)_l}.
\end{equation}
We claim that both $\vnorm{\xt - \xtl}$ and $\abs{\(\xtl - \txt\)_l}$ increase at most polynomially with respect to $t$ from the initial scale $\frac{1}{\sqrt{np}} \frac{\beta_0}{\sqrt{n}}$.
\begin{lemma}
	\label{lem:xtl I}
	With probability at least $1 - o(1 / \sqrt{\log n})$, for all $t \leq T_1$, we have
	
	\begin{minipage}{0.48\textwidth}
		\vspace{-0.3cm}
		\begin{equation}
			\label{eq:xt-xtl I}
			\vnorm{\xt - \xtl} \lesssim \mu \sqrt{\frac{\log^2 n}{np}} \frac{\beta_0}{\sqrt{n}}t, \\
		\end{equation}
	\end{minipage}
	\hfill
	\begin{minipage}{0.48\textwidth}
		\vspace{-0.3cm}
		\begin{equation}
			\label{eq:xtl-txt I}
			\abs{\(\xtl - \txt\)_l} \lesssim \sqrt{\frac{\mu^3 \log^2 n}{np}} \frac{\beta_0}{\sqrt{n}} t^2.
		\end{equation}
	\end{minipage}
\end{lemma}
As explained for $\xt - \txt$, due to the third terms of \cref{eq:xt update} and \cref{eq:xtl update}, $\xt - \xtl$ can also grow exponentially at the rate of $(1 + \eta \ls)$ in the worst case where $\xt - \xtl$ is parallel to $\us$. This contradicts our result \cref{eq:xt-xtl I} that $\vnorm{\xt - \xtl}$ grows only linearly. We show that $\xt-\xtl$ remains nearly orthogonal to $\us$ in Phase~I, and thus the worst case does not occur.
\begin{lemma}
	\label{lem:ulxtl I}
	For all $l \in [n]$ and $t \leq T_1$, we have
	\begin{equation*}
		\abs{\ul^\top (\xt - \xtl)} \lesssim \sqrt{\frac{\mu^3 \log^2 n}{np}} (1 + \eta \ls)^t \frac{\beta_0}{n}
	\end{equation*}
	with probability at least $1 - o(1 / \sqrt{\log n})$, where $\ul$ is the first eigenvector of $\Ml$.
\end{lemma}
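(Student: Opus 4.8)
The plan is to derive a scalar one-step recursion for $p_t:=\ul^\top(\xt-\xtl)$ and then unroll it. Writing $\Delta^{(t)}:=\xt-\xtl$, subtracting \eqref{eq:xtl update} from \eqref{eq:xt update} gives
\[
	\Delta^{(t+1)}=\Delta^{(t)}-\eta\Bigl[\tfrac1p\ProjO{\xt\xt^\top}\xt-\ProjOl{\xtl\xtl^\top}\xtl\Bigr]+\eta\bigl(\Mo\xt-\Ml\xtl\bigr).
\]
The reason for projecting onto the \emph{leading eigenvector of $\Ml$} is that $\ul^\top\Ml=\lambda^{(l)}\ul^\top$, where $\lambda^{(l)}$ denotes the leading eigenvalue of $\Ml$, so the bilinear $\Ml$-term collapses: $\ul^\top(\Mo\xt-\Ml\xtl)=\lambda^{(l)}p_t+\ul^\top(\Mo-\Ml)\xt$. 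For the cubic term I would use the identity $\vec{x}\vec{x}^\top\vec{x}=\vnorm{\vec{x}}^2\vec{x}$ to split off the part proportional to $\Delta^{(t)}$:
\[
	\tfrac1p\ProjO{\xt\xt^\top}\xt-\ProjOl{\xtl\xtl^\top}\xtl
	=\vnorm{\xt}^2\Delta^{(t)}+\bigl(\vnorm{\xt}^2-\vnorm{\xtl}^2\bigr)\xtl+(\mathrm{fluct.})+\bigl(\tfrac1p\ProjO{\xtl\xtl^\top}-\ProjOl{\xtl\xtl^\top}\bigr)\xtl,
\]
where $(\mathrm{fluct.})$ collects the zero-mean sampling fluctuations of $\tfrac1p\ProjO{\cdot}$ and the last summand is a cross matrix supported on the $l$-th row and column acting on $\xtl$. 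Projecting onto $\ul$, the terms $\lambda^{(l)}p_t$ and $-\vnorm{\xt}^2p_t$ are the ``rate'' part, and everything else is a forcing term $h_t$:
\[
	p_{t+1}=\bigl(1+\eta\lambda^{(l)}-\eta\vnorm{\xt}^2\bigr)p_t+\eta h_t.
\]
Since $\Ml$ is close to $\Ms$ (so $\lambda^{(l)}=\ls(1+o(1/\log n))$ by Weyl's inequality) and $\vnorm{\xt}^2\lesssim\beta_0^2\poly(\log n)=o(\ls/\log n)$ throughout Phase~I (by the Phase~I size bounds that accompany \cref{lem:xt-txt I}, together with \eqref{eq:beta0}), the per-step rate is $1+\eta\ls(1+o(1/\log n))$, so over $t\le T_1$ iterations the product of rates exceeds $(1+\eta\ls)^t$ by at most a constant factor.

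The substance is to bound the forcing by $|h_t|\lesssim\ls\sqrt{\mu^3\log^2 n/(np)}\,(1+\eta\ls)^t\,\beta_0/n$ (up to a small power of $t$, which is harmless since $t\le T_1=O(\log n)$ and \eqref{eq:beta0} leaves ample slack). I would group the terms of $h_t$ into three kinds. \emph{Cross terms}: $\ul^\top(\Mo-\Ml)\xt$ and $\ul^\top(\tfrac1p\ProjO{\xtl\xtl^\top}-\ProjOl{\xtl\xtl^\top})\xtl$ live on the $l$-th row and column of their matrices, so on expanding them every scalar that appears carries a factor $\ul_l$, and $|\ul_l|\lesssim\sqrt{\mu/n}$ by incoherence of $\ul$ (Davis--Kahan from $\vnorm{\Ml-\Ms}$ small plus incoherence of $\us$, established earlier). \emph{Concentration terms}: after substituting $\xt=\xtl+\Delta^{(t)}$, the $\xtl$-part of each cross term is a form in the row-$l$ variables $\{\delta_{jl},E_{jl}\}_j$ with coefficients built from $\ul$ and $\xtl$, and since $\Ml$ (hence $\ul$ and $\lambda^{(l)}$) and the entire leave-one-out trajectory $\{\xtl\}$ are, by construction, independent of those variables, Bernstein's inequality gains a factor $\sqrt n$ over the crude $\ell_2$ bound; together with $|\ul_l|\lesssim\sqrt{\mu/n}$, $\vnorm{\xtl}\lesssim\beta_0\sqrt{\log n}$, $\mnorm{\xtl}\lesssim\beta_0\sqrt{\log n/n}$ and the noise bound $\sigma\lesssim\ls\mu\sqrt{\log n}/n$, this yields $\lesssim\ls\sqrt{\mu^3\log^2 n/(np)}\,\beta_0/n$. \emph{Residuals}: the $\Delta^{(t)}$-parts of the cross terms, the fluctuation term, and $(\vnorm{\xt}^2-\vnorm{\xtl}^2)(\ul^\top\xtl)$ are all controlled by the Phase~I bounds already at hand from \cref{lem:xt-txt I} and \cref{lem:xtl I} on $\vnorm{\xt-\txt}$, $\mnorm{\xt-\txt}$ and $\vnorm{\xt-\xtl}$, and by the smallness of $\beta_0$ in \eqref{eq:beta0}; they are of lower order. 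Finally, since $p_0=\ul^\top(\xz-\xz)=0$, unrolling and summing the resulting geometric series in the rate gives $|p_t|\lesssim\sqrt{\mu^3\log^2 n/(np)}\,(1+\eta\ls)^t\,\beta_0/n$, which is the claim; a union bound over $l\in[n]$ and over the concentration events — which is where the sample complexity $n^2p\gtrsim\mu^5 n\log^{22}n$ is used — keeps the failure probability $o(1/\sqrt{\log n})$.

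The main obstacle is not any single estimate but the circular dependence among the Phase~I lemmas: \eqref{eq:xt-xtl I} of \cref{lem:xtl I} is proved using near-orthogonality of $\xt-\xtl$ to $\us$, i.e.\ essentially this lemma, while the argument above uses \eqref{eq:xt-xtl I} and \eqref{eq:xt-txt lm I} together with the incoherence of $\xt$ to dispose of the residual terms and to keep $\vnorm{\xt}^2$ negligible in the rate. Consequently all of these (together with the other Phase~I lemmas) have to be carried out in a single induction on $t$, with every bound stated with enough multiplicative slack — the generous powers of $\log n$ and the $\mu^5 n\log^{22}n$ sample complexity — that the induction closes; and one must track precisely which randomness each factor depends on, so that the ``gain a $\sqrt n$'' step and the operator-norm concentration $\vnorm{\tfrac1p\ProjO{\xtl\xtl^\top}}\lesssim\vnorm{\xtl}^2$ both remain valid on a single good event.
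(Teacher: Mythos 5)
Your high-level scheme coincides with the paper's: the bound is proved inside the same simultaneous Phase-I induction as \cref{lem:xt-txt I,lem:xtl I} (the paper's \cref{lem:I1}), the $\Ml$-term is collapsed through $\ul^\top\Ml=\lambda^{(l)}\ul^\top$ so the per-step rate is $1+\eta\ls+O(1/\log^2 n)$, the row-$l$ cross terms are handled by Bernstein-type bounds exploiting that $\ul$ and $\xtl$ are independent of the $l$th row/column randomness, and the recursion is unrolled with a union bound over $l$. The gap lies in the terms you classify as ``residuals''.

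Concretely, two of them cannot be dismissed with the tools you list. First, your ``fluctuation'' term contains $\ul^\top\bigl(\tfrac1p\ProjO{\xt\xt^\top-\xtl\xtl^\top}-(\xt\xt^\top-\xtl\xtl^\top)\bigr)\xtl$; the matrix inside depends on the sampling, so the fixed-matrix concentration results (\cref{lem:POM-M,lem:POxx}) do not apply, and the available deterministic bounds ($\norm{\tfrac1p\ProjO{\M}}\lesssim n\mnorm{\M}$, or row-wise Cauchy--Schwarz at a cost of $1/\sqrt p$) give at best order $\min\{\sqrt n,1/\sqrt p\}\,\vnorm{\xt-\xtl}\,\vnorm{\xt}\,\vnorm{\xtl}$, which for $\beta_0$ near the upper end of \cref{eq:beta0} exceeds the admissible per-step forcing $\eta\ls\sqrt{\mu^3\log^{O(1)}n/(np)}\,(1+\eta\ls)^t\beta_0/n$ by roughly $\sqrt n/\poly(\log n)$ at early iterations. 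The paper never creates this term: it writes the difference of the sampled cubic terms in mean-value form, $\int_0^1\grad^2 g(\xtl(\tau))\,\d\tau\,(\xt-\xtl)$, so the sampled operator always acts on $\xt-\xtl$, and its deviation from the reference $\vnorm{\txt}^2\I+2\txt(\txt)^\top$ is controlled in operator norm via the $\ell_\infty$ induction hypothesis $\mnorm{\xt-\txt}$ and \cref{lem:hessian g}. Second, the reference then leaves the signal-direction term $2\eta\,\ul^\top\txt(\txt)^\top(\xt-\xtl)$ --- your analogue is $(\vnorm{\xt}^2-\vnorm{\xtl}^2)\,\ul^\top\xtl$ --- and this is small only because $\abs{\ul^\top\txt}\lesssim\sqrt{\log n/n}\,(1+\eta\ls)^t\beta_0\ll\vnorm{\txt}$ throughout Phase~I (near-orthogonality of the early iterates to $\ul$, from the explicit form of $\txt$ and the randomness of $\xz$). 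You never invoke this fact; with the crude bound $\abs{\ul^\top\xtl}\le\vnorm{\xtl}$ the forcing again overshoots the $\beta_0/n$ scale by about $\sqrt{np}/\poly(\log n)$ for the larger admissible $\beta_0$. (A minor further point: Davis--Kahan gives only $\ell_2$ closeness of $\ul$ to $\us$, not the entrywise bound $\abs{u^{(l)}_l}\lesssim\sqrt{\mu/n}$ you invoke; that follows instead from $\ul=\tfrac1{\lambda^{(l)}}\Ml\ul$ and the fact that the $l$th row of $\Ml$ equals that of $\Ms$.) With these two ingredients added your induction closes, but as written the key estimates fail precisely in the regime of large initialization that the theorem is designed to cover.
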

Note that $\ul$ is almost parallel to $\us$ (see \cref{lem:ul-us} in the appendix). The $\ul$ component of $\xt - \xtl$ is initialized to the order of $\frac{1}{\sqrt{np}} \frac{\beta_0}{n}$, which is $\frac{1}{\sqrt{n}}$ times smaller than $\vnorm{\xt - \xtl}$. Although it is increased exponentially, from the definition of $T_1$, the $\ul$ component remains much smaller than $\vnorm{\xt - \xtl}$ in Phase~I.

One can see that $\abs{\(\xtl - \txt\)_l}$ increases by $\vnorm{\xt - \txt} \mnorm{\us}$ at each step, and summing the bound \cref{eq:xt-txt l2 I} up to $t$ gives \cref{eq:xtl-txt I}. Finally, \cref{eq:xt-txt lm I} is obtained by putting \cref{eq:xt-xtl I} and \cref{eq:xtl-txt I} into \cref{eq:xt-txt l}.


\section{Phase~II: Expansion}\label{sec:phase II}
In the next phase, we show that the bounds obtained in Phase~I are increased at a rate of $(1 + \eta \ls)$.
\begin{lemma}
	\label{lem:II}
	Let $T_2$ be the largest $t$ such that $\tb_t^2 \leq \ls\(1 - \frac{1}{\log n}\)$. Then, for all $T_1 < t \leq T_2$, we have
	\begin{align}
		\label{eq:xt-txt l2 II}
		&\vnorm{\xt - \txt} \lesssim \mu \sqrt{\frac{\log^3 n}{np}} \beta_0 (1 + \eta\ls)^{t - T_1}, \\
		\label{eq:xt-xtl II}
		&\vnorm{\xt - \xtl} \lesssim \mu \sqrt{\frac{\log^5 n}{np}} \frac{\beta_0}{\sqrt{n}} (1 + \eta \ls)^{t - T_1}, \\
		\label{eq:xt-txt lm II}
		&\mnorm{\xt - \txt} \lesssim \sqrt{\frac{\mu^3 \log^8 n}{np}} \frac{\beta_0}{\sqrt{n}} (1 + \eta\ls)^{t - T_1}, \\
		\label{eq:xtl-txt l II}
		&\abs{\(\xtl - \txt\)_l} \lesssim \sqrt{\frac{\mu^3 \log^8 n}{np}} \frac{\beta_0}{\sqrt{n}} (1 + \eta \ls)^{t - T_1},
	\end{align}
	with probability at least $1 - o(1 / \sqrt{\log n})$.
\end{lemma}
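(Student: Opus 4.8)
The plan is to prove \cref{lem:II} by a joint induction on $t$ over the Phase~II window $T_1 < t \leq T_2$, propagating all four bounds \crefrange{eq:xt-txt l2 II}{eq:xtl-txt l II} together with two auxiliary facts that feed back into the recursions: $\vnorm{\xt}^2 \leq (1+o(1))\ls$, and the incoherence estimate $\mnorm{\xt} \lesssim \sqrt{\mu/n}\,\poly(\log n)\,\tb_t$. The base case $t = T_1$ is supplied by \cref{lem:xt-txt I,lem:xtl I}: since $(1+\eta\ls)^{T_1} < \sqrt{n}$ and $\eta\ls = \Theta(1)$ one has $T_1 = O(\log n)$, so the Phase~I bounds --- which carry factors $t$ and $t^3$ --- match, up to constants and at most one extra $\sqrt{\log n}$ of slack, the right-hand sides of \crefrange{eq:xt-txt l2 II}{eq:xtl-txt l II} at $t = T_1$, and the two families glue. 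Throughout, the needed facts about $\txt$ come from the explicit dynamics of \cref{sec:4} (and \cref{lem:II_3}): on Phase~II we have $\tb_t^2 \leq \ls$, $\tb_t \leq \beta_0(1+\eta\ls)^{t-T_1}$, $\tg_t$ nonincreasing, $\txt = \pm\ta_t\us + \txt_\perp$ incoherent up to $\poly(\log n)$, and $\tb_t$ growing at rate $(1+\eta\ls)$ except for the last $O(\log\log n)$ iterations; all concentration events below hold with the claimed probability after union bounds.

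For \cref{eq:xt-txt l2 II}, subtracting \cref{eq:txt update} from \cref{eq:xt update} and inserting $\xt = \txt + (\xt - \txt)$ into the cubic term yields
\[
\xtp - \txtp = \big(\I + \eta\Mo - \eta\vnorm{\xt}^2\I\big)(\xt - \txt) + \eta\,\vec{g}^{(t)},
\]
with $\vec{g}^{(t)} = (\Mo - \Ms)\txt - \big(\vnorm{\xt}^2 - \vnorm{\txt}^2\big)\txt - \big[\tfrac1p\mathcal{P}_\Omega(\xt\xt^\top) - \xt\xt^\top\big]\xt$. Splitting $\Mo = \lo\uo\uo^\top + \Mo_\perp$ along its top eigenpair $(\lo,\uo)$ --- with $\abs{\lo - \ls} \leq \norm{\Mo - \Ms} \lesssim \ls\mu\,\poly(\log n)/\sqrt{np}$ (the noise contribution being of the same order under the assumption on $\sigma$) and $\norm{\Mo_\perp} \leq \norm{\Mo - \Ms}$ --- and using the inductive bound $\vnorm{\xt}^2 \leq (1+o(1))\ls$ together with $\eta\ls < 0.1$, the amplification matrix has operator norm at most $1 + \eta\ls\big(1 + O(\mu\,\poly(\log n)/\sqrt{np})\big)$; over the $O(\log n)$ steps of Phase~II this compounds to $(1+\eta\ls)^{t-T_1}$ times a factor $1+o(1)$ exactly when $np \gtrsim \mu^2\,\poly(\log n)$, which the sample complexity guarantees. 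For the source, $\vnorm{(\Mo - \Ms)\txt} \lesssim \ls\mu\,\poly(\log n)\,\tb_t/\sqrt{np}$, $\big|\vnorm{\xt}^2 - \vnorm{\txt}^2\big| \lesssim \tb_t\vnorm{\xt - \txt}$, and --- using the inductive incoherence of $\xt$ and the RIP-type bound $\norm{\tfrac1p\mathcal{P}_\Omega(\xt\xt^\top) - \xt\xt^\top} \lesssim \sqrt{n\log n/p}\,\mnorm{\xt}^2$ --- the cubic sampling error is $\lesssim \mu\,\poly(\log n)\,\tb_t^3/\sqrt{np} \leq \mu\ls\,\poly(\log n)\,\tb_t/\sqrt{np}$. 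Unrolling,
\[
\vnorm{\xt - \txt} \lesssim \vnorm{\x^{(T_1)} - \tx^{(T_1)}}\,(1+\eta\ls)^{t-T_1} + \sum_{s=T_1}^{t-1}(1+\eta\ls)^{t-1-s}\,\eta\,\vnorm{\vec{g}^{(s)}},
\]
and since each term of $\vnorm{\vec{g}^{(s)}}$ is proportional to $\tb_s \leq \beta_0(1+\eta\ls)^{s-T_1}$ (with $\tb_s^2 \leq \ls$), the sum is $\lesssim \mu\,\poly(\log n)\,\beta_0\,(1+\eta\ls)^{t-T_1}/\sqrt{np}$, absorbed into \cref{eq:xt-txt l2 II} by the $\log$-margin in $n^2 p \gtrsim \mu^5 n\log^{22}n$. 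The same argument with the extra perturbation $\eta(\Mo - \Ml)\xtl$ carried along proves \cref{eq:xt-xtl II}: the key point is that $\Mo - \Ml$ is supported on the $l$th row and column while $\xtl$ is independent of the sampling and noise in that row and column, so a Bernstein bound gives $\vnorm{(\Mo - \Ml)\xtl} \lesssim \ls\mu\,\poly(\log n)\,\vnorm{\xtl}/(n\sqrt{p})$ --- far below the crude $\norm{\Mo - \Ml}\,\vnorm{\xtl}$ --- which accumulates acceptably.

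For \cref{eq:xtl-txt l II}, because the leave-one-out operator in \cref{eq:xtl update} keeps the $l$th row of any matrix intact, the $l$th entry of $\xtl$ obeys the scalar recursion $(\xtpl)_l = (1 - \eta\vnorm{\xtl}^2)(\xtl)_l + \eta\ls(\us)_l\,\us^\top\xtl$, exactly mirroring $(\txtp)_l = (1 - \eta\vnorm{\txt}^2)(\txt)_l + \eta\ls(\us)_l\,\us^\top\txt$; subtracting,
\[
\abs{(\xtpl - \txtp)_l} \leq \abs{(\xtl - \txt)_l} + \eta\,\big|\vnorm{\txt}^2 - \vnorm{\xtl}^2\big|\,\abs{(\txt)_l} + \eta\ls\,\abs{(\us)_l}\,\abs{\us^\top(\xtl - \txt)}.
\]
Bounding $\big|\vnorm{\txt}^2 - \vnorm{\xtl}^2\big| \lesssim \tb_t(\vnorm{\xt - \txt} + \vnorm{\xt - \xtl})$, $\abs{(\txt)_l} \lesssim \sqrt{\mu/n}\,\poly(\log n)\,\tb_t$, $\abs{(\us)_l} \leq \sqrt{\mu/n}$, $\abs{\us^\top(\xtl - \txt)} \leq \vnorm{\xt - \txt} + \vnorm{\xt - \xtl}$, and $\tb_t^2 \leq \ls$ shows $\abs{(\xtl - \txt)_l}$ grows by at most $\eta\ls\sqrt{\mu/n}\,\poly(\log n)(\vnorm{\xt - \txt} + \vnorm{\xt - \xtl})$ per step; summing the geometric bounds \cref{eq:xt-txt l2 II} and \cref{eq:xt-xtl II} already obtained gives \cref{eq:xtl-txt l II}, and then \cref{eq:xt-txt lm II} follows from $\mnorm{\xt - \txt} \leq \max_{l}(\vnorm{\xt - \xtl} + \abs{(\xtl - \txt)_l})$. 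At each step the auxiliary facts are reinstated: $\vnorm{\xtp} \leq \tb_{t+1} + \vnorm{\xtp - \txtp} \leq \sqrt{\ls} + o(\sqrt{\ls})$ by the just-proved $\ell_2$ bound and $\tb_{t+1}^2 \leq \ls$, while $\mnorm{\xtp} \leq \mnorm{\txtp} + \mnorm{\xtp - \txtp}$ stays incoherent up to $\poly(\log n)$ by incoherence of $\txt$.

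The main obstacle is the accounting of how the source terms accumulate over the $\Theta(\log n)$ iterations of Phase~II: unlike a recursion with a fixed forcing term, here $\vnorm{\vec{g}^{(s)}}$ and its leave-one-out analogues themselves grow geometrically --- essentially at the rate $(1+\eta\ls)$ at which $\tb_s$ expands --- so the convolution with the amplification factor is not a convergent geometric series but picks up an extra $\poly(\log n)$; keeping this within the stated bounds, and simultaneously verifying that the top-eigenvalue amplification of $\I + \eta\Mo - \eta\vnorm{\xt}^2\I$ --- which depends both on the spectral perturbation $\norm{\Mo - \Ms}$ and on the very $\ell_2$-deviation bound being proved --- stays below $1+\eta\ls$ by a margin that remains $1+o(1)$ after $\Theta(\log n)$-fold compounding, is precisely where the sample complexity $n^2 p \gtrsim \mu^5 n\log^{22}n$ is consumed, and is why the induction must be run jointly for all four quantities and both auxiliary facts at once.
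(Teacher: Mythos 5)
Your overall architecture mirrors the paper's: a joint induction over $T_1 < t \leq T_2$ with the base case glued from the Phase~I bounds, a crude-spectral-norm treatment of $\Mo-\Ms$ in the $\ell_2$ recursions, leave-one-out perturbations $\big(\tfrac1p\ProjO{\cdot}-\ProjOl{\cdot}\big)\xtl$ bounded by row/column independence, the exact scalar recursion for $(\xtl)_l$ (whose update sees the unsampled $l$th row), and the $\ell_\infty$ bound assembled as $\vnorm{\xt-\xtl}+\abs{(\xtl-\txt)_l}$. Those parts are sound, and your ``RIP-type'' step is salvageable despite the dependence of $\xt$ on $\Omega$, since $\tfrac1p\ProjO{\x\x^\top}-\x\x^\top=\diag(\x)\big(\tfrac1p\ProjO{\1\1^\top}-\1\1^\top\big)\diag(\x)$ gives a bound uniform in $\x$.

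The genuine gap is in your $\ell_2$ recursion (and its leave-one-out analogue). You split the population part of the cubic discrepancy as $\vnorm{\xt}^2(\xt-\txt)+\big(\vnorm{\xt}^2-\vnorm{\txt}^2\big)\txt$ and bound the second piece as a forcing term of size $\lesssim \tb_t^2\vnorm{\xt-\txt}$. This discards the rank-one structure $\big(\vnorm{\xt}^2-\vnorm{\txt}^2\big)\txt\approx 2\txt\txt^\top(\xt-\txt)$, which enters the update with a stabilizing sign. With your accounting the effective per-step factor is $1+\eta\lo-\eta\vnorm{\xt}^2+2\eta\tb_t^2\approx 1+\eta\ls+\eta\tb_t^2$, not $1+\eta\ls+o(1/\log n)$ as you assert. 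Before $T_2^\prime$ this is harmless ($\tb_t^2\leq\tfrac{\ls}{64\log n}$ and $T_2^\prime\leq\tfrac{64\log n}{\eta\ls}$, so the excess compounds to $O(1)$), but on the last stretch $T_2^\prime<t\leq T_2$, where $\tb_t^2$ climbs to $\ls\big(1-\tfrac1{\log n}\big)$ over $\Theta(\log\log n)$ iterations (\cref{lem:II_3}), the excess compounds to $\exp\big(\sum_s\eta\tb_s^2\big)\asymp\log^{6} n$: equivalently, feeding the claimed bound \cref{eq:xt-txt l2 II} back into this term returns a strictly larger bound, so the induction does not close at the stated exponents, and \cref{eq:xt-txt l2 II}, \cref{eq:xt-xtl II} (hence \cref{eq:xt-txt lm II,eq:xtl-txt l II}) are not established; the slack in $n^2p\gtrsim\mu^5 n\log^{22}n$ cannot absorb an extra $\log^{6} n$ either. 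The paper avoids this precisely by keeping the full linearization $\big((1-\eta\vnorm{\txt}^2)\I-2\eta\txt\txt^\top+\eta\Ms\big)$ inside the amplification operator and showing its norm is at most $1+\eta\ls+O(1/\log^2 n)$, writing $\txt=\ta_t\us+\txt_\perp$ with $\vnorm{\txt_\perp}\lesssim\beta_0$ so that along $\us$ the $-2\eta\ta_t^2$ offsets $+\eta\ls$ while the cross terms are $O(\eta\sqrt{\ls}\beta_0)=O(1/\log^2 n)$ thanks to the small initialization; you would need the same device (or a strengthened hypothesis carrying a $\prod_s(1+\eta\tb_s^2)$ factor, which however overshoots the stated bounds) both for $\xt-\txt$ and for $\xt-\xtl$.
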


$T_2$ is defined as the end of Phase~II. We will explain how \cref{lem:II} leads to \cref{thm:main 2} in Phase~II. Let us first focus on \cref{eq:xt-txt l2 II} and \cref{eq:xt-txt l2 main}. We can divide Phase~II into three parts according to the behavior of $\vnorm{\txt}$. First, $\vnorm{\txt}$ is kept close to $\beta_0$ until $(1 + \eta \ls)^t$ becomes $\sqrt{n}$, or $(1 + \eta\ls)^{t - T_1}$ becomes $\sqrt{np}$. In this part, although the bounds increase exponentially with the rate of $(1 + \eta \ls)$, the factor $\frac{1}{\sqrt{np}}$, which was already present in \cref{eq:xt-txt l2 I} of Phase~I, compensates for this increase. At the end of the first part, $\vnorm{\xt - \txt}$ is smaller than $\vnorm{\txt}$ by some log factors. Next, $\vnorm{\txt}$ grows at the rate of $(1 + \eta \ls)$ until it reaches $\frac{\sqrt{\ls}}{8\sqrt{\log n}}$. Since both $\vnorm{\xt - \txt}$ and $\vnorm{\txt}$ increase with $(1 + \eta \ls)$, the ratio between them is maintained in the second part. Finally, in the remaining iterations, $\vnorm{\txt}$ increases with $(1 - \eta \tb_t^2 + \eta \ls)$ at each step, and the increment becomes smaller as it converges to $\sqrt{\ls}$. Thus, as in the first part, $\vnorm{\xt - \txt}$ increases faster than $\vnorm{\txt}$. However, from \cref{lem:II_3}, the length of this part is $O (\log \log n)$, and the ratio between $\vnorm{\xt - \txt}$ and $\vnorm{\txt}$ increases only by $\log^6 n$. We prove that the log factors already present at the end of the second part compensate this, and finally \cref{eq:xt-txt l2 main} holds for all $t$ in Phase~II. A more delicate analysis may prove that $\vnorm{\xt-\txt}$ grows at the same rate as $\vnorm{\txt}$ in the third part, and this will reduce the required sample complexity by at most $\log^{12} n$. A similar argument can be used to prove that the bounds for $\mnorm{\xt - \txt}$, $\vnorm{\xt - \xtl}$, and $\abs{(\xtl - \txt)_l}$ are smaller than $\mnorm{\txt}$ by some log factors throughout Phase~II.

At the end of Phase~II, $\txt$ is very close to $\pm \xs$ in both $\ell_2$ and $\ell_\infty$ norms (see \cref{cor:txt-xs} in the appendix), so one can replace $\txt$ of \cref{lem:II} with $\pm \xs$ to prove \crefrange{eq:xt-xs l2}{eq:xtl-xs} of \cref{thm:main}. Hence, we can let $\Ts = T_2$, and as explained in \cref{sec:4}, $T_2$ is approximately given by $\frac{1}{\log(1 + \eta \ls)}\log\frac{\sqrt{\ls n}}{\beta_0} + O(\log \log n)$.

\begin{figure}[t]
    \centering
    \includegraphics[width=\textwidth]{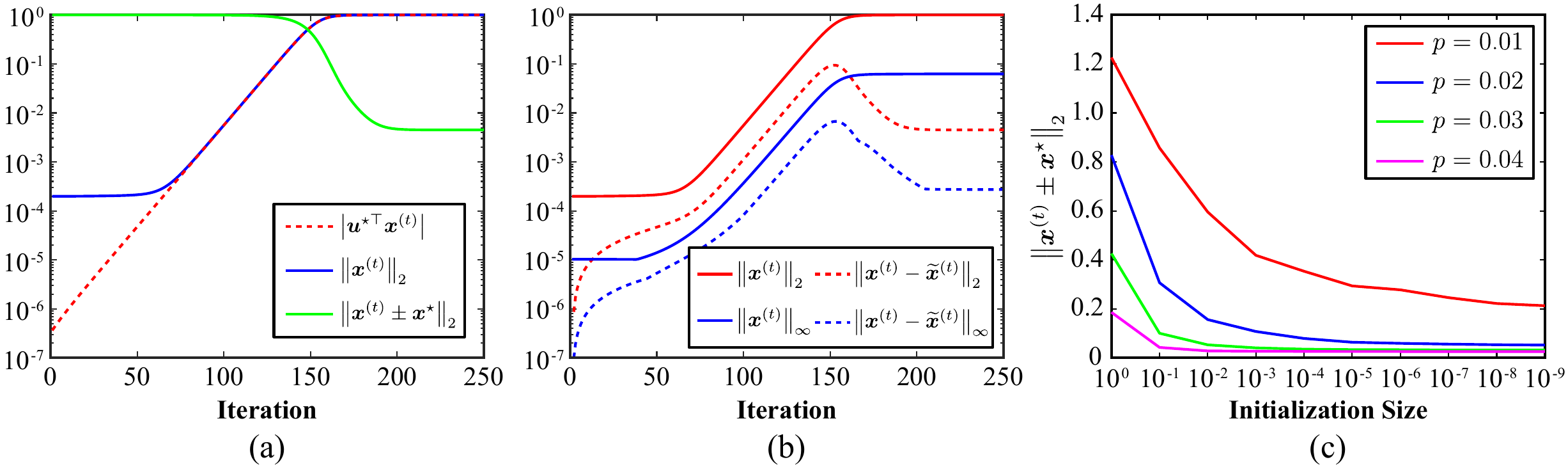}
    \caption{(a) Evolution of the quantities $|\bm{u}^{\star\top} \bm{x}^{(t)}|$ and $\|\bm{x}^{(t)}\|_2$, which behave similarly to $\ta_t$ and $\tb_t$, respectively, and $\|\bm{x}^{(t)} \pm \bm{x}^{\star}\|_2$, which shows local convergence. (b) Comparison between the norms of $\bm{x}^{(t)}$ and $\bm{x}^{(t)} - \tilde{\bm{x}}^{(t)}$. (c) Convergence of GD with respect to the initialization size and sampling probability. $\|\bm{x}^{(t)} \pm \bm{x}^\star\|_2$ was measured at $t = \frac{1}{\log(1 + \eta \lambda^\star)} \log \frac{\sqrt{\lambda^\star n}}{\beta_0} + 100$ and averaged over $1000$ trials.}
    \label{fig:2}
\end{figure}

\section{Simulation}
In this section, we present some simulation results that support our theoretical findings.

\textbf{Trajectory of GD}\hspace{.5em}
With the dimension $n = 5000$, we constructed the ground truth vector $\us$ by sampling it from the Gaussian distribution $\mathcal{N}(\vec{0}, \frac{1}{n}\I)$ and normalizing it to have unit norm. We let $\ls = 1$ so that the matrix $\Ms$ is given by $\us\us^\top$, and we randomly sampled the matrix symmetrically with a sampling rate of $p = 0.1$ and Gaussian noise of $\sigma = \frac{0.1}{n}$. The initialization size was set to $\beta_0 = \frac{1}{n}$ and a step size of $0.1$ was used for GD. \cref{fig:2} (a) and (b) represent one trial of the experiment, but similar graphs were obtained in each repetition of the experiment. The evolution of some important quantities such as $\vnorm{\xt}$ and $\abs{\us^\top \xt}$ is shown in \cref{fig:2}(a). As in the fully observed case, the signal component $\abs{\us^\top \xt}$ increases at the the rate of $(1 + \eta \ls)$ until it approaches $\sqrt{\ls}$, and a local convergence to $\xs$ occurs, where $\vnorm{\xt - \xs}$ decreases exponentially and saturates at the level determined by the noise size $\sigma$. In \cref{fig:2}(b), we describe the deviation of $\xt$ from $\txt$ in both $\ell_2$ and $\ell_\infty$ norms. The solid lines represent the norms of $\xt$ and the dotted lines represent those of $\xt - \txt$. We can see that there is a gap between the solid and the dotted lines during the whole iterations. Thus, $\xt$ stays close to the trajectory of the fully observed case, as we proved in \cref{thm:main 2}.

\textbf{Small Initialization}\hspace{.5em}
In the next experiment, we investigated the importance of a small initialization for the convergence of GD. We used the same conditions as in the previous experiment except $n=500$. We measured $\vnorm{\xt \pm \xs}$ at $t = \frac{1}{\log(1 + \eta \ls)} \log \frac{\sqrt{\ls n}}{\beta_0} + 100$ and  averaged it over $1000$ trials. We repeated the experiment while changing the initialization size from $10^0$ to $10^{-9}$ and the sampling probability from $0.01$ to $0.04$. The result is summarized in \cref{fig:2}(c). For all sampling probabilities, the small initialization improves the convergence of GD. Also, the performance starts to saturate at much larger initialization sizes as the sampling probability increases, and this is consistent with our finding \cref{eq:beta0} that a larger initialization is possible as more samples are available.

\begin{wrapfigure}{r}{0.42\textwidth}
	\centering
	\includegraphics[width=0.42\textwidth]{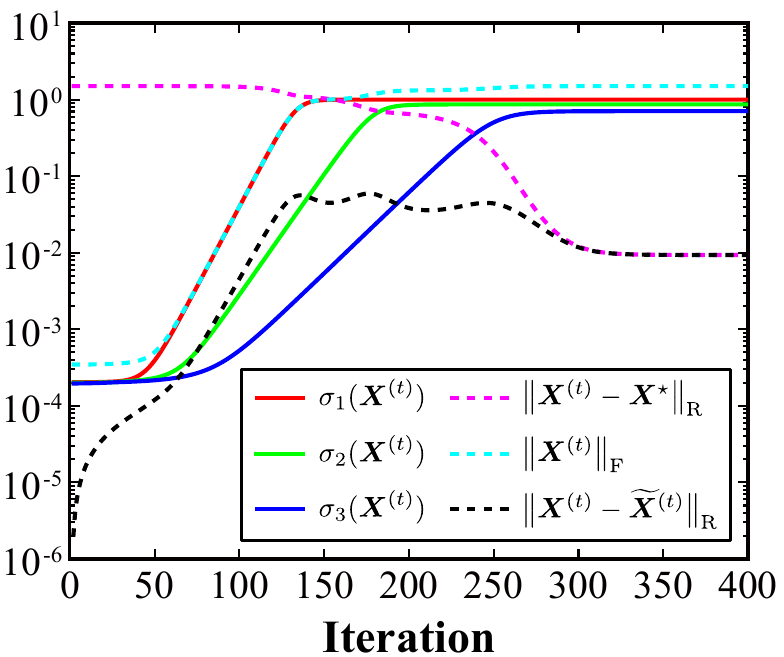}
	\caption{Trajectory of GD obtained for a rank-3 matrix with non-zero eigenvalues $1, 0.75, 0.5$. The same parameters were used as in \cref{fig:2}.}
	\label{fig:4}
	\vspace{-0.5cm}
\end{wrapfigure}

\section{Discussion}
In this paper, we showed that for rank-1 symmetric matrix completion with $\ell_2$ loss, GD can converge to the ground truth starting from a small random initialization. Ignoring log factors, the bound on the initialization size is $n^{-\frac{1}{4}}$ when the optimal $n \poly (\log n)$ samples are provided , and the bound becomes larger as more samples are provided. The result is interesting because the loss function does not have global benign geometry if no regularizer is applied. Our result does not use any explicit regularizer and relies only on the implicit regularizing effect of GD.

The most important future work is an extension to the rank-$r$ case. Suppose that $\Ms$ is a rank-$r$ matrix and its eigendecomposition is given by $\Us \bm{\Sigma}^\star \Us^\top = \Xs \Xs^\top$, where $\bm{\Sigma}^\star = \diag(\ls_1, \cdots, \ls_r)$ and $\Xs = \Us \bm{\Sigma}^{\star \frac{1}{2}}$. Then, the trajectory of GD becomes an $n \times r$ matrix $\Xt$, which is updated as
\begin{equation*}
	\Xtp = \Xt - \frac{\eta}{p} \ProjO{\Xt \Xt^\top} \Xt + \eta \Mo \Xt.
\end{equation*}
Each entry of $\Xz$ is sampled independently from the Gaussian distribution $\mathcal{N}\(0, \frac{1}{n}\beta_0^2 \)$ as in the rank-1 case.

An instance of $\Xt$ is shown in \cref{fig:4}. The same conditions as in \cref{fig:2} are used, except that the ground truth matrix is a rank-3 matrix with non-zero eigenvalues $1, 0.75, 0.5$. The singular values of $\Xt$ behave similarly to $\vnorm{\xt}$ in the rank-1 case. In the early iterations,  where the orthogonal components dominate, the singular values stay close to their initial scale $\beta_0$. After that, each singular value $\sigma_i(\Xt)$ increases at a rate of $(1 + \eta \ls_i)$ and saturates at $\sqrt{\ls_i}$. We use $\rnorm{\X - \Y}$ to denote the Frobenius norm between $\X$ and $\Y$ under best rotational alignment. $\rnorm{\Xt - \Xs}$ decreases exponentially and saturates at the level determined by the noise size $\sigma$, after all singular values have saturated, as local convergence begins.

To extend the results of the rank-1 case, we need to show that $\rnorm{\Xt - \smash{\tXt}}$ remains much smaller than $\fnorm{\Xt}$ throughout the iterations, where $\tXt$ is the trajectory of the fully observed case. Before $\sigma_1(\Xt)$ saturates around $\sqrt{\ls_1}$, it behaves similarly to $\vnorm{\xt - \txt}$ of the rank-1 case, i.e., it expands at a rate of $(1 + \eta \ls_1)$ along with $\fnorm{\Xt}$ after the early iterations. However, because each singular value grows at a different rate, a different phenomenon is observed for the rank-$r$ case. During the iterations before $\sigma_{i+1}(\Xt)$ saturates after $\sigma_{i}(\Xt)$ does, both $\rnorm{\Xt - \smash{\tXt}}$ and $\fnorm{\Xt}$ do not increase much. Our current theory can only show that $\rnorm{\Xt - \smash{\tXt}}$ increases at a rate less than $(1 + \eta \ls_{i+1})$, and in order for $\rnorm{\Xt - \smash{\tXt}}$ to remain much smaller than $\fnorm{\Xt}$, additional sample complexity is required to compensate for the exponential increases. Therefore, we expect that the convergence of GD for the case of rank-$r$ can be proved with the techniques developed in this paper if $n^{1 + \Theta(\kappa - 1)} \poly(\kappa, r, \log n)$ samples are provided, where $\kappa = \frac{\ls_1}{\ls_r}$ is the condition number. Nevertheless, whether GD can converge with the optimal $n \poly(\kappa, r, \log n)$ samples for the rank-$r$ matrix completion problem remains an open problem.


\newpage

\begin{ack}
This research was supported by the National Research Foundation of Korea under grant 2021R1C1C11008539.
\end{ack}

\bibliography{main}
\bibliographystyle{ieeetr}

\newpage
\appendix

\numberwithin{equation}{section}

Detailed proofs for the results explained in the main text are provided in this appendix. We say that an event happens \textit{with high probability} if it happens with probability at least $1 - \frac{1}{n^{C}}$ for a constant $C > 0$ and $C$ can be made arbitrary large by controlling constant factors. A union of $\poly(n)$ number of events that happens with high probability still happens with high probability. For a matrix $\A$, we denote the spectral norm by $\norm{\A}$ and the maximum absolute value of entries by $\mnorm{\A}$. Also, the largest $\ell_2$-norm of rows of $\A$ is denoted as $\mrnorm{\A}$.

\section{Spectral Analysis}
We introduce some spectral bounds related to random sampling and Gaussian noise.
\begin{lemma}
	\label{lem:Mo-Ms}
	If $n^2 p \gtrsim n \log n$, we have
	\begin{equation*}
		\norm{\frac{1}{p} \ProjO{\Ms} - \Ms} \lesssim \ls \mu \sqrt{\frac{\log n}{np}}
	\end{equation*}
	with high probability.
\end{lemma}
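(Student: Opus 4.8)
The plan is to bound $\frac1p\ProjO{\Ms}-\Ms$ with the matrix Bernstein inequality. Since the sampling pattern is symmetric ($\delta_{ij}=\delta_{ji}$), the first step is to write this matrix as a sum of independent, mean-zero, self-adjoint matrices indexed by \emph{unordered} pairs: for $i<j$ put $\vec{W}_{ij}:=\bigl(\tfrac{\delta_{ij}}{p}-1\bigr)M^\star_{ij}\bigl(\e_i\e_j^\top+\e_j\e_i^\top\bigr)$ and for $i=j$ put $\vec{W}_{ii}:=\bigl(\tfrac{\delta_{ii}}{p}-1\bigr)M^\star_{ii}\,\e_i\e_i^\top$, so that $\frac1p\ProjO{\Ms}-\Ms=\sum_{i\le j}\vec{W}_{ij}$, the summands are independent over the $\binom n2+n$ pairs, and $\mean[\vec{W}_{ij}]=\vec{0}$.

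Next I would estimate the two Bernstein parameters. For the uniform bound, $\bigl|\tfrac{\delta_{ij}}{p}-1\bigr|\le\tfrac1p$ and, by incoherence, $|M^\star_{ij}|=\ls|u^\star_i|\,|u^\star_j|\le\tfrac{\ls\mu}{n}$, while $\norm{\e_i\e_j^\top+\e_j\e_i^\top}=1$; hence $\norm{\vec{W}_{ij}}\le L:=\tfrac{\ls\mu}{np}$. For the variance, note $\vec{W}_{ij}^2=\bigl(\tfrac{\delta_{ij}}{p}-1\bigr)^2(M^\star_{ij})^2(\e_i\e_i^\top+\e_j\e_j^\top)$ for $i<j$ (and the analogous, smaller, diagonal term for $i=j$), with $\mean\bigl[\bigl(\tfrac{\delta_{ij}}{p}-1\bigr)^2\bigr]=\tfrac{1-p}{p}\le\tfrac1p$. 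Summing, $\sum_{i\le j}\mean[\vec{W}_{ij}^2]\preceq\tfrac1p\diag\bigl(\sum_j (M^\star_{kj})^2\bigr)_{k=1}^n$, and $\sum_j (M^\star_{kj})^2=\ls^2(u^\star_k)^2\vnorm{\us}^2\le\tfrac{\ls^2\mu}{n}$ (using $\vnorm{\us}=1$ and incoherence), so the variance parameter is $v:=\norm{\sum_{i\le j}\mean[\vec{W}_{ij}^2]}\le\tfrac{\ls^2\mu}{np}$.

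Matrix Bernstein then gives $\mathbb{P}\bigl(\norm{\tfrac1p\ProjO{\Ms}-\Ms}\ge t\bigr)\le 2n\exp\!\bigl(-\tfrac{t^2/2}{v+Lt/3}\bigr)$. Taking $t=C'\ls\mu\sqrt{\tfrac{\log n}{np}}$ for a sufficiently large absolute constant $C'$, one has $\tfrac{t^2}{v}=(C')^2\mu\log n\gtrsim\log n$ (as $\mu\ge1$) and $\tfrac{t}{L}=C'\sqrt{np\log n}\gtrsim\log n$, the last bound using exactly the hypothesis $np\gtrsim\log n$, i.e.\ $n^2p\gtrsim n\log n$. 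Whichever of $v$ and $Lt/3$ dominates the denominator, the exponent is $\gtrsim\log n$, so the failure probability is at most $n^{-C}$ with $C$ as large as desired (by enlarging $C'$), which yields the lemma with high probability.

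This is a routine second-moment argument, so there is no deep obstacle; the two points requiring care are (i) organizing the decomposition so the summands are genuinely independent — diagonal terms and paired off-diagonal blocks must be grouped by unordered index pairs, not ordered ones — and (ii) checking that the weak sampling hypothesis $n^2p\gtrsim n\log n$ suffices. The latter is the slightly delicate step: in the regime where the large-deviation term $Lt/3$ dominates, Bernstein only produces the exponent $\gtrsim t/L=\sqrt{np\log n}$, and this is $\gtrsim\log n$ precisely because $np\gtrsim\log n$.
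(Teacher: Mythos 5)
Your proof is correct and takes essentially the same route as the paper: the paper applies matrix Bernstein to the identical unordered-pair decomposition, only packaged as a general bound for fixed matrices (\cref{lem:POM-M}) that is then specialized using $\mnorm{\Ms} = \ls \frac{\mu}{n}$, with the residual term $\frac{\log n}{p}\mnorm{\Ms}$ absorbed by $np \gtrsim \log n$ exactly where you invoke that hypothesis. Your only (harmless) difference is exploiting $\vnorm{\us}=1$ to get a slightly sharper variance parameter, which does not change the final bound.
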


\begin{lemma}
	\label{lem:Mo-Ml}
	If $n^2 p \gtrsim \mu n \log n$, for all $l \in [n]$, we have
	\begin{equation*}
		\norm{\frac{1}{p} \ProjO{\Ms} - \ProjOl{\Ms}} \lesssim \ls \sqrt{\frac{\mu}{np}}
	\end{equation*}
	with high probability.
\end{lemma}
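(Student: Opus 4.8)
The plan is to exploit the very simple structure of the perturbation matrix $\A := \tfrac1p\ProjO{\Ms} - \ProjOl{\Ms}$: it is supported only on the $l$th row and $l$th column, hence has rank at most $2$, and its spectral norm is controlled by the $\ell_2$-norm of a single column, which is then estimated by a scalar Bernstein inequality.

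\textbf{Step 1 (structure of $\A$).} By definition of $\ProjOl$, the operators $\tfrac1p\ProjO{\cdot}$ and $\ProjOl{\cdot}$ agree on every entry lying outside the $l$th row and column, so $\A$ vanishes there. On the $l$th row, $\ProjOl{\Ms}$ has $(l,j)$ entry $M^\star_{lj} = \ls u^\star_l u^\star_j$, while $\tfrac1p\ProjO{\Ms}$ has $(l,j)$ entry $\tfrac{\delta_{lj}}{p}\ls u^\star_l u^\star_j$; subtracting gives $A_{lj} = \ls u^\star_l u^\star_j\big(\tfrac{\delta_{lj}}{p}-1\big)$, and $A_{jl} = A_{lj}$ by symmetry of $\Ms$ and of $\Omega$. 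A symmetric matrix supported on a single row and column satisfies $\norm{\A}\le\sqrt2\,\vnorm{\a}$, where $\a := \A\e_l$ is its $l$th column: for any unit vector $\v$, the coordinates of $\A\v$ are $a_i v_l$ for $i\neq l$ and $\a^\top\v$ for $i = l$, so $\vnorm{\A\v}^2\le v_l^2\vnorm{\a}^2 + (\a^\top\v)^2\le 2\vnorm{\a}^2$. It therefore suffices to bound $\vnorm{\a}$.

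\textbf{Step 2 (bounding $\vnorm{\a}$).} From Step 1, $\vnorm{\a}^2 = \ls^2 (u^\star_l)^2\,S_l$, where $S_l := \sum_{j=1}^n (u^\star_j)^2\big(\tfrac{\delta_{lj}}{p}-1\big)^2$. Incoherence gives $(u^\star_l)^2\le\mnorm{\us}^2 = \tfrac\mu n$, so it remains to show $S_l\lesssim\tfrac1p$ with high probability. The terms of $S_l$ are independent over $j$, nonnegative, and each is at most $(u^\star_j)^2/p^2\le\mu/(np^2)$; moreover $\mathbb{E}[S_l] = \tfrac{1-p}{p}\sum_j (u^\star_j)^2 = \tfrac{1-p}{p}\le\tfrac1p$, and using $\mathbb{E}\big[(\delta_{lj}/p-1)^4\big]\le 2/p^3$ together with $\sum_j(u^\star_j)^2 = 1$ one gets $\sum_j\mathbb{E}\big[\big((u^\star_j)^2(\delta_{lj}/p-1)^2\big)^2\big]\lesssim \mu/(np^3)$. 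Bernstein's inequality then shows that, for a suitably large constant $C$, the event $S_l > C/p$ has probability at most $\exp\big(-c\,Cnp/\mu\big)$ for a universal $c > 0$; since the hypothesis $n^2p\gtrsim\mu n\log n$ is equivalent to $np\gtrsim\mu\log n$, this probability is at most $n^{-C'}$ with $C'$ as large as desired. On the complementary event, $\vnorm{\a}^2\lesssim\ls^2\cdot\tfrac\mu n\cdot\tfrac1p = \tfrac{\ls^2\mu}{np}$, so $\norm{\A}\lesssim\vnorm{\a}\lesssim\ls\sqrt{\mu/(np)}$.

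\textbf{Step 3 (conclusion and the main point).} Taking a union bound over $l\in[n]$ (absorbing the factor $n$ into $C'$) yields the stated bound for all $l$ simultaneously with high probability. The only genuinely nontrivial ingredient is the scalar concentration of $S_l$ in Step 2, which is precisely where the incoherence and sample-complexity hypotheses enter; everything else is elementary linear algebra. This is also where the proof departs from that of \cref{lem:Mo-Ms}: there $\tfrac1p\ProjO{\Ms}-\Ms$ is a genuinely full matrix and requires a matrix concentration argument, whereas here the two operators being compared differ only on one row and column, which simultaneously caps the rank at $2$ and reduces the operator norm to a single column norm.
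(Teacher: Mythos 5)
Your proof is correct and follows essentially the same route as the paper: both exploit that $\frac{1}{p}\ProjO{\Ms}-\ProjOl{\Ms}$ is supported only on the $l$th row and column, reduce its spectral norm to the $\ell_2$-norm of that row (you get $\sqrt{2}$ where the paper uses $2$), and then control that row norm by a scalar Bernstein bound using incoherence and $np\gtrsim\mu\log n$. The only cosmetic difference is that you concentrate $\sum_j (u^\star_j)^2(\delta_{lj}/p-1)^2$ directly, whereas the paper passes through its \cref{lem:Mo} bound on $\mrnorm{\frac{1}{p}\ProjO{\Ms}}$ plus $\mrnorm{\Ms}$, which rests on the same Bernstein estimate.
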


\begin{lemma}
	\label{lem:Eo}
	If $n^2 p \gtrsim n \log^2 n$, we have
	\begin{equation*}
		\norm{\frac{1}{p} \ProjO{\E}} \lesssim \sigma \sqrt{\frac{n}{p}}
	\end{equation*}
	with high probability.
\end{lemma}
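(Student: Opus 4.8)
The plan is to exploit the Gaussian structure of the noise and reduce to a bound on the operator norm of a Gaussian random matrix with a sparse variance profile. Set $\vec{W} := \frac{1}{p}\ProjO{\E}$, a symmetric matrix whose entries $W_{ij} = \frac{1}{p}\delta_{ij}E_{ij}$, $1 \le i \le j \le n$, are mutually independent. Conditioning on the sampling pattern $\Omega$, the matrix $\vec{W}$ becomes a symmetric Gaussian matrix: its $(i,j)$ entry is distributed as $\mathcal{N}(0,\sigma^2/p^2)$ if $(i,j) \in \Omega$ and is identically $0$ otherwise. Its norm is then governed by two quantities: the largest conditional row standard deviation $\sigma_{\mathrm{row}} := \max_i\bigl(\sum_j \mathbb{E}[W_{ij}^2 \mid \Omega]\bigr)^{1/2} = \frac{\sigma}{p}\sqrt{\max_i d_i}$, where $d_i := |\{j : (i,j)\in\Omega\}|$, and the largest entry standard deviation $\sigma_{\max} := \sigma/p$.

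I would carry this out in three steps. First, control $\max_i d_i$: each $d_i$ is a sum of at most $n$ independent $\mathrm{Bernoulli}(p)$ variables, so a Chernoff bound gives $\max_i d_i \le 2np$ with high probability, using $np \gtrsim \log n$ (which is implied by the hypothesis $n^2 p \gtrsim n\log^2 n$); on this event $\sigma_{\mathrm{row}} \lesssim \sigma\sqrt{n/p}$. Second, bound the conditional mean via a sharp operator-norm inequality for Gaussian random matrices (of Bandeira--van Handel type), $\mathbb{E}[\norm{\vec{W}} \mid \Omega] \lesssim \sigma_{\mathrm{row}} + \sigma_{\max}\sqrt{\log n}$; since $\sigma_{\max}\sqrt{\log n} = \frac{\sigma}{p}\sqrt{\log n} \lesssim \sigma\sqrt{n/p}$ whenever $np \gtrsim \log n$, this gives $\mathbb{E}[\norm{\vec{W}}\mid\Omega] \lesssim \sigma\sqrt{n/p}$ on the event of the first step. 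Third, obtain the tail by Gaussian concentration: viewed as a function of the standard Gaussian variables underlying $\vec{W}$, the map $\norm{\vec{W}}$ is $\sqrt{2}\,\sigma_{\max}$-Lipschitz (because $\norm{\cdot}\le\fnorm{\cdot}$), so $\norm{\vec{W}} \le \mathbb{E}[\norm{\vec{W}}\mid\Omega] + C\sigma_{\max}\sqrt{\log n}$ with high probability, and the correction term is again $\lesssim \sigma\sqrt{n/p}$. Taking a union bound over the events from the first and third steps and integrating out $\Omega$ yields $\norm{\frac{1}{p}\ProjO{\E}} \lesssim \sigma\sqrt{n/p}$ with high probability.

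The main obstacle is securing the sharp $\sqrt{n}$ scaling rather than $\sqrt{n\log n}$. A direct application of the matrix Bernstein inequality, viewing $\vec{W}$ as a sum of the $O(n^2)$ independent rank-$\le 2$ entry contributions, has variance parameter of order $n\sigma^2/p$ and hence only produces $\sigma\sqrt{(n\log n)/p}$; the naive $\varepsilon$-net bound over the sphere is likewise lossy for sparse matrices. This is why the argument is routed through the Gaussian structure (conditioning on $\Omega$) together with a sharp Gaussian matrix-norm inequality plus Gaussian concentration. The hypothesis enters exactly where we need the row-norm term $\sigma\sqrt{n/p}$ to dominate the $\sqrt{\log n}$-times-maximum-entry term $\frac{\sigma}{p}\sqrt{\log n}$; in fact $np \gtrsim \log n$ already suffices for this, so the stated sample complexity $n^2 p \gtrsim n\log^2 n$ is comfortably enough. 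An alternative, more elementary route is to truncate each $E_{ij}$ at level $\Theta(\sigma\sqrt{\log n})$ — which leaves $\frac{1}{p}\ProjO{\E}$ unchanged with high probability — and then apply matrix Bernstein to the resulting bounded matrix; this is simpler but loses an extra $\sqrt{\log n}$ factor unless one again appeals to a sharp spectral bound.
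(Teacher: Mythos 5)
Your proposal is correct, but it is worth noting that the paper does not actually prove \cref{lem:Eo} internally: it simply invokes Lemma~11 of \cite{chen2015fast}, so your argument is a genuine self-contained alternative rather than a reproduction of the paper's route. Your three-step scheme is sound: conditioning on $\Omega$ turns $\frac{1}{p}\ProjO{\E}$ into a symmetric Gaussian matrix with independent on/above-diagonal entries of standard deviation $\frac{\sigma}{p}$ on $\Omega$ and $0$ elsewhere, which is exactly the setting of the Bandeira--van Handel bound $\Mean{\norm{\vec{W}} \mid \Omega} \lesssim \sigma_{\mathrm{row}} + \sigma_{\max}\sqrt{\log n}$; the Chernoff bound $\max_i d_i \lesssim np$ holds with probability $1 - n e^{-cnp}$, which under the hypothesis $np \gtrsim \log^2 n$ is superpolynomially close to $1$, so the "high probability" convention of the paper (arbitrarily large polynomial exponent) is met; and the Lipschitz constant $\sqrt{2}\,\sigma_{\max}$ for Gaussian concentration is computed correctly, with the deviation taken as $C\sigma_{\max}\sqrt{\log n}$ for a large constant $C$ to match that convention. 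You also correctly identify that the condition needed is only $np \gtrsim \log n$, weaker than the stated $n^2 p \gtrsim n\log^2 n$, and your diagnosis that a direct matrix Bernstein (or truncation plus Bernstein) argument loses a $\sqrt{\log n}$ factor is accurate --- that is precisely why both your route and the cited external lemma go through sharper spectral bounds for sparse random matrices. In short: what the paper buys by citation, you obtain by a clean conditional-Gaussian argument; either is acceptable, and yours has the advantage of making the source of the sharp $\sqrt{n/p}$ scaling explicit.
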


Note that \cref{lem:Eo} also implies that $\norm{\El} \lesssim \sigma \sqrt{\frac{n}{p}}$ for all $l \in [n]$ with high probability. Combined with the condition $\sigma \lesssim \frac{\ls \mu}{n} \sqrt{\log n}$, we have
\begin{equation*}
	\norm{\frac{1}{p} \ProjO{\E}} \lesssim \ls \mu \sqrt{\frac{\log n}{np}}, \quad \norm{\El} \lesssim \ls \mu \sqrt{\frac{\log n}{np}}
\end{equation*}
for all $l \in [n]$. Proofs for \cref{lem:Mo-Ms,lem:Mo-Ml} are provided in \cref{sec:tech}. Check Lemma 11 of \cite{chen2015fast} for the proof of \cref{lem:Eo}.


Next, we state bounds on the eigenvalues of $\Mo$ and $\Ml$. The first eigenvalues of $\Mo$ and $\Ml$ are denoted as $\lo$ and $\lambda^{(l)}$, respectively. The following lemma is derived from \cref{lem:Mo-Ms,lem:Mo-Ml} with Weyl's Theorem.
\begin{lemma}
	If $n^2 p \gtrsim \mu n \log n$, we have
	\begin{align}
		\label{eq:lo-ls}
		\abs{\lo - \ls} &\lesssim \ls \mu \sqrt{\frac{\log n}{np}}, \\
		\label{eq:ll-ls}
		\abs{\lambda^{(l)} - \ls} &\lesssim \ls \mu \sqrt{\frac{\log n}{np}}
	\end{align}
	for all $l \in [n]$ with high probability.
\end{lemma}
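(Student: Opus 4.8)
The plan is to reduce both bounds to operator-norm perturbation estimates and then apply Weyl's inequality, $\abs{\lambda_1(\A) - \lambda_1(\B)} \le \norm{\A - \B}$ for symmetric $\A,\B$. Since $\Ms = \ls\us\us^\top$ with $\vnorm{\us} = 1$, the top eigenvalue of $\Ms$ is exactly $\ls$, so it suffices to establish $\norm{\Mo - \Ms} \lesssim \ls\mu\sqrt{\log n / np}$ and, for every $l \in [n]$, $\norm{\Ml - \Ms} \lesssim \ls\mu\sqrt{\log n / np}$.

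First I would prove \cref{eq:lo-ls}. Writing $\Mo - \Ms = \bigl(\tfrac{1}{p}\ProjO{\Ms} - \Ms\bigr) + \tfrac{1}{p}\ProjO{\E}$, the triangle inequality splits the estimate into two pieces. The sampling piece is $O\bigl(\ls\mu\sqrt{\log n/np}\bigr)$ by \cref{lem:Mo-Ms}, whose hypothesis $n^2 p \gtrsim n\log n$ is implied by the assumed $n^2 p \gtrsim \mu n\log n$. The noise piece is $O\bigl(\sigma\sqrt{n/p}\bigr)$ by \cref{lem:Eo}, and the assumption $\sigma \lesssim \tfrac{\ls\mu}{n}\sqrt{\log n}$ turns this into $O\bigl(\ls\mu\sqrt{\log n/np}\bigr)$ via $\sqrt{n/p}\cdot n^{-1} = 1/\sqrt{np}$ — precisely the consolidation already recorded in the discussion following \cref{lem:Eo}. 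Weyl's inequality then yields \cref{eq:lo-ls}.

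For \cref{eq:ll-ls} I would decompose $\Ml - \Ms = \bigl(\ProjOl{\Ms} - \tfrac{1}{p}\ProjO{\Ms}\bigr) + \bigl(\tfrac{1}{p}\ProjO{\Ms} - \Ms\bigr) + \El$. The first term is $O\bigl(\ls\sqrt{\mu/np}\bigr)$ by \cref{lem:Mo-Ml} (its hypothesis is again met), which is dominated by $\ls\mu\sqrt{\log n/np}$ since $\mu,\log n \ge 1$; the second term is handled exactly as before; and $\norm{\El} \lesssim \sigma\sqrt{n/p} \lesssim \ls\mu\sqrt{\log n/np}$ by the same discussion after \cref{lem:Eo}. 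Hence $\norm{\Ml - \Ms} \lesssim \ls\mu\sqrt{\log n/np}$ for each fixed $l$, and Weyl finishes it. Since the relevant events hold with high probability, a union bound over the $n$ indices $l$ together with the single event for $\Mo$ — the standard $\poly(n)$ union argument — extends the conclusion to all $l \in [n]$ simultaneously.

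I do not anticipate a genuine obstacle: this lemma is an immediate corollary of the three spectral bounds already in hand. The only care required is the elementary bookkeeping noted above, namely that the sample-complexity hypothesis stated here subsumes the weaker ones in \cref{lem:Mo-Ms,lem:Mo-Ml,lem:Eo}, and that the noise assumption collapses $\sigma\sqrt{n/p}$ down to the advertised order $\ls\mu\sqrt{\log n/np}$.
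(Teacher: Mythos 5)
Your proposal is correct and matches the paper's argument: the lemma is obtained exactly by combining the spectral bounds of \cref{lem:Mo-Ms,lem:Mo-Ml,lem:Eo} (with the noise condition consolidating $\sigma\sqrt{n/p}$ into $\ls\mu\sqrt{\log n/np}$) and applying Weyl's inequality, with a union bound over $l$. No gaps.
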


Lastly, \cref{lem:Mo-Ms,lem:Mo-Ml} with Davis-Kahan Theorem give the following lemma.
\begin{lemma}
	\label{lem:ul-us}
	If $n^2 p \gtrsim \mu n \log n$, we have
	\begin{equation*}
		\vnorm{\ul - \us} \lesssim \mu \sqrt{\frac{\log n}{np}}
	\end{equation*}
	for all $l \in [n]$ with high probability.
\end{lemma}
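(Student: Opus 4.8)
The plan is to treat $\us$ and $\ul$ as the leading unit eigenvectors of $\Ms$ and $\Ml$, bound the perturbation $\norm{\Ml - \Ms}$ in spectral norm, and then apply the Davis--Kahan $\sin\Theta$ theorem together with a union bound over $l$.

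First I would bound $\norm{\Ml - \Ms}$. Since $\Ml = \ProjOl{\Ms} + \El$, the triangle inequality gives $\norm{\Ml - \Ms} \le \norm{\ProjOl{\Ms} - \Ms} + \norm{\El}$. For the first term I route through $\tfrac1p \ProjO{\Ms}$ and combine \cref{lem:Mo-Ms,lem:Mo-Ml}:
\[
\norm{\ProjOl{\Ms} - \Ms} \le \norm{\ProjOl{\Ms} - \tfrac1p\ProjO{\Ms}} + \norm{\tfrac1p\ProjO{\Ms} - \Ms} \lesssim \ls\sqrt{\tfrac{\mu}{np}} + \ls\mu\sqrt{\tfrac{\log n}{np}} \lesssim \ls\mu\sqrt{\tfrac{\log n}{np}}.
\]
For the second term, the remark following \cref{lem:Eo} together with the standing assumption $\sigma \lesssim \tfrac{\ls\mu}{n}\sqrt{\log n}$ yields $\norm{\El} \lesssim \ls\mu\sqrt{\tfrac{\log n}{np}}$. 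Hence, with high probability for a fixed $l$, $\norm{\Ml - \Ms} \lesssim \ls\mu\sqrt{\tfrac{\log n}{np}} =: \varepsilon$.

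Next I would apply Davis--Kahan. The matrix $\Ms = \ls\us\us^\top$ has leading eigenvalue $\ls$ and all other eigenvalues $0$, so its spectral gap is $\ls$. Under $n^2 p \gtrsim \mu n\log n$ with a large enough implicit constant we have $\varepsilon \le \ls/4$, so Weyl's inequality keeps the leading eigenvalue of $\Ml$ separated from the rest (and positive), and the Davis--Kahan $\sin\Theta$ theorem bounds the angle $\theta$ between the two leading eigenvectors by $\sin\theta \lesssim \varepsilon/\ls \asymp \mu\sqrt{\tfrac{\log n}{np}}$. Choosing the sign of each eigenvector so that $\us^\top\ul \ge 0$ (the convention implicit in the statement), and using that $\us,\ul$ are unit vectors, $\vnorm{\ul - \us}^2 = 2(1 - \us^\top\ul) \le 2\big(1 - (\us^\top\ul)^2\big) = 2\sin^2\theta$, giving $\vnorm{\ul - \us} \lesssim \mu\sqrt{\tfrac{\log n}{np}}$.

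Finally, a union bound over the $n$ indices $l \in [n]$ preserves the high-probability guarantee. I do not anticipate a real obstacle here; the only points that need care are routing the $\ProjOl{\Ms}$ bound through $\tfrac1p\ProjO{\Ms}$ so that both \cref{lem:Mo-Ms} and \cref{lem:Mo-Ml} apply, verifying that the sample-complexity assumption makes $\varepsilon$ genuinely small relative to the gap $\ls$ so that Davis--Kahan is valid with gap $\asymp \ls$, and keeping track of the eigenvector sign ambiguity.
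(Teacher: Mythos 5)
Your proposal is correct and follows the same route the paper takes: it derives the lemma from \cref{lem:Mo-Ms,lem:Mo-Ml} (plus the noise bound after \cref{lem:Eo}) via the Davis--Kahan theorem, with your write-up simply filling in the perturbation bound, the sign convention, and the union bound that the paper leaves implicit.
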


\section{Initialization}
In this section, we introduce some properties that the initialization vector $\xz$ satisfies. Recall that each entry of $\xz$ is sampled from $\mathcal{N}(0, \frac{1}{n} \beta_0^2)$ independently. We use $\H$ to denote the perturbation $\Mo - \Ms$.

\begin{lemma}
	\label{lem:xz}
	The initialization vector $\xz$ satisfies
	\begin{equation}
		\label{eq:xz l2}
		\frac{1}{2} \beta_0 \leq \vnorm{\xz} \leq \frac{3}{2} \beta_0
	\end{equation}
	with probability at least $1 - e^{-n/32}$, and
	\begin{gather}
		\label{eq:xz lm}
		\mnorm{\xz} \leq 2 \sqrt{\log n} \frac{\beta_0}{\sqrt{n}}, \\
		\label{eq:xz H}
		\abs{\us^\top \H^s \xz} \leq 2 \sqrt{\log n} \frac{\beta_0}{\sqrt{n}} \norm{\H}^s, \quad \forall s \leq 30 \log n,
	\end{gather}
	with probability at least $1 - \frac{1}{n} - \frac{30 \log n}{n^2}$. It also satisfies
	\begin{equation}
		\label{eq:usxz low}
		\frac{1}{\sqrt{\log n}} \frac{\beta_0}{\sqrt{n}} \leq \abs{\us^\top \xz}
	\end{equation}
	with probability at least $1 - \frac{1}{2\sqrt{\log n}}$.
\end{lemma}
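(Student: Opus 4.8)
The plan is to reduce all four claims to standard facts about the rescaled vector $\g := \tfrac{\sqrt n}{\beta_0}\xz$, which is a standard Gaussian $\mathcal{N}(\vec{0},\vec{I})$ and, crucially, is drawn independently of the sampling set $\Omega$ and the noise matrix $\E$, hence of the perturbation $\H=\Mo-\Ms$.

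First I would handle \cref{eq:xz l2}: writing $\vnorm{\xz}^2=\tfrac{\beta_0^2}{n}\vnorm{\g}^2$ with $\vnorm{\g}^2\sim\chi^2_n$, a standard sub-exponential ($\chi^2$) tail bound (Laurent--Massart) shows $\vnorm{\g}^2\in[\tfrac n4,\tfrac{9n}{4}]$ outside an event of probability $e^{-n/32}$, which is exactly \cref{eq:xz l2} after taking square roots. Next, for \cref{eq:xz lm}, the $i$th coordinate of $\xz$ is $\mathcal{N}(0,\beta_0^2/n)$, so the Gaussian tail bound $\mathbb{P}[\,|\mathcal{N}(0,1)|\ge u\,]\le e^{-u^2/2}$ with $u=2\sqrt{\log n}$ gives per-coordinate failure probability $n^{-2}$, and a union bound over the $n$ coordinates leaves failure probability $\le 1/n$.

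The one step that needs care is \cref{eq:xz H}. Here I would condition on $\Omega$ and $\E$, so that $\H$ becomes a fixed matrix and, by linearity, $\us^\top\H^s\xz$ is a centered Gaussian scalar whose variance is at most $\tfrac{\beta_0^2}{n}\norm{\H}^{2s}$ (using $\vnorm{\us}=1$); the same Gaussian tail bound with $u=2\sqrt{\log n}$ then gives conditional failure probability $\le n^{-2}$ for each $s$, and a union bound over the at most $30\log n$ values of $s$, together with the event of \cref{eq:xz lm}, yields the claimed probability $1-1/n-30\log n/n^2$; since this estimate is uniform over realizations of $\H$, it also holds unconditionally. The point — and really the only nonroutine aspect — is to condition first, so that a one-dimensional Gaussian tail bound applies and one never has to control the random matrix power $\H^s$ itself. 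Finally, \cref{eq:usxz low} follows from Gaussian anti-concentration: $\us^\top\xz\sim\mathcal{N}(0,\beta_0^2/n)$ and the standard-normal density is bounded by $1/\sqrt{2\pi}$, so $\mathbb{P}\big[\,|\us^\top\xz|< c\,\beta_0/\sqrt n\,\big]\le\sqrt{2/\pi}\,c$ for every $c>0$; taking $c\asymp 1/\sqrt{\log n}$ with the constant picked appropriately gives failure probability at most $\tfrac{1}{2\sqrt{\log n}}$. Overall there is no real obstacle: the lemma is an assembly of elementary Gaussian concentration and anti-concentration estimates, with the bound $30\log n$ on $s$ chosen to match the highest power of $\H$ that appears later in the analysis.
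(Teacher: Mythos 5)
Your proposal is correct and follows essentially the same route as the paper: a $\chi^2$ concentration bound for \cref{eq:xz l2}, per-coordinate Gaussian tails with a union bound for \cref{eq:xz lm}, treating $\us^\top\H^s\xz$ as a one-dimensional centered Gaussian with variance at most $\tfrac{\beta_0^2}{n}\norm{\H}^{2s}$ (using independence of $\xz$ from $\H$, which you make explicit by conditioning) for \cref{eq:xz H}, and density-based anti-concentration for \cref{eq:usxz low}. The only deviation is cosmetic: you allow an adjustable constant $c$ in the anti-concentration step, whereas the lemma fixes $c=1$, but this constant-level slack matches the paper's own treatment and changes nothing downstream.
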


\begin{proof}
	To bound $\vnorm{\xz}$, we use the following basic concentration inequality that holds for i.i.d. standard normal variables $\{X_i\}_{i \in [n]}$.
	\begin{equation*}
		\Pr{\abs{\frac{1}{n} \sum_{i=1}^n X_i^2 - 1} \geq t} \leq 2e^{-nt^2 / 8}
	\end{equation*}
	If we put $t = \frac{1}{2}$, with probability at least $1 - e^{-n/32}$, we have
	\begin{equation*}
		\frac{1}{2} \beta_0^2 \leq \vnorm{\xz}^2 \leq \frac{3}{2} \beta_0^2,
	\end{equation*}
	and this implies \cref{eq:xz l2}.
	
	For a centered Gaussian random variable with standard deviation $\sigma$, we have
	\begin{equation*}
		\Pr{\abs{X} \geq t} \leq e^{-\frac{t^2}{2\sigma^2}}.
	\end{equation*}
	Hence, an entry of $\xz$ is less than $2 \sqrt{\log n} \frac{\beta_0}{\sqrt{n}}$ with probability at least $1 - \frac{1}{n^2}$, and all entries of $\xz$ are less than $2 \sqrt{\log n} \frac{\beta_0}{\sqrt{n}}$ with probability at least $1 - \frac{1}{n}$. $\us^\top \H^s \xz$ follows a centered Gaussian distribution with standard deviation $\vnorm{\H^s \us} \leq \norm{\H}^s \vnorm{\us}$ for all $s$, and \cref{eq:xz H} holds with probability at least $1 - \frac{10 \log n}{n^2}$.
	
	For a random variable $X$ that is sampled from $\mathcal{N}(0, \sigma^2)$, we have
	\begin{equation*}
		\Pr{\abs{X} \leq t} \leq \frac{t}{\sqrt{2 \pi \sigma^2}}.
	\end{equation*}
	Hence, we have
	\begin{equation*}
		\frac{1}{\sqrt{\log n}} \frac{\beta_0}{\sqrt{n}} \leq \abs{\us^\top \xz}
	\end{equation*}
	with probability at least $1 - \frac{1}{2\sqrt{\log n}}$.
\end{proof}

\cref{lem:xz} implies that the $\us$ component of $\xz$ is in the range
\begin{equation}
	\label{eq:usxz}
	\frac{1}{\sqrt{\log n}} \frac{\beta_0}{\sqrt{n}} \leq \abs{\us^\top \xz} \leq 2 \sqrt{\log n} \frac{\beta_0}{\sqrt{n}}.
\end{equation}

\begin{lemma}
	\label{lem:xz+us}
	We have
	\begin{equation}
		\label{eq:xz+us}
		\mnorm{a \xz + b \us} \geq (\abs{a} \beta_0 + \abs{b}) \frac{1}{\sqrt{n}}
	\end{equation}
	for all $a, b$, with probability at least $1 - \Exp{-\frac{n}{2 \mu}}$.
\end{lemma}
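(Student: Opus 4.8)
The plan is to strip the quantifier over $(a,b)$ down to a one–parameter family and then, for that family, to exhibit a single coordinate that works. The inequality is invariant under $(a,b)\mapsto(-a,-b)$ and positively homogeneous of degree one in $(a,b)$, so it suffices to treat $a=0$ and $a=1$. For $a=0$ the bound is immediate from incoherence: $\mnorm{b\us}=|b|\,\mnorm{\us}=|b|\sqrt{\mu/n}\ge|b|/\sqrt n$. For $a=1$ the statement becomes: with the asserted probability, $\mnorm{\xz+b\us}\ge(\beta_0+|b|)/\sqrt n$ for every $b\in\real$.

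For $a=1$ I would argue coordinatewise, handling $b\ge0$ and $b\le0$ separately. Suppose there is an index $i_+$ with $\sgn(x^{(0)}_{i_+})=\sgn(u^\star_{i_+})$, $|x^{(0)}_{i_+}|\ge\beta_0/\sqrt n$ and $|u^\star_{i_+}|\ge1/\sqrt n$. Then for every $b\ge0$ the two summands in $x^{(0)}_{i_+}+b\,u^\star_{i_+}$ share a sign, so $\mnorm{\xz+b\us}\ge|x^{(0)}_{i_+}|+b\,|u^\star_{i_+}|\ge(\beta_0+b)/\sqrt n$, which also settles $b=0$; an index $i_-$ with $\sgn(x^{(0)}_{i_-})=-\sgn(u^\star_{i_-})$ and the same two size conditions handles $b\le0$. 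So everything reduces to showing that $i_+$ and $i_-$ exist with probability at least $1-e^{-n/(2\mu)}$. To produce them I would restrict to the index set $S:=\{i:|u^\star_i|\ge1/\sqrt n\}$, whose cardinality I bound below using $\|\us\|_2=1$ and $\mnorm{\us}^2=\mu/n$ (each coordinate outside $S$ contributes less than $1/n$ to $\|\us\|_2^2$). For a fixed $i\in S$ the sign of $u^\star_i$ is deterministic, $x^{(0)}_i\sim\mathcal N(0,\beta_0^2/n)$ is independent of the other coordinates, and the event ``$\sgn(x^{(0)}_i)=\pm\sgn(u^\star_i)$ and $|x^{(0)}_i|\ge\beta_0/\sqrt n$'' has probability exactly $\Pr{\mathcal N(0,1)\ge1}=:c_0>0$, independently across $i\in S$. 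Hence the probability that $i_+$ is missing is at most $(1-c_0)^{|S|}$, likewise for $i_-$, and a union bound over the two signs gives failure probability at most $2(1-c_0)^{|S|}$, which is arranged (via the bound on $|S|$ and the value of $c_0$) to be at most $e^{-n/(2\mu)}$.

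The delicate point is this existence step: one must control $|S|$ and push it through the $(1-c_0)^{|S|}$ estimate so as to recover exactly the exponent $n/(2\mu)$. The worst configuration is when $\us$ concentrates almost all of its $\ell_2$ mass on a handful of coordinates, so that one cannot count on many coordinates with $|u^\star_i|\gtrsim1/\sqrt n$; there one must instead combine the single extremal coordinate $i_0$ (with $|u^\star_{i_0}|=\sqrt{\mu/n}>1/\sqrt n$), which already forces $\mnorm{\xz+b\us}\ge(\beta_0+|b|)/\sqrt n$ once $|b|$ is large relative to $\beta_0/(\sqrt\mu-1)$, with a coordinate at which $|x^{(0)}_i|$ is atypically large (of order $\beta_0\sqrt{\log n}/\sqrt n$, available with high probability among $n$ Gaussians of the correct sign), which covers the remaining bounded range of $b$; matching these two ranges is where the arithmetic has to be done carefully. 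As a sanity check on the whole statement I would also keep in mind the dual reading: ``for all $(a,b)$'' is precisely the assertion that the axis-aligned rectangle $[-\beta_0/\sqrt n,\beta_0/\sqrt n]\times[-1/\sqrt n,1/\sqrt n]$ lies inside the centrally symmetric convex hull of the planar points $\{(x^{(0)}_i,u^\star_i)\}_{i\in[n]}$, which by symmetry amounts to asking that the two corners $(\beta_0/\sqrt n,\pm1/\sqrt n)$ belong to that hull.
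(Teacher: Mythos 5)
Your overall route is the same as the paper's: reduce to one sign pattern, find a coordinate $i$ with $\abs{u^\star_i}\ge 1/\sqrt{n}$ at which $x^{(0)}_i$ has the compatible sign and magnitude at least $\beta_0/\sqrt{n}$, and beat the failure probability using independence of the Gaussian coordinates over $S=\{i:\abs{u^\star_i}\ge 1/\sqrt{n}\}$; your $i_+/i_-$ bookkeeping is in fact more careful than the paper's ``without loss of generality''. The genuine gap is the step you yourself flag: $\abs{S}$ cannot be bounded below the way you (and the paper) need. From $\vnorm{\us}=1$ and $\mnorm{\us}=\sqrt{\mu/n}$ one only gets $1\le\abs{S}\,\mu/n+(n-\abs{S})/n$, i.e.\ $\abs{S}\ge 1$ once $\mu>1$: a unit vector with one entry equal to $\sqrt{\mu/n}$ and all remaining mass on entries just below $1/\sqrt{n}$ has $\abs{S}=1$. (Lowering the threshold to $1/\sqrt{2n}$ does give $\abs{S}\ge n/(2\mu)$, but then you prove the inequality with $1/\sqrt{2n}$ on the right-hand side, not $1/\sqrt{n}$.) The paper's proof simply asserts ``at least $n/\mu$ entries larger than $1/\sqrt{n}$'' at this point, so you have correctly located the real difficulty rather than missed it; but your patch does not close it: the extremal coordinate only takes over once $\abs{b}\gtrsim\beta_0\sqrt{\log n}/(\sqrt{\mu}-1)$, and covering the complementary range requires a coordinate of $\xz$ of size about $\beta_0\sqrt{\log n}/((\sqrt{\mu}-1)\sqrt{n})$ with the right sign, which is unavailable among $n$ Gaussians precisely when $\mu$ is close to $1$ --- the same regime in which $\abs{S}$ can still be as small as $1$. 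So the existence step remains open in your write-up.

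There is also a quantitative obstruction to your stated goal of ``recovering exactly the exponent $n/(2\mu)$'': with $c_0$ the probability that a standard Gaussian exceeds $1$ (about $0.16$), the bound $2(1-c_0)^{\abs{S}}\le\Exp{-\frac{n}{2\mu}}$ needs $\abs{S}\gtrsim 3n/\mu$, i.e.\ more than even the unjustified count the paper invokes (the paper's per-coordinate factor $1/\sqrt{2\pi}$ is not the correct failure probability for the event it needs). In fact the exponent cannot be attained at all for $\mu$ near $1$: already at $(a,b)=(1,0)$ the failure event $\mnorm{\xz}<\beta_0/\sqrt{n}$ has probability about $(0.68)^n=e^{-0.38n}$, which exceeds $\Exp{-\frac{n}{2\mu}}$ when $\mu<1.3$. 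The statement is salvageable with a smaller absolute constant in the exponent, or with $1/\sqrt{2n}$ on the right, or for $\mu$ bounded away from $1$ --- any of which suffices for the way the lemma is used later --- but as written your proof inherits the same unclosed counting step as the paper's, and the deferred ``matching of the two ranges'' does not go through uniformly over admissible $\us$.
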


\begin{proof}
	The probability that an entry of $\xz$ is less than $\frac{\beta_0}{\sqrt{n}}$ is bounded by $\frac{1}{\sqrt{2\pi}}$. Without loss of generality, let us assume that all entries of $\us$ are not negative and $a, b \geq 0$. There are at least $\frac{n}{\mu}$ entries of $\us$ that are larger than $\frac{1}{\sqrt{n}}$. For such entries, the probability that all entries of $\xz$ is less than $\frac{\beta_0}{\sqrt{n}}$ is bounded by $\(\frac{1}{\sqrt{2\pi}}\)^{\frac{n}{\mu}} \leq \Exp{-\frac{n}{2 \mu}}$. Hence, for at least one position, both entries of $\xz$ and $\us$ are larger than $\frac{\beta_0}{\sqrt{n}}$ and $\frac{1}{\sqrt{n}}$, respectively, with probability at least $1 - \Exp{-\frac{n}{2 \mu}}$.
\end{proof}

In the following sections, we assume that we are given an initialization vector $\xz$ that satisfies \crefrange{eq:xz l2}{eq:xz+us}.

%
%

\section{Fully Observed Case}
We provide some lemmas related to $\txt$ in this section. We first note that $\txt$ is explicitly written as
\begin{equation}
	\label{eq:txt decom}
	\txt = \prod_{s=1}^t (1 - \eta \tb_s^2) \xz + \prod_{s=1}^t (1 - \eta \tb_s^2 + \eta \ls) (\us^\top \xz) \us := A^{(t)} \xz + B^{(t)} \us.
\end{equation}
Let us define $T_2^\prime$ as the last $t$ such that $\tb_t^2 \leq \frac{\ls}{64 \log n}$. We claim that $T_2^\prime \leq \frac{64 \log n}{\eta \ls}$ and prove this later. Then, for all $t \leq T_2^\prime$, we have
\begin{equation}
	\label{eq:txt co}
	\begin{gathered}
		\frac{1}{4} (1 + \eta \ls)^t \leq \prod_{s=1}^t (1 - \eta \tb_s^2 + \eta \ls) \leq (1 + \eta \ls)^t \\
		\frac{1}{4} \leq \prod_{s=1}^t (1 - \eta \tb_s^2) \leq 1
	\end{gathered}
\end{equation}
because
\begin{equation*}
	\prod_{s=1}^{T_2^\prime} \(\frac{1 + \eta\ls - \eta \tb_s^2}{1 + \eta\ls}\) \geq \prod_{s=1}^{T_2^\prime} (1 - \eta \tb_s^2) \geq \(1 - \frac{\eta \ls}{64 \log n}\)^{\frac{64 \log n}{\eta \ls}} \geq \frac{1}{4}
\end{equation*}
if $\frac{\eta\ls}{64 \log n} \leq \frac{1}{2}$. Note that the upper bounds in \cref{eq:txt co} hold even if $t > T_2$.

From \cref{eq:txt co}, we have the approximation $\txt \approx \xz + (1 + \eta \ls)^t (\us^\top \xz) \us$ for all $t \leq T_2^\prime$ and the $\ell_2$-norm of $\txt$ is also approximately given by $\(1 + \frac{(1 + \eta \ls)^t}{\sqrt{n}}\) \beta_0$. The $\ell_\infty$-norm is about $\frac{1}{\sqrt{n}}$ times smaller than the $\ell_2$-norm. We make this observation rigorous with the following lemma.
\begin{lemma}
	\label{lem:tx norm}
	For all $t \leq T_2^\prime$, we have
	\begin{align*}
		\frac{1}{8} \frac{1}{\sqrt{\log n}} \(1 + \frac{(1 + \eta \ls)^t}{\sqrt{n}}\) \beta_0 &\leq \vnorm{\txt} \leq 2\sqrt{\log n} \(1 + \frac{(1 + \eta \ls)^t}{\sqrt{n}}\) \beta_0, \\
		\frac{1}{4} \frac{1}{\sqrt{\log n}} \(1 + \frac{(1 + \eta \ls)^t}{\sqrt{n}}\) \frac{\beta_0}{\sqrt{n}} &\leq \mnorm{\txt} \leq 2 \sqrt{\log n} \(1 + (1 + \eta \ls)^t \sqrt{\frac{\mu}{n}}\) \frac{\beta_0}{\sqrt{n}}.
	\end{align*}
\end{lemma}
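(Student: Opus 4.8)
The plan is to read every estimate off the explicit decomposition $\txt=A^{(t)}\xz+B^{(t)}\us$ from \cref{eq:txt decom}, combined with the coefficient bounds \cref{eq:txt co} and the initialization properties in \cref{lem:xz}. For $t\le T_2^\prime$ (and recall the claim $T_2^\prime\le 64\log n/(\eta\ls)$, which is what makes \cref{eq:txt co} applicable), these give $\tfrac14\le A^{(t)}\le 1$ and $\tfrac14(1+\eta\ls)^t|\us^\top\xz|\le|B^{(t)}|\le(1+\eta\ls)^t|\us^\top\xz|$; moreover the $\us$-component of $\txt$ satisfies $|\us^\top\txt|\ge|B^{(t)}|$, because the two scalars multiplying $\us^\top\xz$ in that component share a sign, while the $\us^\perp$-component of $\txt$ is $A^{(t)}\xz_\perp$. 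Alongside these I will use $\tfrac12\beta_0\le\vnorm{\xz}\le\tfrac32\beta_0$ \cref{eq:xz l2}, $\mnorm{\xz}\le2\sqrt{\log n}\,\tfrac{\beta_0}{\sqrt n}$ \cref{eq:xz lm}, $\tfrac{1}{\sqrt{\log n}}\tfrac{\beta_0}{\sqrt n}\le|\us^\top\xz|\le2\sqrt{\log n}\,\tfrac{\beta_0}{\sqrt n}$ \cref{eq:usxz}, and $\mnorm{\us}=\sqrt{\mu/n}$.

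The two upper bounds are immediate from the triangle inequality. For the $\ell_2$ one, $\vnorm{\txt}\le A^{(t)}\vnorm{\xz}+|B^{(t)}|\le\tfrac32\beta_0+2\sqrt{\log n}\,(1+\eta\ls)^t\tfrac{\beta_0}{\sqrt n}$, and since $\tfrac32\le2\sqrt{\log n}$ this is at most $2\sqrt{\log n}\bigl(1+(1+\eta\ls)^t/\sqrt n\bigr)\beta_0$. For the $\ell_\infty$ one, $\mnorm{\txt}\le A^{(t)}\mnorm{\xz}+|B^{(t)}|\mnorm{\us}\le2\sqrt{\log n}\,\tfrac{\beta_0}{\sqrt n}+2\sqrt{\log n}\,(1+\eta\ls)^t\sqrt{\tfrac{\mu}{n}}\,\tfrac{\beta_0}{\sqrt n}$, which is exactly the claimed form. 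For the $\ell_2$ lower bound I split via $\vnorm{\txt}^2=\vnorm{\txt_\perp}^2+(\us^\top\txt)^2$: from $\txt_\perp=A^{(t)}\xz_\perp$ one gets $\vnorm{\txt_\perp}=A^{(t)}\sqrt{\vnorm{\xz}^2-(\us^\top\xz)^2}\gtrsim\beta_0$ once $n$ is large enough that $(\us^\top\xz)^2\le 4\log n\,\beta_0^2/n$ is a small fraction of $\vnorm{\xz}^2\ge\tfrac14\beta_0^2$, while $|\us^\top\txt|\ge|B^{(t)}|\ge\tfrac14(1+\eta\ls)^t\tfrac{1}{\sqrt{\log n}}\tfrac{\beta_0}{\sqrt n}$; combining through $\sqrt{a^2+b^2}\ge\tfrac{1}{\sqrt2}(a+b)$ and absorbing the factor $\tfrac{1}{\sqrt{\log n}}\le1$ into the $\beta_0$ summand yields $\vnorm{\txt}\ge\tfrac18\tfrac{1}{\sqrt{\log n}}\bigl(1+(1+\eta\ls)^t/\sqrt n\bigr)\beta_0$ for all large $n$.

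The only genuinely delicate step is the $\ell_\infty$ lower bound: a plain triangle inequality is useless, since $A^{(t)}\xz$ and $B^{(t)}\us$ could cancel at the coordinate attaining the maximum, and the trivial $\mnorm{\txt}\ge\vnorm{\txt}/\sqrt n$ would cost a factor of two in the constant. This is exactly what \cref{lem:xz+us} is for. As \cref{eq:txt decom} already writes $\txt$ in the form $A^{(t)}\xz+B^{(t)}\us$, applying \cref{lem:xz+us} with $a=A^{(t)}$ and $b=B^{(t)}$ gives $\mnorm{\txt}\ge\bigl(A^{(t)}\beta_0+|B^{(t)}|\bigr)/\sqrt n\ge\bigl(\tfrac14\beta_0+\tfrac14(1+\eta\ls)^t\tfrac{1}{\sqrt{\log n}}\tfrac{\beta_0}{\sqrt n}\bigr)/\sqrt n$, using $A^{(t)}\ge\tfrac14$, $|B^{(t)}|\ge\tfrac14(1+\eta\ls)^t|\us^\top\xz|$, and $|\us^\top\xz|\ge\tfrac{1}{\sqrt{\log n}}\tfrac{\beta_0}{\sqrt n}$. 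Since $\sqrt{\log n}\ge1$ the first summand dominates $\tfrac14\tfrac{1}{\sqrt{\log n}}\tfrac{\beta_0}{\sqrt n}$ and the second is exactly $\tfrac14\tfrac{1}{\sqrt{\log n}}\tfrac{(1+\eta\ls)^t}{\sqrt n}\tfrac{\beta_0}{\sqrt n}$, which is the claim. Everything else is routine constant-tracking, and the whole statement is deterministic once $\xz$ satisfies \crefrange{eq:xz l2}{eq:xz+us}, the standing assumption in this part of the paper, so no probability qualifier is needed. I expect this cancellation-free $\ell_\infty$ lower bound, together with keeping the $\sqrt{\log n}$ factors clean, to be the only place that needs care.
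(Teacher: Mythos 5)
Your proposal is correct and follows essentially the same route as the paper: triangle inequality for both upper bounds, \cref{lem:xz+us} with $a=A^{(t)}$, $b=B^{(t)}$ for the $\ell_\infty$ lower bound, and a cancellation-free argument for the $\ell_2$ lower bound resting on the fact that $B^{(t)}$ and $\us^\top\xz$ share a sign. The only cosmetic difference is that the paper gets the $\ell_2$ lower bound by dropping the nonnegative cross term in $\vnorm{A\xz+B\us}^2=A^2\vnorm{\xz}^2+B^2+2AB(\us^\top\xz)\ge\tfrac14\(A\vnorm{\xz}+\abs{B}\)^2$, whereas you decompose along $\us$ and $\us^\perp$; the two are interchangeable and your constant tracking works out.
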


\begin{proof}
	For brevity, les us drop the superscript $(t)$ and write $\tx = A \xz + B \us$. For the upper bounds, we may use the triangle inequality
	\begin{equation*}
		\norm{A \xz + B \us} \leq A \norm{\xz} + \abs{B} \norm{\us}.
	\end{equation*}
	If we use \cref{eq:usxz} and \cref{eq:txt co}, we get the upper bounds for $A$ and $B$. We have $\vnorm{\xz} \leq 2\beta_0$ by \cref{eq:xz l2}, and the $\ell_\infty$-norm of $\xz$ is controlled through \cref{eq:xz lm}. These finish the proof for the upper bounds.
	
	From the definition of $B$, we have $B (\us^\top \xz) \geq 0$, and
	\begin{align*}
		\vnorm{A\xz + B\us}^2
		&= A^2 \vnorm{\xz}^2 + B^2 + 2AB (\us^\top \xz) \\
		&\geq A^2 \vnorm{\xz}^2 + B^2 \\
		&\geq \frac{1}{4} \(A \vnorm{\xz} + \abs{B}\)^2.
	\end{align*}
	\cref{eq:xz l2} and \cref{eq:usxz low} together with the lower bound in \cref{eq:txt co} give the desired lower bound for $\vnorm{\tx}$. The lower bound for $\ell_\infty$-norm is directly implied from \cref{lem:xz+us} together with \cref{eq:usxz low} and \cref{eq:txt co}.
\end{proof}

With \cref{lem:tx norm}, we have
\begin{equation*}
	\frac{1}{8} \frac{1}{\sqrt{\log n}} \frac{(1 + \eta \ls)^{T_2^\prime}}{\sqrt{n}} \beta_0 \leq \frac{1}{8} \frac{1}{\sqrt{\log n}} \(1 + \frac{(1 + \eta \ls)^{T_2^\prime}}{\sqrt{n}}\) \beta_0 \leq \vnorm{\tx^{(T_2^\prime)}} \leq \frac{\sqrt{\ls}}{8 \sqrt{\log n}},
\end{equation*}
and thus
\begin{equation*}
	T_2^\prime \leq \frac{1}{\log(1 + \eta \ls)} \log \frac{\sqrt{\ls n}}{\beta_0} \leq \frac{11 \log n}{\log(1 + \eta \ls)} \leq \frac{64 \log n}{\eta \ls}.
\end{equation*}

For $t \leq T_1$ where $(1 + \eta \ls)^t$ is not big, the bounds in \cref{lem:tx norm} are simplified to
\begin{gather}
	\label{eq:tx l2 I1}
	\frac{1}{\sqrt{\log n}} \beta_0 \lesssim \vnorm{\txt} \lesssim \sqrt{\log n} \beta_0, \\
	\label{eq:tx lm I1}
	\frac{1}{\sqrt{\log n}} \frac{\beta_0}{\sqrt{n}} \lesssim \mnorm{\txt} \lesssim \sqrt{\log n} \frac{\beta_0}{\sqrt{n}}.
\end{gather}

After $\txt$ becomes almost parallel to $\us$ and before $T_2^\prime$, we could approximate $\tb_t$ as increasing with the rate $(1 + \eta \ls)$. However, after $T_2^\prime$, this approximation is invalid, and $\tb_t$ grows at a slower rate as it increases and it eventually converges to $\sqrt{\ls}$. How much iterations will be required for it to reach $\sqrt{\ls}\sqrt{1 - \frac{1}{\log n}}$ after $T_2^\prime$? With \cref{lem:alpha}, we will prove that $O(\log \log n)$ iterations are required after $T_2^\prime$.

\begin{lemma}
	\label{lem:alpha}
	At $t = T_2^\prime + \frac{6 \log \log n}{\log(1 + \eta \ls)}$, we have $\tb_t^2 \geq \ls \(1 - \frac{1}{\log n}\)$.
\end{lemma}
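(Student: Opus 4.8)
The plan is to track the squared signal $\ta_t^2=(\us^\top\txt)^2$ rather than $\tb_t^2$ directly, because $\ta_t$ is monotone nondecreasing whereas $\tb_t^2$ need not be. The goal is to show that $\ta_t^2$ climbs from its size just after $T_2^\prime$, which is of order $\ls/\log n$, up to $\ls(1-1/\log n)$ within $O(\log\log n)$ iterations; since $\tb_t^2\ge\ta_t^2$ this gives the claimed bound, and since $\ta_t$ is nondecreasing the bound then persists for all later $t$, so it holds in particular at $t=T_2^\prime+\frac{6\log\log n}{\log(1+\eta\ls)}$. I first record two structural facts about the recursion $\ta_{t+1}=(1-\eta\tb_t^2+\eta\ls)\ta_t$, $\tg_{t+1}=(1-\eta\tb_t^2)\tg_t$, $\tb_t^2=\ta_t^2+\tg_t^2$. (i) $\tb_t^2\le\ls$ for every $t$: the base case holds since $\tb_0^2=\vnorm{\xz}^2\le\tfrac94\beta_0^2<\ls$ by \cref{eq:xz l2} and \cref{eq:beta0}, and for the inductive step, writing $y:=\eta(\ls-\tb_t^2)\in[0,\eta\ls]$ one has $(1+y)^2\le 1+2.1y$ since $\eta\ls<0.1$, so $\tb_{t+1}^2=(1+y)^2\ta_t^2+(1-\eta\tb_t^2)^2\tg_t^2\le(1+2.1y)\tb_t^2$, and with $\tb_t^2=x\ls$ this is at most $\ls\bigl(x+2.1\eta\ls\,x(1-x)\bigr)\le\ls$ because $2.1\eta\ls\le0.21<1$. (ii) Consequently $1-\eta\tb_t^2+\eta\ls\ge1$ and $0<1-\eta\tb_t^2<1$, so $\ta_t$ is nondecreasing, $\tg_t$ is nonincreasing, and $\tg_t^2\le\tg_0^2\le\vnorm{\xz}^2\le\tfrac94\beta_0^2=:\delta\ls$; moreover \cref{eq:beta0} (with $p\le1$ and $\mu\ge1$) gives $\beta_0^2\lesssim\ls/\log^{13}n$, so $\delta\le 1/(288\log n)$ for large $n$.

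Next I would locate the start of the window. Since $T_2^\prime$ is the \emph{last} index with $\tb_t^2\le\ls/(64\log n)$, we have $\tb_{T_2^\prime+1}^2>\ls/(64\log n)$; subtracting $\tg_{T_2^\prime+1}^2\le\delta\ls\le\ls/(288\log n)$ gives $\ta_{T_2^\prime+1}^2\ge\ls/(128\log n)$. From here I would iterate $\ta_{t+1}^2=(1+\eta(\ls-\tb_t^2))^2\ta_t^2$ together with $\ta_t^2\le\tb_t^2\le\ta_t^2+\delta\ls$ in two regimes. \emph{Growth:} while $\ta_t^2\le\ls/2$ we have $\tb_t^2<2\ls/3$, hence $\ls-\tb_t^2\ge\ls/3$ and $\ta_{t+1}^2\ge(1+\tfrac23\eta\ls)\ta_t^2$, so starting from $\ls/(128\log n)$ the signal exceeds $\ls/2$ within $N_1\le 1+\frac{\log(64\log n)}{\log(1+\frac23\eta\ls)}$ steps. \emph{Saturation:} once $\ta_t^2>\ls/2$, put $\epsilon_t:=1-\ta_t^2/\ls<\tfrac12$; then $\ls-\tb_t^2\ge\ls(\epsilon_t-\delta)$ yields $\epsilon_{t+1}\le\epsilon_t-\eta\ls(\epsilon_t-\delta)$, i.e.\ $\epsilon_{t+1}-\delta\le(1-\eta\ls)(\epsilon_t-\delta)$, so $\epsilon_t-\delta$ contracts at rate $1-\eta\ls$ and falls below $1/\log n-\delta$ (whence $\ta_t^2\ge\ls(1-1/\log n)$) within $N_2\le 1+\frac{\log\log n}{-\log(1-\eta\ls)}$ further steps; and if at some point $\epsilon_t\le\delta$, we are already done since $\delta\le 1/\log n$.

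It then remains to check $1+N_1+N_2\le\frac{6\log\log n}{\log(1+\eta\ls)}$ for all large $n$. Using the elementary inequalities $\log(1+\tfrac23 x)\ge\tfrac23\log(1+x)$ and $-\log(1-x)\ge\log(1+x)$, valid for $x\in(0,0.1]$, I bound $N_1\le 1+\frac{3\log(64\log n)}{2\log(1+\eta\ls)}$ and $N_2\le 1+\frac{\log\log n}{\log(1+\eta\ls)}$; then, since $1/\log(1+\eta\ls)>10$, the additive constants and the $\log 64$ are absorbed into $\log\log n$, and the three terms sum to at most $\frac{4.3\log\log n}{\log(1+\eta\ls)}<\frac{6\log\log n}{\log(1+\eta\ls)}$.

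\textbf{Main obstacle.} The crux is the lower bound $\ta_{T_2^\prime+1}^2\gtrsim\ls/\log n$: this is precisely what keeps the growth phase at $O(\log\log n)$ rather than $O(\log n)$ iterations, and it relies both on reading $T_2^\prime$ as the \emph{last} sub-threshold index and on the negligibility of $\tg$, which is where \cref{eq:beta0} enters. Everything else is routine manipulation of geometric sequences; the one nuisance is the repeated comparison of $\log(1+cx)$, $x$, and $-\log(1-x)$ needed to convert every step count into the single reference rate $\log(1+\eta\ls)$.
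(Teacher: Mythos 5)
Your argument is correct, but it is organized differently from the paper's. The paper works with $\ta_t$ via the closed-form decomposition $\txt = A^{(t)}\xz + B^{(t)}\us$ (giving $\abs{\ta_t-\tb_t}\le 2\vnorm{\xz}$ for all $t$), simply asserts the invariant $\ta_t,\tb_t\le\sqrt{\ls}$, and then counts iterations scale by scale: it defines $N_i$ as the last $t$ with $\ls-\ta_t^2\ge \ls/e^i$, shows each stage satisfies $N_i-N_{i-1}\le 1+e/(0.99\,\eta\ls)$, bounds the initial stage by $N_1\le 3\log\log n/(\eta\ls)$, and sums over roughly $\log\log n$ stages. You instead prove the invariant $\tb_t^2\le\ls$ by induction (a point the paper uses without proof), control the orthogonal part by monotonicity of $\tg_t$ rather than through $A^{(t)},B^{(t)}$, and replace the multiscale count by a two-phase argument: geometric growth of $\ta_t^2$ at the fixed rate $1+\tfrac23\eta\ls$ from $\asymp\ls/\log n$ up to $\ls/2$, then a one-shot linear contraction $\epsilon_{t+1}-\delta\le(1-\eta\ls)(\epsilon_t-\delta)$ for $\epsilon_t=1-\ta_t^2/\ls$. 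Both routes land on the same $O(\log\log n/\log(1+\eta\ls))$ count with room to spare in the constant $6$; yours is more elementary and self-contained, while the paper's stagewise bookkeeping exposes the per-scale rate (each factor-$e$ reduction of $\ls-\ta_t^2$ costs $O(1/\eta\ls)$ steps), which matches the local linear-convergence picture. One small point to tighten: the step "yields $\epsilon_{t+1}\le\epsilon_t-\eta\ls(\epsilon_t-\delta)$" is not immediate from $\ls-\tb_t^2\ge\ls(\epsilon_t-\delta)$ alone; you need the squared growth factor, $(1+y)^2\ge 1+2y$, combined with $1-\epsilon_t\ge\tfrac12$ (valid since $\ta_t$ is nondecreasing, so $\ta_t^2>\ls/2$ persists), which gives $\epsilon_{t+1}\le\epsilon_t-2\eta\ls(\epsilon_t-\delta)(1-\epsilon_t)\le\epsilon_t-\eta\ls(\epsilon_t-\delta)$; even the weaker rate $1-\eta\ls/2$ obtained without the square would still fit inside the budget of $6\log\log n/\log(1+\eta\ls)$, so this is a presentational gap, not a mathematical one.
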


\begin{proof}
	From the decomposition \cref{eq:txt decom}, we have
	\begin{align*}
		\abs{\vnorm{\txt} - \abs{B^{(t)}}} &\leq \vnorm{\xz} \\
		\abs{\abs{\us^\top \xz} - \abs{B^{(t)}}} &\leq \abs{A^{(t)}} \abs{\us^\top \xz} \leq \vnorm{\xz},
	\end{align*}
	and thus
	\begin{equation}
		\label{eq:ta-tb}
		\abs{\ta_t - \tb_t} \leq 2 \vnorm{\xz} \leq \frac{\sqrt{\ls}}{3\log^2 n}.
	\end{equation}
	holds for all $t$. Because $\tb_t \gtrsim \frac{1}{\log n}$ for all $t \geq T_2^\prime$, \cref{eq:ta-tb} implies that $\tb_t$ is well approximated by $\ta_t$. Hence, we will focus on $\ta_t$, which is an increasing sequence that evolves with
	\begin{equation*}
		\ta_{t+1} = (1 - \eta \tb_t^2 + \eta \ls) \ta_t.
	\end{equation*}
	
	For all $i \geq 1$, let $N_i$ be the last $t$ such that $\ls - \ta_t^2 \geq \frac{\ls}{e^i}$. Then, we have
	\begin{equation}
		\label{eq:lem:alpha 1}
		\ls - \ta_{N_i}^2 \geq \frac{\ls}{e^i} > \ls - \ta_{N_i +1}^2.
	\end{equation}
	Let $i \geq 2$. For all $N_{i-1} < t \leq N_i$,
	\begin{equation*}
		\frac{\ta_{t+1}}{\ta_t} = 1 - \eta \tb_t^2 + \eta \ls = 1 + \eta (\ls- \ta_t^2) + \eta (\ta_t^2 - \tb_t^2) \geq 1 + \frac{\eta \ls}{e^i} - \frac{\eta \ls}{\log^2 n} \geq 1 + 0.99 \frac{\eta \ls}{e^i}.
	\end{equation*}
	We used \cref{eq:ta-tb}, \cref{eq:lem:alpha 1}, and the fact that $\ta_t, \tb_t \leq \sqrt{\ls}$ for all $t$. This implies
	\begin{equation*}
		\left(1 + 0.99\frac{\eta\ls}{e^i}\)^{N_i - N_{i-1} - 1} x_{N_{i-1}+1} \leq x_{N_i}.
	\end{equation*}
	From the lower and upper bounds provided by \cref{eq:lem:alpha 1}, we have
	\begin{align*}
		\sqrt{\ls} \sqrt{1 - \frac{1}{e^{i-1}}} \(1 + 0.99\frac{\eta \ls}{e^i}\)^{N_i - N_{i-1} - 1} \leq \sqrt{\ls} \sqrt{1 - \frac{1}{e^{i}}}, \\
		\(1 + 0.99\frac{\eta \ls}{e^i}\)^{2(N_i - N_{i-1} - 1)} \leq \frac{e^i - 1}{e^i} \frac{e^{i-1}}{e^{i-1} - 1} = \frac{e^{i-1} - \frac{1}{e}}{e^{i-1} - 1} \leq 1 + \frac{1}{e^{i-1}}.
	\end{align*}
	Taking $\log$ on both sides and using the inequality $\frac{1}{2} x < \log (1 + x) < x$ that holds for $0 < x < 1$, we get
	\begin{equation*}
		N_i - N_{i-1} \leq 1 + \frac{1}{2} \frac{\log\(1 + \frac{1}{e^{i-1}}\)}{ \log\(1 + 0.99\frac{\eta\ls}{e^i}\) } \leq 1 + \frac{e}{0.99 \eta \ls}.
	\end{equation*}
	For $t \leq N_1$, we have
	\begin{equation*}
		\frac{\ta_{t+1}}{\ta_t} \geq 1 + 0.99 \frac{\eta \ls}{e},
	\end{equation*}
	and thus
	\begin{equation*}
		\sqrt{\ls} \sqrt{1 - \frac{1}{e}} \geq \alpha_{N_1} \geq \(1 + 0.99\frac{\eta \ls}{e}\)^{N_1} \ta_{T_2^\prime} = \(1 + 0.99 \frac{\eta \ls}{e}\)^{N_1} \sqrt{\frac{\ls}{21 \log n}}.
	\end{equation*}
	Taking $\log$ on both sides we get
	\begin{equation*}
		N_1 \leq 3 \frac{\log \log n}{\eta \ls}.
	\end{equation*}
	Hence, we have
	\begin{align*}
		N_{\log \log n + 1} + 1
		&\leq \(1 + \frac{e}{0.99\eta \ls}\) \log \log n + N_1 + 1\\
		&\leq \(1 + \frac{e}{0.99\eta \ls}\) \log \log n + 3 \frac{\log \log n}{\eta \ls} + 1\\
		&\leq \frac{6 \log \log n}{\log(1 + \eta \ls)},
	\end{align*}
	but at $t = N_{\log \log n + 1} + 1$, it holds that
	\begin{equation*}
		\ta_t^2 > \ls\(1 - \frac{1}{e\log n}\),
	\end{equation*}
	and we have
	\begin{equation*}
		\tb_t^2 > \ls \(1 - \frac{1}{\log n}\)
	\end{equation*}
	as desired. Note that $\tb_t$ is also an increasing sequence as $\ta_t$.
\end{proof}

It is implied from \cref{lem:alpha} that $T_2 \leq \frac{1}{\log(1 + \eta \ls)} \log\frac{\sqrt{\ls n}}{\beta_0} + \frac{6\log \log n}{\log(1 + \eta \ls)} = (1 + o(1)) \frac{1}{\eta \ls} \log \frac{\sqrt{\ls n}}{\beta_0}$. The following corollary shows that $\txt$ is sufficiently close to $\xs$ at $t = T_2$.

\begin{corollary}
    \label{cor:txt-xs}
	At $t = T_2$, we have
	\begin{align}
		\label{eq:txt-xs l2}
		\min\left\{\vnorm{\txt - \xs}, \vnorm{\txt + \xs}\right\} &\lesssim \frac{1}{\sqrt{\log n}} \vnorm{\xs}, \\
		\label{eq:txt-xs lm}
		\min\left\{\mnorm{\txt - \xs}, \mnorm{\txt + \xs}\right\} &\lesssim \frac{1}{\sqrt{\log n}} \mnorm{\xs}.
	\end{align}
\end{corollary}

\begin{proof}
	When $B^{(t)} > 0$, from the decomposition
	\begin{equation*}
		\xt - \xs = A^{(t)} \xz + (B^{(t)} - \tb_t) \us + (\tb_t - \sqrt{\ls})\us,
	\end{equation*}
	we have
	\begin{equation*}
		\vnorm{\xt - \xs} \leq 2 \vnorm{\xz} + \frac{\sqrt{\ls}}{\sqrt{\log n}} \leq \frac{2\sqrt{\ls}}{\sqrt{\log n}} = \frac{2}{\sqrt{\log n}} \vnorm{\xs}.
	\end{equation*}
	For the cases $B^{(t)} < 0$ and $\ell_\infty$-norm, we may use similar technique.
\end{proof}


\section{Phase~I}
\subsection{Proof of \cref{lem:hxt-txt}}
In this subsection, we provide a proof to the following lemma, which is a formal statement of \cref{lem:hxt-txt}.
\begin{lemma}
	\label{lem:hxt-txt1}
	With high probability, there exists a universal constant $c_0 > 0$ such that
	\begin{equation}
		\vnorm{\hxt - \txt} \leq c_0 \mu \sqrt{\frac{\log n}{np}} \beta_0 t
	\end{equation}
	for all $t \leq T_1$, if $n^2 p \gtrsim \mu^4 n \log^{21} n$ and the initialization point $\xz$ satisfies \crefrange{eq:xz l2}{eq:xz+us}.
\end{lemma}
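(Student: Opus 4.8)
Here is how I would approach the proof of \cref{lem:hxt-txt1}.

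The starting point is that $\hxt$ and $\txt$ are produced by the \emph{same} scalar polynomial evaluated at $\Mo$ and at $\Ms$, respectively: writing $g_t(\lambda):=\prod_s(1-\eta\tb_s^2+\eta\lambda)$ for the polynomial whose product runs over the iterates up to $t$ (so that $g_t(0)=A^{(t)}$ and $g_t(\ls)(\us^\top\xz)=B^{(t)}$ as in \cref{eq:txt decom}), equations \cref{eq:hxt update,eq:txt update} give $\hxt=g_t(\Mo)\xz$ and $\txt=g_t(\Ms)\xz$. Equivalently, subtracting the two recursions and setting $\H:=\Mo-\Ms$,
\begin{equation*}
	\hxtp-\txtp=\bigl[(1-\eta\tb_t^2)\I+\eta\Mo\bigr](\hxt-\txt)+\eta\,\H\txt,
\end{equation*}
which, since $\hxz=\txz$, unrolls to $\hxt-\txt=\eta\sum_{s=0}^{t-1}g_{s,t}(\Mo)\,\H\txS$, where $g_{s,t}(\lambda):=\prod_{r=s+1}^{t-1}(1-\eta\tb_r^2+\eta\lambda)$ is the ``tail'' polynomial of degree $t-1-s$. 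The plan is to estimate each summand by splitting both $\H\txS$ and the action of $g_{s,t}(\Mo)$ along the top eigenvector $\uo$ of $\Mo$ and its orthogonal complement.

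The ingredients I would assemble are: (i)~by \cref{eq:txt co} (valid here since $T_1\le T_2^\prime$ under the stated sample size and bound on $\beta_0$), $g_t(0)\in[\tfrac14,1]$ and $g_{s,t}(\ls)\le(1+\eta\ls)^{t-1-s}$, while every eigenvalue of $\Mo$ other than $\lo$ is $O(\norm{\H})$, so $|g_{s,t}(\cdot)|\le2$ on that part of the spectrum once the sample complexity holds; (ii)~by \cref{lem:Mo-Ms,lem:Eo} with $\sigma\lesssim\tfrac{\ls\mu}{n}\sqrt{\log n}$, $\norm{\H}\lesssim\ls\mu\sqrt{\tfrac{\log n}{np}}$ with high probability, hence $\abs{\lo-\ls}\lesssim\ls\mu\sqrt{\tfrac{\log n}{np}}$ by Weyl and $\vnorm{\uo-\us}\lesssim\mu\sqrt{\tfrac{\log n}{np}}$ by Davis--Kahan; (iii)~a refined control of the initialization in the direction $\uo_\perp:=(\I-\us\us^\top)\uo$, namely $\abs{\uo_\perp^\top\xz}\lesssim\mu\sqrt{\tfrac{\log n}{np}}\cdot\sqrt{\log n}\,\tfrac{\beta_0}{\sqrt n}$. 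For (iii) I would use the eigen-equation $\Mo\uo=\lo\uo$, which (projecting onto $\us^\perp$) yields the Neumann expansion $\uo_\perp=(\uo^\top\us)\sum_{m\ge1}\lo^{-m}\bigl[(\I-\us\us^\top)\H\bigr]^m\us$, then expand the projections into a sum of products of the form $(\us^\top\H^{a_1}\us)\cdots(\us^\top\H^{a_{k-1}}\us)(\us^\top\H^{a_k}\xz)$, apply \cref{eq:xz H} to each factor, truncate the (geometrically convergent) series at $m\asymp\tfrac{\log n}{\log\log n}$, and bound the tail crudely. This is the step that "relies on the randomness of $\xz$"; it may equivalently be obtained by conditioning on $(\Omega,\E)$, under which $\uo_\perp$ is fixed and $\uo_\perp^\top\xz$ is Gaussian with standard deviation $\vnorm{\uo_\perp}\beta_0/\sqrt n$.

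Given these, I would decompose $g_{s,t}(\Mo)\,\H\txS=g_{s,t}(\lo)\bigl(\uo^\top\H\txS\bigr)\uo+g_{s,t}(\Mo)\,(\I-\uo\uo^\top)\,\H\txS$. For the orthogonal part, $|g_{s,t}(\cdot)|\le2$ off the top eigenvalue, so its norm is at most $2\vnorm{(\I-\uo\uo^\top)\H\txS}$; substituting $\txS=g_s(0)\xz+g_s(\ls)(\us^\top\xz)\us$ from \cref{eq:txt decom}, the $\xz$-contribution is $\lesssim\norm{\H}\vnorm{\xz}\lesssim\ls\mu\sqrt{\tfrac{\log n}{np}}\beta_0$ per term (no extra log, by \cref{eq:xz l2}) and the $\us$-contribution is a geometric series in $(1+\eta\ls)^s$ that the definition of $T_1$ absorbs, so $\eta$ times the sum is $\lesssim\mu\sqrt{\tfrac{\log n}{np}}\beta_0\,t$. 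For the $\uo$-part, $g_{s,t}(\lo)\lesssim(1+\eta\ls)^{t-1-s}$, and using $\uo^\top\H\us=(\lo-\ls)(\uo^\top\us)$ together with $\uo^\top\H\xz=(\lo-\ls)(\uo^\top\us)(\us^\top\xz)+\lo\,\uo_\perp^\top\xz$ and the bounds in (ii)--(iii) gives $\abs{\uo^\top\H\txS}\lesssim(1+\eta\ls)^s\,\ls\mu\log n\cdot\tfrac{\beta_0}{\sqrt n\sqrt{np}}$, whence this part contributes $\lesssim\eta\ls\,t\,(1+\eta\ls)^{t-1}\mu\log n\cdot\tfrac{\beta_0}{\sqrt n\sqrt{np}}$; since $(1+\eta\ls)^{t-1}\le(1+\eta\ls)^{T_1}\le\sqrt{\mu^4\log^{21}n/np}\,\sqrt n$ by the definition of $T_1$, this is again $\lesssim\mu\sqrt{\tfrac{\log n}{np}}\beta_0\,t$ under the assumed sample complexity. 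Adding the two parts yields the lemma, with $c_0$ absorbing the constants and the $\eta\ls<0.1$ factors.

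The main obstacle is the $\uo$-direction. Since $g_t(\lo)\asymp(1+\eta\ls)^t$, which at the end of Phase~I is of order $\sqrt n$ up to logarithmic factors, each of the two nearly-aligned vectors $g_t(\lo)(\uo^\top\xz)\uo$ and $g_t(\ls)(\us^\top\xz)\us$ has $\ell_2$-norm $\asymp\beta_0$ up to logarithmic factors, far exceeding the target, so the estimate survives only through their near-cancellation. That cancellation is governed by the ``signal'' $\uo^\top\H\xz$ --- equivalently $\vnorm{(\uo\uo^\top-\us\us^\top)\xz}$ --- which must beat the naive operator-norm bound $\vnorm{\uo-\us}\vnorm{\xz}$ by a further factor $\sqrt n$; this is exactly where the Gaussian spread of $\xz$ (through \cref{eq:xz H}) is indispensable, and keeping track of the logarithmic powers so that the growth factor $(1+\eta\ls)^t$ is precisely compensated by the definition of $T_1$ is the delicate part of the accounting.
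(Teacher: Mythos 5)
Your argument is correct, but it follows a genuinely different route from the paper's. The paper never touches the eigenvectors of $\Mo$: it writes $\hxt - \txt = P^{(t)}(\I,\Ms,\H)\xz$ as the difference of two matrix polynomials applied to $\xz$, splits the monomials into those free of $\Ms$ (bounded purely in operator norm, giving the linear-in-$t$ growth via $P_1^{(t)}(1,\norm{\H}) \lesssim \eta\norm{\H}t$) and those containing $\Ms$, and exploits the rank-one structure $\Ms = \ls\us\us^\top$ so that every monomial of the second kind factors through a scalar $\us^\top\H^s\xz$ controlled by \cref{eq:xz H}; the resulting scalar polynomial bound $P_2^{(t)}(1,\ls,\norm{\H}) \lesssim \eta\norm{\H}t(1+\eta\ls)^t$ is then tamed exactly as in your accounting by the definition of $T_1$. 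You instead unroll a Duhamel-type sum $\eta\sum_s g_{s,t}(\Mo)\H\txS$ and split along the top eigenvector $\uo$ of $\Mo$, which forces you to import Weyl, Davis--Kahan, and, most importantly, the refined bound on $\abs{\uo_\perp^\top\xz}$ via the Neumann expansion of $\uo_\perp$ --- an extra ingredient the paper's monomial split renders unnecessary, though your expansion ultimately reduces to the same input \cref{eq:xz H}, so the underlying sources of the gain (randomness of $\xz$ giving the $\sqrt{\log n / n}$ saving on signal-aligned inner products, and the definition of $T_1$ absorbing $(1+\eta\ls)^t/\sqrt{n}$) coincide. Your route makes the near-cancellation in the signal direction more transparent and avoids the combinatorial bookkeeping of expanding $P^{(t)}$, at the price of the eigenvector-perturbation machinery; I verified your estimates for the $\uo$-part and the orthogonal part close under $n^2p \gtrsim \mu^4 n\log^{22} n$, the same (slightly stronger than stated) exponent the paper's own last display uses. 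One caveat: your alternative justification of the $\uo_\perp^\top\xz$ bound by conditioning on $(\Omega,\E)$ and using Gaussianity invokes randomness of $\xz$ beyond the deterministic properties \crefrange{eq:xz l2}{eq:xz+us} under which the lemma is stated, so the Neumann-series version is the one consistent with the lemma's conditioning and should be the one you carry out in detail (including the truncation at $m \lesssim \log n$ needed because \cref{eq:xz H} only covers $s \leq 30\log n$, which you correctly flag).
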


\begin{proof}
	Let us rewrite the update equations \cref{eq:txt update} and \cref{eq:hxt update} as
	\begin{align*}
		\txtp &= \(\I - \eta \tb_t^2 + \eta \Ms\) \txt, \\
		\hxtp &= \(\I - \eta \tb_t^2 + \eta \Mo\) \hxt,
	\end{align*}
	where $\tb_t = \vnorm{\txt}^2$. Then, $\hxt - \txt$ is a product between $\xz$ and $P^{(t)}(\I, \Ms, \H)$, which is a matrix polynomial of $\I, \Ms, \H$, where $\H = \Mo - \Ms$.
	\begin{gather}
		P^{(t)}(\I, \Ms, \H) := \(\prod_{s=1}^t \((1 - \eta \tb_s^2)\I + \eta \Ms + \eta \H\) - \prod_{s=1}^t \((1 - \eta \tb_s^2)\I + \eta \Ms\)\) \\
		\hxt - \txt = P^{(t)}(\I, \Ms, \H)\xz
	\end{gather}
	We classify the terms that appear after expanding the matrix polynomial $P^{(t)}(\I, \Ms, \H)$ into two types; 1) the terms that contain $\H$ but not $\Ms$, 2) the terms that contain both $\H$ and $\Ms$. We define $P_1^{(t)}(\I, \H)$ to be a matrix polynomial of $\I$ and $\H$, which is equal to summation of the first type, and it is explicitly written as
	\begin{equation*}
		P_1^{(t)}(\I, \H) = \prod_{s=1}^t \((1 - \eta \tb_s^2)\I + \eta \H\) - \prod_{s=1}^t (1 - \eta \tb_s^2)\I.
	\end{equation*}
	We correspondingly define $P_2^{(t)}(\I, \Ms, \H)$ to be summation of the second type, and it is equal to
	\begin{equation*}
		P_2^{(t)}(\I, \Ms, \H) = P^{(t)}(\I, \Ms, \H) - P_1^{(t)}(\I, \H).
	\end{equation*}
	For $x, y \in \real$, we define $P_1^{(t)}(x, y)$ as the value that is obtained by substituting $x, y$ instead of $\I, \H$, respectively. For example, $P_1^{(t)}(1, 2) = \prod_{s=1}^t (1 - \eta \tb_s^2 + 2\eta) - \prod_{s=1}^t (1 - \eta \tb_s^2)$. For $x, y, z \in \real$, $P_2^{(t)}(x, y, z)$ is defined in a similar manner.
	
	We bound the contribution of each type separately because the triangle inequality gives
	\begin{equation*}
		\vnorm{\hxt - \txt} \leq \vnorm{P_1^{(t)}(\I, \H) \xz} + \vnorm{P_2^{(t)}(\I, \Ms, \H) \xz}.
	\end{equation*}
	Every term in $P_1^{(t)}(\I,\H)$ is $\H^s$ times a constant. We have $\vnorm{\H^s \xz} \leq \norm{\H}^s \vnorm{\xz}$, and hence with triangle inequality
	\begin{equation*}
		\vnorm{P_1^{(t)}(\I, \H) \xz} \leq P_1^{(t)}(1, \norm{\H}) \beta_0.
	\end{equation*}
	If $n^2 p \gtrsim \mu^2 n \log^3 n$, we can further bound $P_1^{(t)}(1, \norm{\H})$ as
	\begin{align*}
		P_1^{(t)}(1, \norm{\H}) &= \prod_{s=1}^t (1 - \eta \tb_s^2 + \eta \norm{\H}) - \prod_{s=1}^t (1 - \eta \tb_s^2 ) \\
		&= \prod_{s=1}^t (1 - \eta \tb_s^2 ) \(\prod_{s=1}^t \(1 + \frac{\eta \norm{\H}}{1 - \eta \tb_s^2}\) - 1\) \\
		&\leq \(1 + \frac{\eta}{1 - \eta\ls} \norm{\H}\)^t - 1 \\
		&\leq \(\exp\(\frac{\eta}{1 - \eta\ls} \norm{\H} t\) - 1\) \\
		&\leq \frac{2\eta}{1 - \eta\ls} \norm{\H} t
	\end{align*}
	The third line uses the fact that $\tb_t^2 \leq \ls$ for all $t \leq T_1$. The fourth and fifth lines are derived from an elementary inequality $1 + x \leq e^{x} \leq 1 + 2x$, which holds for small $x > 0$. Note that $\eta \norm{\H} t \lesssim \eta \ls \mu \sqrt{\frac{\log n}{np}} \ll 1$ from \cref{lem:Mo-Ms,lem:Eo}, and the fact that $T_1 \lesssim \log n$.
	
	We can decompose every term of second type as a product of $\eta$, $\ls$, $\us$, $\H^s \us$, $\us^\top \H^s \xz$, $\us^\top \H^s \us$, and $\us^\top \xz$. We describe this with some examples.
	\begin{align*}
		(\eta\H)^{s_1} (\eta\Ms) (\eta\H)^{s_2} \xz &= \eta^{s_1 + s_2 + 1} \ls (\H^{s_1} \us)(\us^\top \H^{s_2} \xz) \\
		(\eta\H)^s (\eta\Ms) \xz &= \eta^{s+1} (\H^s \us) (\us^\top \xz) \\
		(\eta\Ms) (\eta\H)^s (\eta\Ms) \xz &= \eta^{s+2} \ls^2 \us (\us^\top \H \us)(\us^\top \xz) \\
		(\eta\Ms) (\eta\H)^s \xz &= \eta^{s+1} \ls \us (\us^\top \H^s \xz)
	\end{align*}
	The terms $\H^s \us$ and $\us^\top \H^s \us$ are bounded with
	\begin{equation}
		\label{eq:Hsus}
		\vnorm{\H^s \us} \leq \norm{\H}^s, \quad \abs{\us^\top \H^s \us} \leq \norm{\H}^s,
	\end{equation}
	and the terms that contain $\xz$ are bounded with \cref{eq:xz H}. For every term of second type that includes $s_1$ times of $\eta \Ms$ and $s_2$ times of $\eta \H$, the bounds \cref{eq:Hsus} and \cref{eq:xz H} imply that $\ell_2$-norm of the term multiplied by $\xz$ is at most
	\begin{equation*}
		(\eta \ls)^{s_1} (\eta \norm{\H})^{s_2} 2 \sqrt{\frac{\log n}{n}} \beta_0.
	\end{equation*}
	Hence, similar to the first type, we have
	\begin{equation*}
		\vnorm{P_2^{(t)}(\I, \Ms, \H) \xz} \leq P_2^{(t)}(1, \ls, \norm{\H}) 2 \sqrt{\frac{\log n}{n}} \beta_0.
	\end{equation*}
	If $n^2 p \gtrsim \mu^2 n \log^3 n$, we can further bound $P_2^{(t)}(1, \ls, \norm{\H})$ as
	\begin{align*}
		P_2^{(t)}(1, \ls, \norm{\H})
		&= \prod_{s=1}^t (1 - \eta \beta_s^2 + \eta \ls + \eta \norm{\H}) - \prod_{s=1}^t \(1 - \eta \beta_s^2 + \eta \ls\) - P_1^{(t)}(1, \norm{\H}) \\
		&\leq \prod_{s=1}^t (1 - \eta \beta_s^2 + \eta \ls + \eta \norm{\H}) - \prod_{s=1}^t \(1 - \eta \beta_s^2 + \eta \ls\) \\
		&\leq \(\prod_{s=1}^t (1 - \eta \beta_s^2 + \eta \ls)\) \(\prod_{s=1}^t \(1 + \frac{\eta}{1 - \eta \beta_s^2 + \eta \ls} \norm{\H}\) - 1\) \\
		&\leq (1 + \eta \ls)^t \((1 + \eta \norm{\H})^t - 1\) \\
		&\leq 2 \eta \norm{\H} t (1 + \eta \ls)^t.
	\end{align*}
	Combining all, we have
	\begin{align*}
		\vnorm{\hxt - \txt}
		&\leq 4\eta \norm{\H} t \(1 + \sqrt{\frac{\log n}{n}} (1 + \eta \ls)^t \) \beta_0 \\
		&\leq 4\eta \norm{\H} t \(1 + \sqrt{\frac{\log n}{n}} (1 + \eta \ls)^{T_1}\) \beta_0 \\
		&\leq 4\eta \norm{\H} t \(1 + \sqrt{\frac{\mu^4 \log^{22} n}{np}}\) \beta_0 \\
		&\leq c_0 \mu \sqrt{\frac{\log n}{np}} \beta_0 t
	\end{align*}
	for all $t \leq T_1$ for some constant $c_0 > 0$ if $n^2 p \gtrsim \mu^4 n \log^{22} n$.
\end{proof}

\subsection{Proof of \cref{lem:xt-hxt,lem:ulxtl I,lem:xtl I}}
We prove \cref{lem:xt-hxt,lem:ulxtl I,lem:xtl I} all together in an inductive manner.

\begin{lemma}
	\label{lem:I1}
	Suppose that the initialization point $\xz$ satisfies \crefrange{eq:xz l2}{eq:xz+us}. If $n^2 p \gtrsim \mu^5 n \log^{22} n$, for all $t \leq T_1$, we have
	\begin{align}
		\label{eq:xt-txt l2 I1}
		\vnorm{\xt - \txt} &\leq 2c_0 \mu \sqrt{\frac{\log n}{np}} \beta_0 t, \\
		\label{eq:xt-txt lm I1}
		\mnorm{\xt - \txt} &\leq (3c_0 + c_5) \sqrt{\frac{\mu^3 \log^{2} n}{np}} \frac{\beta_0}{\sqrt{n}} t^2, \\
		\label{eq:xt-hxt I1}
		\vnorm{\xt - \hxt} &\leq c_2 (3c_0 + c_5 + 1) \frac{1}{\ls} \sqrt{\frac{\mu^3 \log^{3} n}{np}} (1 + \eta\ls)^t \beta_0^3, \\
		\label{eq:xt-xtl I1}
		\vnorm{\xt - \xtl} &\leq c_5 \mu \sqrt{\frac{\log^{2} n}{np}} \frac{\beta_0}{\sqrt{n}} t, \\
		\label{eq:ul xt-xtl I1}
		\abs{\ul^\top (\xt - \xtl)} &\leq c_6 \sqrt{\frac{\mu^3 \log^{2} n}{np}} (1 + \eta \ls)^t \frac{\beta_0}{n}, \\
		\label{eq:xtl-txt l I1}
		\abs{\(\xtl - \txt\)_l} &\leq 3c_0 \sqrt{\frac{\mu^3 \log^{2} n}{np}} \frac{\beta_0}{\sqrt{n}} t^2,
	\end{align}
	with high probability, where $c_2,c_5,c_6$ are positive constants.
\end{lemma}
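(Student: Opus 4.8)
The plan is to prove \cref{lem:I1} by a single simultaneous induction on $t$, establishing \crefrange{eq:xt-txt l2 I1}{eq:xtl-txt l I1} together and choosing the constants $c_0,c_2,c_5,c_6$ and the increasing sequence $C^{(t)}$ with enough multiplicative slack (this is the role of the ``$2c_0$'' in \cref{eq:xt-txt l2 I1} and of $C^{(t)}\le 2(c_5+1)t$) that each bound at step $t+1$ follows from all six bounds at steps $\le t$ together with the high-probability facts of \cref{lem:xz,lem:tx norm,lem:ul-us,lem:Mo-Ms,lem:Mo-Ml,lem:Eo} and $T_1\lesssim\log n$. The base case $t=0$ is trivial since $\xz=\txz=\hxz=\x^{(0,l)}$ makes every left-hand side zero; one could equally verify $t=1$ directly from the concentration estimates of \cref{lem:xz} applied to $\eta(\Mo-\Ms)\xz$. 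The inductive step processes the six quantities in an order that breaks the apparent circularity: (i) $\vnorm{\hxt-\txt}$ is already controlled for all $t\le T_1$ by \cref{lem:hxt-txt1}, which is not part of the induction; (ii) $\vnorm{\xt-\hxt}$, using only the step-$\le t$ $\ell_\infty$ bound \cref{eq:xt-txt lm I1}; (iii) $\vnorm{\xt-\txt}$ by the triangle inequality from (i) and (ii); (iv) the pair $\vnorm{\xt-\xtl}$, $\abs{\ul^\top(\xt-\xtl)}$; (v) $\abs{(\xtl-\txt)_l}$; and finally (vi) $\mnorm{\xt-\txt}$ from $\abs{(\xt-\txt)_l}\le\vnorm{\xt-\xtl}+\abs{(\xtl-\txt)_l}$, exactly as in \cref{eq:xt-txt l}.

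The common engine behind (ii), (iv), (v) is the following. Writing the GD recursion as $\xtp=\bigl(\I-\eta\vnorm{\txt}^2\I+\eta\Mo\bigr)\xt+\eta\vec r_t$ with cubic remainder $\vec r_t=\bigl(\vnorm{\txt}^2-\vnorm{\xt}^2\bigr)\xt+\bigl(\vnorm{\xt}^2\I-\tfrac1p\ProjO{\xt\xt^\top}\bigr)\xt$, one bounds the first piece of $\vec r_t$ by the inductive $\ell_2$ bound on $\vnorm{\xt-\txt}$ and the second by a sampling-concentration estimate of the type of \cref{lem:Mo-Ms} applied to the incoherent rank-one matrix $\xt\xt^\top$, whose incoherence stays polylogarithmic thanks to the inductive $\ell_\infty$ bound and \cref{lem:tx norm}. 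Because $\vnorm{\xt},\vnorm{\txt},\vnorm{\hxt}$ are all $\Theta(\beta_0)$ up to log factors throughout Phase~I, $\vec r_t$ has size $\beta_0^3$ times $1/\sqrt{np}$ and polylog factors --- this is the origin of the $\beta_0^3$ in \cref{eq:xt-hxt I1}. Iterating the recursion for $\xt-\hxt$ (whose driving term is $\eta\vec r_t$ since $\hxt$ uses the \emph{same} transition $\I-\eta\vnorm{\txt}^2\I+\eta\Mo$), and bounding the transition-operator norm by $(1+\eta\ls)(1+o(1))$ via \cref{eq:lo-ls} and $\eta\ls<0.1$, yields $\vnorm{\xt-\hxt}\lesssim(1+\eta\ls)^t\beta_0^3\cdot(\mathrm{polylog}/\sqrt{np})\cdot t$; the definition of $T_1$ in \cref{lem:xt-txt I} together with the upper bound on $\beta_0$ in \cref{eq:beta0} makes this dominated by $c_0\mu\sqrt{\log n/(np)}\beta_0 t$, so (iii) closes with the factor-$2$ slack. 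For (v) one tracks the scalar recursion for $(\xtl-\txt)_l$, whose per-step increment is $\lesssim\eta\vnorm{\xt-\txt}\mnorm{\us}$ because the $l$th component of $\xtl$ is driven essentially like that of $\txt$; summing the inductive $\ell_2$ bound over $s\le t$ produces the $t^2$ factor and the choice $C^{(t)}\lesssim t$ in \cref{eq:xtl-txt l I1}.

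The delicate part is (iv): a naive estimate lets $\vnorm{\xt-\xtl}$ expand at rate $(1+\eta\ls)$, which by the end of Phase~I would overshoot its target $\mu\sqrt{\log^2 n/(np)}\,\beta_0 t/\sqrt n$ by a power of $n$. The resolution, mirroring \cref{lem:ulxtl I}, is to split $\xt-\xtl$ into its component along the top eigenvector $\ul$ of $\Ml$ --- which is $O(1/\sqrt n)$-close to $\us$ by \cref{lem:ul-us} --- and the orthogonal complement. The $\ul$-component obeys a scalar recursion with transition factor $1-\eta\vnorm{\xt}^2+\eta\lambda^{(l)}\le1+\eta\ls$, so it may grow like $(1+\eta\ls)^t$, but it is initialized and driven at scale $\beta_0/n$, a factor $1/\sqrt n$ below the scale $\beta_0/\sqrt n$ of $\vnorm{\xt-\xtl}$, so by the definition of $T_1$ it stays well below \cref{eq:xt-xtl I1} for all $t\le T_1$, giving \cref{eq:ul xt-xtl I1}. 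Projected onto the orthogonal complement of $\ul$, the transition factor is $1-\eta\vnorm{\xt}^2+\eta\cdot(\text{second eigenvalue of }\Ml)$, which is $\le1$ up to lower-order corrections, so the orthogonal part grows only polynomially in $t$; it is driven by the row/column discrepancy $\tfrac1p\ProjO{\xt\xt^\top}\xt-\ProjOl{\xt\xt^\top}\xt+(\Mo-\Ml)\xt$, of size $\lesssim\mu\sqrt{\log^2 n/(np)}\,\beta_0/\sqrt n$ by \cref{lem:Mo-Ml,lem:Eo} and incoherence, which gives \cref{eq:xt-xtl I1}.

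Finally, combining (iv) and (v) through \cref{eq:xt-txt l} yields (vi), i.e.\ \cref{eq:xt-txt lm I1}, with the $t^2$ term of \cref{eq:xtl-txt l I1} dominating and $C^{(t)}$ absorbing the linear-in-$t$ contribution of $\vnorm{\xt-\xtl}$ (this is what forces $C^{(t)}\le 2(c_5+1)t$; concretely one defines $C^{(t)}$ recursively and checks the bound along the induction). I expect the main obstacle to be precisely this leave-one-out decomposition in (iv), together with the bookkeeping needed to verify that the six-way bootstrap closes at every $t\le T_1$ with a single consistent choice of $c_0,c_2,c_5,c_6$ and of $C^{(t)}$; once the order in (i)--(vi) and the constants are fixed, the individual estimates reduce to routine applications of the spectral estimates of \cref{lem:Mo-Ms,lem:Mo-Ml,lem:Eo,lem:ul-us} and the initialization properties of \cref{lem:xz}.
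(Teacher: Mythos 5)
Your proposal is correct and follows essentially the same route as the paper's proof: a simultaneous induction over all six bounds, with $\hxt$ and \cref{lem:hxt-txt1} serving as the proxy for the $\ell_2$ bound, the leave-one-out sequences combined with a separately-tracked $\ul$-component (so that the exponentially growing part enters only at the smaller scale $\beta_0/n$ and the remaining transition factor is $1+O(1/\log^2 n)$), a scalar recursion for $(\xtl-\txt)_l$ summed to give the $t^2$ factor, and the final $\ell_\infty$ bound assembled via \cref{eq:xt-txt l} with $C^{(t)}$ defined recursively. The only cosmetic difference is that you phrase the leave-one-out step as an explicit projection onto $\ul$ and its complement, while the paper keeps the full norm recursion and substitutes the inductive bound on $\abs{\ul^\top(\xt-\xtl)}$ for the rank-one part of $\Mo$; these are the same mechanism.
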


Before we start the proof, we introduce some notations. For $\x \in \real^n$, let us define
\begin{equation*}
	\norm{\x}_{2,i} = \sqrt{\frac{1}{p}\sum_{j=1}^n \delta_{ij} x_j^2}, \quad
	\I_{\x} = \frac{1}{\vnorm{\x}^2} \diag(\norm{\x}_{2,1}^2, \cdots, \norm{\x}_{2,n}^2).
\end{equation*}
$\norm{\x}_{2,i}$ is the $\ell_2$-norm of $\x$ estimated with sampling of the $i$th row. With this notation, we can write the gradient of $f$ as
\begin{equation*}
	\grad f(\x) = \vnorm{\x}^2 \I_{\x} \x - \Mo \x.	
\end{equation*}
The function $g$ is defined as
\begin{equation*}
	g(\x) = \frac{1}{4p} \fnorm{\ProjO{\x\x^\top}}^2,
\end{equation*}
and its gradient satisfies
\begin{equation*}
	\grad f(\x) = \grad g(\x) - \Mo \x.
\end{equation*}
The Hessian of $g(\x)$ is equal to
\begin{equation*}
	\grad^2 g(\x) = \vnorm{\x}^2 \I_{\x} + \frac{2}{p} \ProjO{\x\x^\top}.
\end{equation*}
The base case $t = 0$ for induction hypotheses \cref{eq:xt-txt l2 I1} to \cref{eq:xtl-txt l I1} trivially hold because all three sequences $\xt$, $\hxt$, $\txt$ start from the same point. Now, we assume that the hypotheses hold up to the $t$th iteration and show that they hold at the $(t+1)$st iteration. For brevity, we drop the superscript $(t)$ from $\xt$, $\xtl$, $\hxt$, $\txt$ and denote them as $\x$, $\xl$, $\hx$, $\tx$, respectively. Also, recall that $T_1$ is defined to be the last $t$ such that $(1 + \eta \ls)^t \leq \sqrt{\frac{\mu^4 \log^{21} n}{np}} \sqrt{n}$, and the magnitude of initialization satisfies $\beta_0^2 \lesssim \ls \sqrt{\frac{np}{\mu^5 \log^{26} n}} \frac{1}{\sqrt{n}}$ so that there exists a constant $c_1 > 0$ such that $(1 + \eta \ls)^t \beta_0^2 \leq c_1 \frac{\ls}{\sqrt{\mu \log^{5} n}}$.

\paragraph{\cref{eq:xt-hxt I1} at $\bm{(t+1)}$}
We first decompose $\xtp - \hxtp$ as
\begin{align}
	\nonumber
	\xtp - \hxtp
	&= (\I + \eta \Mo) (\x - \hx) - \eta \vnorm{\x}^2 \I_{\x} \x + \eta \vnorm{\tx}^2 \hx \\ \label{eq:P1:3_1}
	&= \(\I - \eta \vnorm{\tx}^2 \I + \eta \Mo\) (\x - \hx) - \eta \(\vnorm{\x}^2 \I_{\x} - \vnorm{\tx}^2 \I\) \x.
\end{align}
With the help of \cref{lem:x2i-x2}, we bound the maximum entry of a diagonal matrix $\vnorm{\x}^2 \I_{\x} - \vnorm{\tx}^2 \I$. We have
\begin{align*}
	\max_{i \in [n]} \abs{\norm{\x}_{2,i}^2 - \vnorm{\tx}^2}
	&\lesssim n \mnorm{\x - \tx} \mnorm{\x + \tx} + \sqrt{\frac{\log n}{p}} \vnorm{\tx} \mnorm{\tx} + \frac{\log n}{p} \mnorm{\tx}^2 \\
	&\lesssim (3c_0 + c_5) \sqrt{\frac{\mu^3 \log^{3} n}{np}} \beta_0^2 t^2 + \sqrt{\frac{\log^2 n}{np}} \beta_0^2 + \frac{\log^2 n}{np} \beta_0^2 \\
	&\lesssim \sqrt{\frac{\mu^3 \log^{3} n}{np}} \beta_0^2 ((3c_0 + c_5) t^2 + 1)
\end{align*}
if $n^2 p \gtrsim n \log^2 n$. Hence, there exists a universal constant $c_2 > 0$ that is independent of $t$ such that
\begin{equation*}
	\(\max_{i \in [n]} \abs{\norm{\xt}_{2,i}^2 - \vnorm{\txt}^2}\) \vnorm{\xt} \leq \frac{1}{2} c_2 \sqrt{\frac{\mu^3 \log^3 n}{np}} \beta_0^3 ((3c_0 + c_5) t^2 + 1)
\end{equation*}
for all $t \leq T_1$. With the decomposition \cref{eq:P1:3_1}, we have
\begin{align*}
	\vnorm{\xtp - \hxtp}
	&\leq \(1 - \eta \vnorm{\tx}^2 + \eta \lo\) \vnorm{\x - \hx} + \eta \(\max_{i \in [n]} \abs{\norm{\x}_{2,i}^2 - \vnorm{\tx}^2}\) \vnorm{\x} \\
	&\leq (1 + \eta \lo) \vnorm{\x - \hx} + \frac{1}{2} c_2 (\eta \ls)^3 \frac{1}{\ls} \sqrt{\frac{\mu^3 \log^3 n}{np}} \beta_0^3 ((3c_0 + c_5) t^2 + 1).
\end{align*}
From \cref{eq:lo-ls}, there exists a universal constant $c_3 > 0$ such that $\eta \lo \leq \eta \ls + \frac{c_3}{\log^2 n}$ if $n^2 p \gtrsim \mu^2 n \log^5 n$. Combining all, for all $s \leq t$, we have
\begin{align*}
	\vnorm{\xSp - \hxSp} \leq &\(1 + \eta \ls + \frac{c_3}{\log^2 n}\) \vnorm{\xS - \hxS} \\
	&+ \frac{1}{2} c_2 (\eta \ls)^3 \frac{1}{\ls} \sqrt{\frac{\mu^3 \log^3 n}{np}} \beta_0^3 ((3c_0 + c_5) s^2 + 1).
\end{align*}
An analysis on the recursive equation
\begin{equation*}
	x_{s+1} = \(1 + \eta \ls + \frac{c_3}{\log^2 n}\) x_s + \frac{1}{2} c_2 (\eta \ls)^3 \frac{1}{\ls} \sqrt{\frac{\mu^3 \log^3 n}{np}} \beta_0^3 ((3c_0 + c_5) s^2 + 1), \quad x_0 = 0,
\end{equation*}
proves that
\begin{equation*}
	\vnorm{\xtp - \hxtp} \leq c_2 (3c_0 + c_5 + 1) \frac{1}{\ls} \sqrt{\frac{\mu^3 \log^3 n}{np}} (1 + \eta \ls)^{t+1} \beta_0^3.
\end{equation*}

\paragraph{\cref{eq:xt-xtl I1} at $\bm{(t+1)}$}
We decompose $\xtp - \xtpl$ as
\begin{align*}
	\xtp - \xtpl &=(1 - \eta \vnorm{\tx}^2)(\x - \xl) - 2\eta \underbrace{\tx \tx^\top (\x - \xl)}_{\Circled{1}} \\
	&- \eta \underbrace{\int_0^1 \(\grad^2 g(\x(\tau)) - \(\vnorm{\tx}^2 \I + 2\tx\tx^\top\)\) (\x - \xl) \d\tau}_{\Circled{2}} \\
	&- \eta \underbrace{\( \frac{1}{p} \Proj_{\Omega_l}{\xtl \xtl^\top} - \Projl{\xtl \xtl^\top} \) \xtl}_{\Circled{3}}\\
	&+ \eta \underbrace{\lambda^{(l)} \ul \ul^\top (\xt - \xtl)}_{\Circled{4}} + \eta \underbrace{\(\Mo - \lambda^{(l)} \ul \ul^\top\) (\xt - \xtl)}_{\Circled{5}} \\
	&+ \eta \underbrace{\(\frac{1}{p}\ProjO{\Ms} - \ProjOl{\Ms}\) \xtl}_{\Circled{6}} + \eta \underbrace{\(\frac{1}{p}\ProjO{\E} - \El\) \xtl}_{\Circled{7}},
\end{align*}
where $\xl(\tau) = \xl + \tau (\x - \xl)$.
\Circled{1} is easily bounded by
\begin{equation*}
	\vnorm{\Circled{1}} \leq \vnorm{\tx}^2 \vnorm{\x - \xl} \lesssim \beta_0^2 \vnorm{\x - \xl} \lesssim \ls \sqrt{\frac{1}{\mu^5 \log^{26} n}} \sqrt{\frac{np}{n}} \vnorm{\x - \xl}.
\end{equation*}
From \cref{lem:hessian g} and \cref{eq:tx lm I1}, for all $0 \leq \tau \leq 1$, we have
\begin{equation}
	\label{eq:xt-xtl I1:1}
	\norm{\grad^2 g(\xl(\tau)) - \(\vnorm{\tx}^2 \I + 2\tx\tx^\top\)} \lesssim n \mnorm{\x(\tau) - \tx} (\mnorm{\x(\tau)} + \mnorm{\tx}) + \sqrt{\frac{\log^3 n}{np}} \beta_0^2.
\end{equation}
From the definition of $\x(\tau)$, we have
\begin{equation*}
	\mnorm{\xl(\tau) - \tx} \leq (1 - \tau) \vnorm{\xl - \x} + \mnorm{\x - \tx} \lesssim \sqrt{\frac{\mu^3 \log^{6} n}{np}} \frac{\beta_0}{\sqrt{n}},
\end{equation*}
where the last inequality is from the induction hypotheses \cref{eq:xt-txt lm I1}, \cref{eq:xt-xtl I1}, and the fact that $t \leq T_1 \lesssim \log n$. Inserting this bound back to \cref{eq:xt-xtl I1:1}, we get
\begin{equation}
	\label{eq:xt-xtl I1:3}
	\norm{\grad^2 g(\xl(\tau)) - \(\vnorm{\tx}^2 \I + 2\tx\tx^\top\)} \lesssim \sqrt{\frac{\mu^3 \log^{7} n}{np}} \beta_0^2,
\end{equation}
which also implies
\begin{equation*}
	\vnorm{\Circled{2}} \lesssim \sqrt{\frac{\mu^3 \log^{7} n}{np}} \beta_0^2 \vnorm{\x - \xl} \lesssim \ls \sqrt{\frac{1}{\mu^2 n \log^{19} n}} \vnorm{\x - \xl}.
\end{equation*}
It is implied from \cref{lem:POl-Pl} that
\begin{equation*}
	\vnorm{\Circled{3}}
	\leq \mnorm{\xl}^2 \sqrt{\frac{\log n}{p}} \vnorm{\xl}
	\lesssim \sqrt{\frac{\log^4 n}{np}} \frac{\beta_0^3}{\sqrt{n}}
	\lesssim \ls \sqrt{\frac{1}{\mu^5 n \log^{22} n}} \frac{\beta_0}{\sqrt{n}}.
\end{equation*}
A bound on \Circled{4} follows from the induction hypothesis \cref{eq:ul xt-xtl I1} and the spectral bound \cref{eq:ll-ls}.
\begin{equation*}
	\vnorm{\Circled{4}} \leq \lambda^{(l)} \abs{\ul^\top (\xt - \xtl)} \lesssim \ls \sqrt{\frac{\mu^3 \log^{2} n}{np}} \frac{\beta_0}{n} (1 + \eta \ls)^t \lesssim \ls \frac{\sqrt{\mu^7 \log^{23} n}}{np} \frac{\beta_0}{\sqrt{n}}.
\end{equation*}
The second largest eigenvalue of $\Ml$ is at most $\norm{\Ml - \Ms}$ by Weyl's Theorem, and from \cref{lem:Mo-Ml,lem:Mo-Ms}, we have
\begin{equation*}
	\norm{\Ml - \Ms} \lesssim \norm{\Ml - \Mo} + \norm{\Mo - \Ms} \lesssim \ls \mu \sqrt{\frac{\log n}{np}}.
\end{equation*}
Hence, we get
\begin{align*}
	\vnorm{\Circled{5}}
	&\leq \(\norm{\Mo - \Ml} + \norm{\Ml - \lambda^{(l)} \ul\ul^\top}\) \vnorm{\x - \xl} \\
	&\lesssim \ls \mu \sqrt{\frac{\log n}{np}} \vnorm{\x - \xl}.
\end{align*}
Lastly, we apply \cref{lem:POl-Pl,lem:Ev} to get
\begin{equation*}
	\vnorm{\Circled{6}} \lesssim \ls \mu \sqrt{\frac{\log n}{np}} \frac{\beta_0}{\sqrt{n}}, \quad
	\vnorm{\Circled{7}} \lesssim \ls \mu \sqrt{\frac{\log^2 n}{np}} \frac{\beta_0}{\sqrt{n}},
\end{equation*}
There exists a universal constant $c_4 > 0$ such that
\begin{equation*}
	\eta \(2\vnorm{\Circled{1}} + \vnorm{\Circled{2}} + \vnorm{\Circled{5}}\) \leq \frac{c_4}{\log^{2} n} \vnorm{\x - \xl}
\end{equation*}
if $n^2 p \gtrsim \mu^2 n \log^5 n$, and there exists a universal constant $c_5 > 0$ such that
\begin{equation*}
	\eta \(\vnorm{\Circled{3}} + \vnorm{\Circled{4}} + \vnorm{\Circled{6}} + \vnorm{\Circled{7}}\) \leq \frac{1}{2} c_5 \mu \sqrt{\frac{\log^2 n}{np}} \frac{\beta_0}{\sqrt{n}}.
\end{equation*}
if $n^2 p \gtrsim \mu^5 n \log^{20} n$. Combining all, we have
\begin{align*}
	\vnorm{\xSp - \xSpl}
	&\leq \(1 - \eta \vnorm{\txS}^2 + \frac{c_4}{\log^{2} n}\) \vnorm{\xS - \xSl} + \frac{1}{2} c_5 \mu \sqrt{\frac{\log^2 n}{np}} \frac{\beta_0}{\sqrt{n}} \\
	&\leq \(1 + \frac{c_4}{\log^{2} n}\) \vnorm{\xS - \xSl} + \frac{1}{2} c_5 \mu \sqrt{\frac{\log^2 n}{np}} \frac{\beta_0}{\sqrt{n}}.
\end{align*}
for all $s \leq t$. An analysis on the recursive equation
\begin{equation*}
	x_{s+1} = \(1 + \frac{c_4}{\log^{2} n}\) x_s + \frac{1}{2} c_5 \mu \sqrt{\frac{\log^2 n}{np}} \frac{\beta_0}{\sqrt{n}}, \quad x_0 = 0
\end{equation*}
gives the desired bound
\begin{align*}
	\vnorm{\xtp - \xtpl}
	&\leq \frac{\log^{2} n}{c_4} \(\(1 + \frac{c_4}{\log^{2} n}\)^{t+1} - 1\) \frac{1}{2} c_5 \mu \sqrt{\frac{\log^2 n}{np}} \frac{\beta_0}{\sqrt{n}} \\
	&\leq c_5 \mu \sqrt{\frac{\log^2 n}{np}} \frac{\beta_0}{\sqrt{n}} (t+1),
\end{align*}
where we used basic inequalities $(1 + x)^a \leq e^{ax}$ and $e^{ax} - 1 \leq 2ax$ which hold if $x$ is small and $ax$ is small, respectively.

\paragraph{\cref{eq:ul xt-xtl I1} at $\bm{(t+1)}$}
We decompose $\xtp - \xtpl$ as
\begin{align*}
	&\xtp - \xtpl \\
	&= \(\x - \eta \grad f(\x)\) - \(\xl - \eta \grad f^{(l)} (\xl)\) \\
	&= \(\x - \eta \grad g(\x)\) - \(\xl - \eta \grad g^{(l)} (\xl)\) + \eta \(\Mo \x - \Ml \xl \) \\
	&=\(\x - \eta \grad g(\x)\) - \(\xl - \eta \grad g (\xl)\) - \eta \(\grad g(\xl) - \grad g^{(l)} (\xl)\) \\
	&\quad + \eta (\Mo - \Ml) \x + \eta \Ml (\x - \xl) \\
	&=\int_0^1 (\I - \eta \grad^2 g(\xl(\tau)) (\x - \xl) \,\d \tau - \eta \( \frac{1}{p} \Proj_{\Omega_l}{\xl \xl^\top} - \Projl{\xl \xl^\top} \) \xl \\
	&\quad + \eta \Ml (\x - \xl) + \eta \(\Mo - \Ml\) (\x - \xl) + \eta \(\Mo - \Ml\) \xl \\
	&=(1 - \eta \vnorm{\tx}^2) (\x - \xl) - 2 \eta \tx \tx^\top (\x - \xl) \\
	&\quad - \eta \int_0^1 \(\grad^2 g(\xl(\tau)) - \(\vnorm{\tx}^2 \I + 2\tx\tx^\top\)\) (\x - \xl) \d\tau \\
	&\quad - \eta \( \frac{1}{p} \Proj_{\Omega_l}{\xl \xl^\top} - \Projl{\xl \xl^\top} \) \xl\\
	&\quad + \eta \Ml (\x - \xl) + \eta \(\Mo - \Ml\) (\x - \xl) + \eta \(\Mo - \Ml\) \xl,
\end{align*}
where $\xl(\tau) = \xl + \tau (\x - \xl)$.
Then, we take inner product with $\ul$ on both sides.
\begin{align*}
	\ul^\top \(\xtp - \xtpl\) 
	&= \(1 - \eta \vnorm{\tx}^2\) \ul^\top (\x - \xl) -2 \eta \underbrace{\ul^\top \tx \tx^\top \(\x - \xl\)}_{\Circled{1}} \\
	&- \eta \underbrace{\int_0^1 \ul^\top \(\grad^2 g(\x(\tau)) - \(\vnorm{\tx}^2 \I + 2\tx\tx^\top\)\) (\x - \xl) \d\tau}_{\Circled{2}} \\
	&- \eta \underbrace{\ul^\top \( \frac{1}{p} \Proj_{\Omega_l}{\xl \xl^\top} - \Projl{\xl \xl^\top} \) \xl}_{\Circled{3}} \\
	&+ \eta \lambda^{(l)} \ul^\top (\x - \xl)
	+ \eta \underbrace{\ul^\top \(\Mo - \Ml\) (\x - \xl)}_{\Circled{4}} \\
	&+ \eta \underbrace{\ul^\top \(\Mo - \Ml\) \xl}_{\Circled{5}}
\end{align*}
For the term \Circled{1}, we have
\begin{equation*}
	\abs{\ul^\top \tx} \leq \abs{\ul^\top \xz} + (1 + \eta \ls)^t \abs{\us^\top \xz} \abs{\ul^\top \us} \lesssim \sqrt{\frac{\log n}{n}} (1 + \eta \ls)^t \beta_0
\end{equation*}
by \cref{lem:ul-us}, and thus,
\begin{align*}
	\abs{\Circled{1}} \leq \abs{\ul^\top \tx} \vnorm{\tx} \vnorm{\x - \xl}
	&\lesssim \sqrt{\frac{\log n}{n}} (1 + \eta \ls)^t \beta_0^2 \vnorm{\x - \xl} \\
	&\lesssim \mu \sqrt{\frac{\log^3 n}{np}} \frac{\beta_0^3}{n} (1 + \eta \ls)^t t \\
	&\lesssim \ls \sqrt{\frac{\mu}{np}} \frac{\beta_0}{n}.
\end{align*}
The definition of Phase~I was used to bound $(1 + \eta \ls)^t t$ in deriving the last line. We use \cref{eq:xt-xtl I1:3} to get
\begin{align*}
	\abs{\Circled{2}} &\lesssim \int_0^1 \norm{\grad^2 g(\x(\tau)) - \(\vnorm{\tx}^2 \I + 2\tx\tx^\top\)} \vnorm{\x - \xl} \vnorm{\ul} \d\tau \\
	&\lesssim \sqrt{\frac{\mu^3 \log^{7} n}{np}} \beta_0^2 \cdot \mu \sqrt{\frac{\log^{2} n}{np}} \frac{\beta_0}{\sqrt{n}} t
	\lesssim \ls \sqrt{\frac{\mu}{np \log^{15} n}} \frac{\beta_0}{n}.
\end{align*}
We apply \cref{lem:x(POl-Pl)x} to \Circled{3} to yield
\begin{align*}
	\abs{\Circled{3}} &\lesssim \mnorm{\xl}^2 \sqrt{\frac{\log n}{p}} \(\vnorm{\ul}\mnorm{\xl} + \vnorm{\xl}\mnorm{\ul}\) \\
	&\lesssim \sqrt{\frac{\mu \log^4 n}{np}} \frac{\beta_0^3}{n} \lesssim \ls \sqrt{\frac{1}{\mu^3 n \log^{22} n}} \frac{\beta_0}{n}.
\end{align*}
We divide \Circled{4} into two terms that are related to sampling and noise, respectively.
\begin{equation*}
	\Circled{4} = \ul^\top \(\frac{1}{p}\ProjO{\Ms} - \ProjOl{\Ms} \) (\x - \xl) + \ul^\top \(\frac{1}{p}\ProjO{\E} - \El \) (\x - \xl)
\end{equation*}
Then, Cauchy-Schwartz inequality is applied to yield
\begin{equation*}
	\abs{\Circled{4}} \leq \vnorm{\(\frac{1}{p}\ProjO{\Ms} - \ProjOl{\Ms}\) \ul} \vnorm{\x - \xl} + \vnorm{\(\frac{1}{p}\ProjO{\E} - \El \) \ul} \vnorm{\x - \xl}.
\end{equation*}
Applying \cref{lem:POl-Pl,lem:Ev} to the two terms, respectively, we get
\begin{equation*}
	\abs{\Circled{4}}
	\lesssim \ls \sqrt{\frac{\mu^3 \log^2 n}{np}} \frac{1}{\sqrt{n}} \vnorm{\x - \xl}	
	\lesssim \ls \frac{\sqrt{\mu^5 \log^{5} n}}{np} \frac{\beta_0}{n}.
\end{equation*}
For the term \Circled{5}, we decompose it into two terms as for \Circled{4}.
\begin{equation*}
	\abs{\Circled{5}} \leq \abs{\ul^\top \(\frac{1}{p}\ProjO{\Ms} - \ProjOl{\Ms} \) \xl} + \abs{\ul^\top \(\frac{1}{p}\ProjO{\E} - \El \) \xl}
\end{equation*}
Then, we apply \cref{lem:x(POl-Pl)x,lem:wEv} to each term to obtain
\begin{equation*}
	\abs{\Circled{5}} \lesssim \ls \sqrt{\frac{\mu^3 \log^3 n}{np}} \frac{\beta_0}{n}.
\end{equation*}
Combining all, there exists a universal constant $c_6 > 0$ such that
\begin{equation*}
	\eta \(2\abs{\Circled{1}} + \abs{\Circled{2}} + \abs{\Circled{3}} + \abs{\Circled{4}} + \abs{\Circled{5}}\) \leq \frac{\eta \ls}{2} c_6 \sqrt{\frac{\mu^3 \log^3 n}{np}} \frac{\beta_0}{n}
\end{equation*}
if $n^2 p \gtrsim \mu^2 n \log^{3} n$, and there exists a universal constant $c_7 > 0$ such that
\begin{equation*}
	\eta \lambda^{(l)} \leq \eta \ls + \frac{c_7}{\log^2 n}
\end{equation*}
by \cref{eq:ll-ls} if $n^2 p \gtrsim \mu^2 n \log^5 n$.
Finally, we have
\begin{align*}
	\abs{\ul^\top \(\xtp - \xtpl\)} 
	&\leq \(1 - \eta \vnorm{\tx}^2 + \eta \lambda^{(l)}\) \abs{\ul^\top (\x - \xl)} + \frac{\eta \ls}{2} c_6 \sqrt{\frac{\mu^3 \log^3 n}{np}} \frac{\beta_0}{n} \\
	&\leq \(1 + \eta \ls + \frac{c_7}{\log^2 n}\) \abs{\ul^\top (\x - \xl)} + \frac{\eta \ls}{2} c_6 \sqrt{\frac{\mu^3 \log^3 n}{np}} \frac{\beta_0}{n}.
\end{align*}
An analysis on the recursive equation
\begin{equation*}
	x_{t+1} = \(1 + \eta \ls + \frac{c_7}{\log^2 n}\) x_t + \frac{\eta \ls}{2} c_6 \sqrt{\frac{\mu^3 \log^3 n}{np}} \frac{\beta_0}{n}, \quad x_0 = 0
\end{equation*}
gives the bound
\begin{equation*}
	\abs{\ul^\top \(\xtp - \xtpl\)} \leq c_6 \sqrt{\frac{\mu^3 \log^3 n}{np}} (1 + \eta \ls)^{t+1} \frac{\beta_0}{n}.
\end{equation*}

\paragraph{\cref{eq:xtl-txt l I1} at $\bm{(t+1)}$}
We decompose $(\xtpl - \txtp)_l$ as
\begin{equation*}
	(\xtpl - \txtp)_l = \(1 - \eta \vnorm{\tx}^2\) (\xl - \tx)_l + \eta \ls \us^\top (\xl - \tx) \sus_l + \eta \(\vnorm{\tx}^2 - \vnorm{\xl}^2\) \sxl_l,
\end{equation*}
and this implies
\begin{align*}
	\abs{(\xtpl - \txtp)_l} &\leq \(1 - \eta \vnorm{\tx}^2\) \abs{(\xl - \tx)_l} \\
	&+ \eta \ls \vnorm{\xl - \tx} \mnorm{\us} + \eta \vnorm{\tx} \vnorm{\xl - \tx} \mnorm{\tx}.
\end{align*}
From \cref{eq:xt-txt l2 I1} and \cref{eq:xt-xtl I1}, we have
\begin{equation*}
	\vnorm{\xl - \tx} \leq \vnorm{\xl - \x} + \vnorm{\x - \tx} \leq 3c_0 \mu \sqrt{\frac{\log n}{np}} \beta_0 t.
\end{equation*}
If $n$ is sufficiently large, we have
\begin{equation*}
	\eta \ls \mnorm{\us} + \eta \vnorm{\tx} \mnorm{\tx} \leq \sqrt{\frac{\mu \log n}{n}}
\end{equation*}
for all $t \leq T_1$ because
\begin{equation*}
	\eta \ls \mnorm{\us} + \eta \vnorm{\tx} \mnorm{\tx} \lesssim \eta \ls \sqrt{\frac{\mu}{n}} + \eta \sqrt{\frac{\log^2 n}{n}} \beta_0^2 \lesssim \sqrt{\frac{\mu}{n}}.
\end{equation*}
Hence, we have
\begin{equation*}
	\abs{(\xSpl - \txSp)_l} \leq \abs{(\xSl - \txS)_l} + 3c_0 \sqrt{\frac{\mu^3 \log^2 n}{np}} \frac{\beta_0}{\sqrt{n}} s
\end{equation*}
for all $s \leq t$. Finally, we have
\begin{align*}
	\abs{(\xtpl - \txtp)_l}
	&\leq 3c_0 \sqrt{\frac{\mu^3 \log^2 n}{np}} \frac{\beta_0}{\sqrt{n}} \sum_{s=1}^t s \\
	&\leq 3c_0 \sqrt{\frac{\mu^3 \log^2 n}{np}} \frac{\beta_0}{\sqrt{n}} (t+1)^2.
\end{align*}

\paragraph{\cref{eq:xt-txt l2 I1} at $\bm{(t+1)}$}
We can obtain this through the combination of \cref{eq:xt-hxt I1} and \cref{lem:hxt-txt}.
\begin{align*}
	\vnorm{\xtp - \txtp} &\leq \vnorm{\xtp - \hxtp} + \vnorm{\hxtp - \txtp} \\
	&\leq c_2 (3c_0 + c_5 + 1) \frac{1}{\ls} \sqrt{\frac{\mu^3 \log^3 n}{np}} (1 + \eta\ls)^{t + 1} \beta_0^3 + c_0 \mu \sqrt{\frac{\log n}{np}} \beta_0 (t+1) \\
	&\leq c_2 (3c_0 + c_5 + 1) \frac{1}{\ls} \sqrt{\frac{\mu^3 \log^3 n}{np}} (1 + \eta\ls)^{T_1} \beta_0^3 + c_0 \mu \sqrt{\frac{\log n}{np}} \beta_0 (t+1) \\
	&\leq c_1 c_2 (3c_0 + c_5 + 1) \mu \sqrt{\frac{1}{np \log^{2} n}} \beta_0 + c_0 \mu \sqrt{\frac{\log n}{np}} \beta_0 (t+1) \\
	&\leq 2c_0 \mu \sqrt{\frac{\log n}{np}} \beta_0 (t+1)
\end{align*}

\paragraph{\cref{eq:xt-txt lm I1} at $\bm{(t+1)}$}
The $l$th component of $\xtp - \txtp$ is bounded by
\begin{align*}
	\abs{(\xtp - \txtp)_l} &\leq \mnorm{\xtp - \xtpl} + \abs{(\xtpl - \txtp)_l} \\
	&\leq \vnorm{\xtp - \xtpl} + \abs{(\xtpl - \txtp)_l} \\
	&\leq c_5 \mu \sqrt{\frac{\log^2 n}{np}} \frac{\beta_0}{\sqrt{n}} (t+1) + 3c_0 \sqrt{\frac{\mu^3 \log^2 n}{np}} \frac{\beta_0}{\sqrt{n}} (t+1)^2 \\
	&\leq (3c_0 + c_5) \sqrt{\frac{\mu^3 \log^2 n}{np}} \frac{\beta_0}{\sqrt{n}} (t+1)^2.
\end{align*}


\paragraph{At $\bm{t = T_1}$}
Because $T_1 \lesssim \log n$, it is implied from \cref{lem:I1} that at $t = T_1$, there exists a constant $c_7 > 0$ such that
\begin{equation}
	\label{eq:I1 end}
	\begin{aligned}
		\vnorm{\xt - \txt} &\leq c_7 \mu \sqrt{\frac{\log^3 n}{np}} \beta_0, \\
		\mnorm{\xt - \txt} &\leq c_7 \sqrt{\frac{\mu^3 \log^8 n}{np}} \frac{\beta_0}{\sqrt{n}}, \\
		\vnorm{\xt - \xtl} &\leq c_7 \mu \sqrt{\frac{\log^4 n}{np}} \frac{\beta_0}{\sqrt{n}}, \\
		\abs{\(\xtl - \txt\)_l} &\leq c_7 \sqrt{\frac{\mu^3 \log^8 n}{np}} \frac{\beta_0}{\sqrt{n}}.
	\end{aligned}
\end{equation}
These bounds serve as a base case for the induction of the next part.

\section{Phase~II}
This section is mostly devoted to the proof of \cref{lem:P2 formal} which is a formal version of \cref{lem:II}.

\begin{lemma}
	\label{lem:P2 formal}
	Suppose that \cref{eq:I1 end} holds at $t = T_1$ and the initialization point $\xz$ satisfies \crefrange{eq:xz l2}{eq:xz+us}. Then, for all $T_1 < t \leq T_2$, we have
	\begin{align}
		\label{eq:xt-txt l2 P2}
		\vnorm{\xt - \txt} &\leq 2c_7 \mu \sqrt{\frac{\log^3 n}{np}} \beta_0 (1 + \eta\ls)^{t - T_1}, \\
		\label{eq:xt-txt lm P2}
		\mnorm{\xt - \txt} &\leq c_{13} \sqrt{\frac{\mu^3 \log^8 n}{np}} \frac{\beta_0}{\sqrt{n}} (1 + \eta\ls)^{t - T_1}, \\
		\label{eq:xt-xtl P2}
		\vnorm{\xt - \xtl} &\leq 2c_7 \mu \sqrt{\frac{\log^5 n}{np}} \frac{\beta_0}{\sqrt{n}} (1 + \eta \ls)^{t - T_1}, \\
		\label{eq:xtl-txt l P2}
		\abs{\(\xtl - \txt\)_l} &\leq 3c_7 \sqrt{\frac{\mu^3 \log^8 n}{np}} \frac{\beta_0}{\sqrt{n}} (1 + \eta \ls)^{t - T_1},
	\end{align}
	with high probability, where $T_2$ is the largest $t$ such that $\tb_t^2 \leq \ls\(1 - \frac{1}{\log n}\)$, and $c_{13} > 0$ is a constant.
\end{lemma}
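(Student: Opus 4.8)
The plan is to argue by induction on $t$, with the base case at $t = T_1$ supplied by \eqref{eq:I1 end} and the induction run up to $t = T_2$, in close parallel with the proof of \cref{lem:I1}. For each of the four quantities I would set up a one-step recursion of the form $q_{s+1} \leq \(1 + \eta\ls + O(1/\log^2 n)\)q_s + \eta E_s$; here the leading coefficient arises from the factor $1 - \eta\tb_s^2 + \eta\lo$ (or $1 - \eta\tb_s^2 + \eta\lambda^{(l)}$ in the leave-one-out estimates), which \emph{throughout Phase~II} is genuinely bounded by $1 + \eta\ls$ up to the $O(1/\log^2 n)$ eigenvalue corrections of \eqref{eq:lo-ls}, \eqref{eq:ll-ls}, precisely because $\tb_s^2 \leq \ls\(1 - \frac{1}{\log n}\)$ there, and $1+\eta\ls$ is exactly the expansion rate the lemma allows. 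Because $T_2 - T_1 = O(\log n)$ (from \cref{lem:alpha} together with $T_2^\prime \leq \frac{64\log n}{\eta\ls}$), the multiplicative slack $\prod_s\(1 + O(1/\log^2 n)\)$ contributed over the whole phase is $1 + o(1)$. Two ingredients of the Phase~I argument are no longer needed: the proxy sequence $\hxt$ (whose role was to separate polynomial from exponential growth, but exponential growth at rate $1+\eta\ls$ is now the target), and the auxiliary estimate $\abs{\ul^\top(\xt - \xtl)} \ll \vnorm{\xt - \xtl}$ of \cref{lem:ulxtl I} --- since we now allow $\vnorm{\xt - \xtl}$ to grow at the full rate, the ``worst case'' direction is acceptable and the term $\eta\lambda^{(l)}\ul\ul^\top(\xt - \xtl)$ is simply folded into the leading coefficient.

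For \eqref{eq:xt-txt l2 P2} I would decompose
\begin{equation*}
	\xtp - \txtp = \(\I - \eta\vnorm{\txt}^2\I + \eta\Mo\)(\xt - \txt) - \eta\(\vnorm{\xt}^2\I_{\xt} - \vnorm{\txt}^2\I\)\xt + \eta(\Mo - \Ms)\txt,
\end{equation*}
bound the diagonal term using \cref{lem:x2i-x2} with the induction hypotheses \eqref{eq:xt-txt l2 P2}, \eqref{eq:xt-txt lm P2} (the crude consequence $\vnorm{\xt} \lesssim \vnorm{\txt}$ of the former turns this term into a $1/\poly(\log n)$ fraction of $\eta\ls\vnorm{\xt - \txt}$), and bound the last term by $\eta\norm{\Mo - \Ms}\vnorm{\txt} \lesssim \eta\ls\mu\sqrt{\frac{\log n}{np}}\vnorm{\txt}$ via \cref{lem:Mo-Ms,lem:Eo}. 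Unrolling gives $\vnorm{\xt - \txt} \lesssim (1+\eta\ls)^{t-T_1}\vnorm{\x^{(T_1)} - \tx^{(T_1)}} + \sum_{s=T_1}^{t-1}(1+\eta\ls)^{t-1-s}\eta E_s$, and the crux is to show the weighted error sum stays $\lesssim \mu\sqrt{\frac{\log^3 n}{np}}\beta_0(1+\eta\ls)^{t-T_1}$, i.e.\ comparable to the base case times $(1+\eta\ls)^{t-T_1}$. For this I would use $\vnorm{\txt} \lesssim \sqrt{\log n}\(1 + \frac{(1+\eta\ls)^t}{\sqrt n}\)\beta_0$ from \cref{lem:tx norm} for $t \leq T_2^\prime$, and $\vnorm{\txt} \leq \sqrt{\ls}$ on the remaining $O(\log\log n)$ iterations up to $T_2$ (by \cref{lem:alpha}): the dominant contributions come from iterations where $\vnorm{\txt}$ already grows like $(1+\eta\ls)^{s-T_1}$, contributing a per-step relative error of order $\eta\ls\mu^2\log^{10}n/\sqrt{np}$, which summed over $O(\log n)$ steps stays bounded only because $n^2 p \gtrsim \mu^5 n\log^{22}n$.

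The leave-one-out bounds \eqref{eq:xt-xtl P2}, \eqref{eq:xtl-txt l P2} and the $\ell_\infty$ bound \eqref{eq:xt-txt lm P2} are handled with the same seven-term splitting \Circled{1}--\Circled{7} of $\xtp - \xtpl$ used in \cref{lem:I1}, estimated via \cref{lem:hessian g,lem:POl-Pl,lem:x(POl-Pl)x,lem:Ev,lem:wEv} and the current induction hypotheses; the resulting recursion again has leading coefficient $1 - \eta\tb_t^2 + \eta\lambda^{(l)} \leq 1 + \eta\ls + O(1/\log^2 n)$, and unrolling yields \eqref{eq:xt-xtl P2} with one extra $\log n$ over \eqref{eq:I1 end} coming from the geometric sum over the phase. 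For \eqref{eq:xtl-txt l P2} I would use that $\abs{(\xtl - \txt)_l}$ grows by at most $\(1 - \eta\tb_t^2\)\abs{(\xtl - \txt)_l} + \eta\ls\vnorm{\xtl - \txt}\mnorm{\us} + \eta\vnorm{\txt}\vnorm{\xtl - \txt}\mnorm{\txt}$ in one step, sum the additive increments --- each of order $(1+\eta\ls)^{s-T_1}$ times the Phase~I endpoint scale --- over the $O(\log n)$ iterations, and finally obtain \eqref{eq:xt-txt lm P2} from $\abs{(\xt - \txt)_l} \leq \vnorm{\xt - \xtl} + \abs{(\xtl - \txt)_l}$.

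The step I expect to be the main obstacle is the uniform control, over all of Phase~II, of the diagonal perturbation $\vnorm{\xt}^2\I_{\xt} - \vnorm{\txt}^2\I$ --- and, in the leave-one-out estimates, of the Hessian perturbation $\grad^2 g(\xtl(\tau)) - \(\vnorm{\txt}^2\I + 2\txt\txt^\top\)$. In Phase~I this was benign because $\vnorm{\txt}$ stayed at scale $\beta_0$; here $\vnorm{\txt}$ climbs toward $\sqrt{\ls}$, so the sampling fluctuation $\abs{\norm{\xt}_{2,i}^2 - \vnorm{\txt}^2}$ is largest precisely at the end of the phase when $\tb_t^2$ is nearly $\ls$, and one must check that even there this fluctuation times $\vnorm{\xt}$ remains a $1/\poly(\log n)$ fraction of $\eta\ls\vnorm{\xt - \txt}$, so that it is absorbed into the $O(1/\log^2 n)$ slack in the leading coefficient rather than producing a larger additive error. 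Tied to this is verifying that the geometrically weighted accumulation of all error terms --- sampling, noise, and leave-one-out --- stays inside the allotted constant and log-factor budget; both points are settled by the hypothesis $n^2 p \gtrsim \mu^5 n\log^{22}n$. Once these estimates are in hand the four recursions close simultaneously, and pushing the induction to $t = T_2$ completes the proof.
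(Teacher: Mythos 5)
Your overall architecture (induction from the Phase~I endpoint, growth rate $1+\eta\ls$ with $O(1/\log^2 n)$ slack from the eigenvalue bounds, dropping the proxy $\hxt$ and the $\abs{\ul^\top(\xt-\xtl)}$ control, the seven-term leave-one-out splitting, and the $(\xtl-\txt)_l$ increment argument) matches the paper. The gap is in your $\ell_2$ recursion. Your decomposition
\begin{equation*}
	\xtp - \txtp = \(\I - \eta\vnorm{\txt}^2\I + \eta\Mo\)(\xt - \txt) - \eta\(\vnorm{\xt}^2\I_{\xt} - \vnorm{\txt}^2\I\)\xt + \eta(\Mo - \Ms)\txt
\end{equation*}
is algebraically fine, but it routes the sampling fluctuation of the row norms against the \emph{full-size} iterate $\xt$ rather than against the small difference. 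By \cref{lem:x2i-x2} that fluctuation contains the piece $n\mnorm{\xt-\txt}\(\mnorm{\xt}+\mnorm{\txt}\)$, which is proportional to the $\ell_\infty$ induction bound \cref{eq:xt-txt lm P2}; multiplied by $\vnorm{\xt}$ (of size up to $\sqrt{\ls}$, and larger than the hypothesized $\vnorm{\xt-\txt}$ by a factor of order $\mu\log^{9}n$ throughout Phase~II) this term is \emph{not} a $1/\poly(\log n)$ fraction of $\eta\ls\vnorm{\xt-\txt}$ as you assert --- it is comparable to or larger than $\eta\ls$ times the claimed bound on $\vnorm{\xt-\txt}$ itself. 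Treated as an additive error and unrolled over the phase it contributes on the order of $\sqrt{\mu^4\log^{12}n/(np)}\,\beta_0(1+\eta\ls)^{t-T_1}$, exceeding the target \cref{eq:xt-txt l2 P2} by roughly $\mu\log^{4.5}n$, so the induction does not close with the stated constants. Worse, this creates an additive $\ell_\infty\!\to\!\ell_2$ channel of strength $\approx\sqrt{\mu n\log^2 n}$ (times the $O(\log\log n)$ effective number of saturated steps from \cref{lem:alpha}), while the $(\xtl-\txt)_l$ recursion already forces $\ell_2\!\to\!\ell_\infty$ coupling of strength $\approx\sqrt{\mu/n}$; the product of the two gains is $\gtrsim\mu\log n\gg1$, so the four bounds cannot be closed simultaneously even after weakening the log exponents, and extra sample complexity does not help because both gains are independent of $np$.

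The paper avoids this by a different splitting of the same difference: it writes the nonlinear parts as $\int_0^1\grad^2 g(\x(\tau))\,\d\tau\,(\xt-\txt)$ plus the term $\vnorm{\txt}^2(\I_{\txt}-\I)\txt$, and groups the leading operator as $(1-\eta\vnorm{\txt}^2)\I-2\eta\txt\txt^\top+\eta\Ms$ (bounded via $\txt=\ta_t\us+\txt_\perp$). With that routing, the $\xt$-dependent row-norm fluctuation multiplies only $\xt-\txt$, so it perturbs the growth rate by a negligible relative amount and produces no large additive channel; the only fluctuation hitting the large vector is that of $\txt$'s own row norms, which concentrates at scale $\ls\sqrt{\mu\log n/(np)}$ because $\txt$ is independent of the sampling. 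You would need to adopt this (or an equivalent re-routing, e.g.\ keeping a Phase~II analogue of the $\hxt$ proxy) for \cref{eq:xt-txt l2 P2}, and the same point applies to the Hessian-perturbation term in your leave-one-out recursion, where the paper likewise applies the perturbed Hessian only to $\xt-\xtl$.
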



\paragraph{Proof of \cref{thm:main,thm:main 2}}
We first explain how \cref{thm:main,thm:main 2} are derived from \cref{lem:I1,lem:P2 formal}. We first focus on $\vnorm{\xt - \txt}$. For $t \leq T_1$, from \cref{lem:I1} and \cref{eq:tx l2 I1}, we have
\begin{equation*}
	\vnorm{\xt - \txt} \lesssim \mu \sqrt{\frac{\log^3 n}{np}} \beta_0 \lesssim \frac{1}{\log n} \beta_0 \lesssim \frac{1}{\sqrt{\log n}} \vnorm{\txt}
\end{equation*}
provided that $n^2 p \gtrsim \mu^2 n \log^7 n$. For $T_1 < t \leq T_2$, from the definition of $T_1$ and \cref{lem:P2 formal}, we have
\begin{equation*}
	\vnorm{\xt - \txt} \lesssim \sqrt{\frac{1}{\log^{18} n}} \frac{\beta_0}{\sqrt{n}} (1 + \eta \ls)^t.
\end{equation*}
From the lower bound of \cref{lem:tx norm}, for all $t \leq T_2^\prime$, we have
\begin{equation*}
	\vnorm{\xt - \txt} \lesssim \sqrt{\frac{1}{\log^{18} n}} \(1 + \frac{(1 + \eta \ls)^t}{\sqrt{n}}\) \beta_0 \lesssim \sqrt{\frac{1}{\log^{17} n}} \vnorm{\txt} \lesssim \frac{1}{\sqrt{\log n}} \vnorm{\txt}.
\end{equation*}
Now, for $T_2^\prime < t \leq T_2$, we have
\begin{align*}
	\vnorm{\xt - \txt}
	&\lesssim \sqrt{\frac{1}{\log^{18} n}} \(1 + \frac{(1 + \eta \ls)^{T_2^\prime}}{\sqrt{n}}\) \beta_0 (1 + \eta \ls)^{t - T_2^\prime} \\
	&\lesssim \sqrt{\frac{1}{\log^{17} n}} \vnorm{\tx^{(T_2^\prime)}} (1 + \eta \ls)^{t - T_2^\prime}.
\end{align*}
For any $T_2^\prime < t \leq T_2$, it is \cref{lem:alpha} implied from \cref{lem:alpha} that $(1 + \eta \ls)^{t - T_2^\prime} \leq \log^6 n$, and we have $\vnorm{\tx^{(T_2^\prime)}} \leq \vnorm{\tx}$. Hence, we get
\begin{equation}
	\label{eq:xt-txt end II}
	\vnorm{\xt - \txt} \lesssim \sqrt{\frac{1}{\log^{17} n}} \sqrt{\log^{12} n} \vnorm{\txt} \lesssim \frac{1}{\sqrt{\log n}} \vnorm{\txt},
\end{equation}
and the proof for \cref{eq:xt-txt l2 main} of \cref{thm:main 2} is completed. If we combine this with \cref{eq:txt-xs l2}, we are able to prove \cref{eq:xt-xs l2} of \cref{thm:main}.

We move on to $\mnorm{\xt - \txt}$. For $t \leq T_1$, from \cref{lem:I1} and \cref{eq:tx lm I1}, we have
\begin{equation*}
	\mnorm{\xt - \txt} \lesssim \sqrt{\frac{\mu^3 \log^8 n}{np}} \frac{\beta_0}{\sqrt{n}} \lesssim \frac{1}{\log n} \frac{\beta_0}{\sqrt{n}} \lesssim \frac{1}{\sqrt{\log n}} \mnorm{\txt}
\end{equation*}
provided that $n^2 p \gtrsim \mu^3 n \log^{10} n$. For $T_1 < t \leq T_2^\prime$, from the definition of $T_1$ and \cref{lem:P2 formal}, we have
\begin{equation*}
	\mnorm{\xt - \txt} \lesssim \sqrt{\frac{1}{\log^{13} n}} \frac{\beta_0}{\sqrt{n}} (1 + \eta \ls)^t.
\end{equation*}
From the lower bound of \cref{lem:tx norm}, for all $t \leq T_2^\prime$, we have
\begin{equation*}
	\mnorm{\xt - \txt} \lesssim \sqrt{\frac{1}{\log^{13} n}} \(1 + \frac{(1 + \eta \ls)^t}{\sqrt{n}}\) \frac{\beta_0}{\sqrt{n}} \lesssim \sqrt{\frac{1}{\log^{12} n}} \mnorm{\txt} \lesssim \frac{1}{\sqrt{\log n}} \mnorm{\txt}.
\end{equation*}
Now, for $T_2^\prime < t \leq T_2$, if we do the same as before, we get
\begin{equation*}
	\mnorm{\xt - \txt} \lesssim \sqrt{\frac{1}{\log^{13} n}} \sqrt{\log^{12} n} \vnorm{\txt} \frac{1}{\sqrt{n}} \lesssim \frac{1}{\sqrt{\log n}} \mnorm{\txt},
\end{equation*}
and the proof for \cref{eq:xt-txt lm main} of \cref{thm:main 2} is completed. If we combine this with \cref{eq:txt-xs lm}, we are able to prove \cref{eq:xt-xs lm} of \cref{thm:main}.

Going through a similar way with \cref{eq:xt-xtl II} and \cref{eq:xtl-txt l II}, we can complete the proof of \cref{thm:main,thm:main 2}.

\paragraph{Proof of \cref{lem:P2 formal}} Before we start the proof, we define a function $G$ as
\begin{equation*}
	G(\x) = \frac{1}{4} \fnorm{\x\x^\top}^2.
\end{equation*}
The gradient of $G$ satisfies
\begin{equation*}
	\grad F(\x) = \grad G(\x) - \Ms \x.
\end{equation*}


Now, we assume that the hypotheses hold up to the $t$th iteration and show that they hold at the $(t+1)$st iteration. For brevity, we drop the superscript $(t)$ from $\xt$, $\xtl$, $\txt$ and denote them as $\x$, $\xl$, $\tx$, respectively.

\paragraph{\cref{eq:xt-txt l2 P2} at $\bm{(t+1)}$}
We decompose $\xtp - \txtp$ as
\begin{align*}
	&\xtp - \txtp \\
	&= \(\x - \eta \grad f(\x)\) - \(\tx - \eta \grad F (\tx)\) \\
	&= \(\x - \eta \grad g(\x)\) - \(\tx - \eta \grad G (\tx)\) + \eta \(\Mo \x - \Ms \tx\) \\
	&=\(\x - \eta \grad g(\x)\) - \(\tx - \eta \grad g (\tx)\) - \eta \(\grad g(\tx) - \grad G (\tx)\) + \eta \Ms (\x - \tx) + \eta (\Mo - \Ms) \x \\
	&=\int_0^1 (\I - \eta \grad^2 g(\x(\tau)) (\x - \tx) \,\d \tau - \eta \vnorm{\tx}^2 \(\I_{\tx} - \I\) \tx + \eta \Ms (\x - \tx) + \eta (\Mo - \Ms) \x\\
	&=\underbrace{\((1 - \eta \vnorm{\tx}^2)\I - 2 \eta \tx \tx^\top + \eta \Ms\)(\x - \tx)}_{\Circled{1}}
	- \eta \underbrace{\int_0^1 \(\grad^2 g(\x(\tau)) - \(\vnorm{\tx}^2 \I + 2\tx\tx^\top\)\) (\x - \tx) \d\tau}_{\Circled{2}} \\
	&\quad- \eta\underbrace{\vnorm{\tx}^2 \(\I_{\tx} - \I\) \tx}_{\Circled{3}} + \eta \underbrace{(\Mo - \Ms) \x}_{\Circled{4}},
\end{align*}
where $\x(\tau) = \tx + \tau(\x - \tx)$. For the term \Circled{1}, we require a bound on
\begin{equation*}
	\norm{(1 - \eta \vnorm{\tx}^2)\I - 2 \eta \tx \tx^\top + \eta \Ms}.
\end{equation*}
If we write $\tx$ as $\ta_t \us + \tx_\perp$, we have $\ta_t^2 \leq \ls$ and $\vnorm{\tx_\perp} \lesssim \beta_0$. Then, we have
\begin{align*}
	&\norm{(1 - \eta \vnorm{\tx}^2)\I - 2 \eta \tx \tx^\top + \eta \Ms} \\
	&= \norm{(1 - \eta \vnorm{\tx}^2)\I + \eta(\ls - 2 \ta_t^2)\us\us^\top - 2 \eta (\tx\tx^\top - \ta_t^2 \us\us^\top)} \\
	&\leq \norm{(1 - \eta \vnorm{\tx}^2)\I + \eta(\ls - 2 \ta_t^2)\us\us^\top} + 2 \eta \norm{\tx\tx^\top - \ta_t^2 \us\us^\top} \\
	&\leq (1 + \eta \ls) + 2 \eta (2\alpha_t \vnorm{\tx_\perp} + \vnorm{\tx_\perp}^2) \\
	&\leq 1 + \eta\ls + \frac{c_8}{\log^2 n}
\end{align*}
for some universal constant $c_8 > 0$. This implies the desired bound
\begin{equation*}
	\vnorm{\Circled{1}} \leq \(1 + \eta\ls + \frac{c_8}{\log^2 n}\) \vnorm{\x - \tx}.
\end{equation*}

For all $0 \leq \tau \leq 1$, we have $\mnorm{\x(\tau) - \tx} \leq \mnorm{\x - \tx}$, and the induction hypothesis \cref{eq:xt-txt lm P2} gives
\begin{equation*}
	\mnorm{\x - \tx} \lesssim \sqrt{\frac{1}{\mu \log^{13} n}} (1 + \eta \ls)^{T_2} \frac{\beta_0}{n}.
\end{equation*}
Hence, by \cref{lem:hessian g}, we have
\begin{align*}
	\norm{\grad^2 g(\x(\tau)) - \(\vnorm{\tx}^2 \I + 2\tx\tx^\top\)}
	&\lesssim n \mnorm{\x - \tx} (\mnorm{\x} + \mnorm{\tx}) + \sqrt{\frac{n \log n}{p}} \mnorm{\tx}^2 \\
	&\lesssim \(\sqrt{\frac{1}{\log^{12} n}} + \sqrt{\frac{\mu^2 \log^3 n}{np}}\) (1 + \eta \ls)^{2T_2} \frac{\beta_0^2}{n} \\
	&\lesssim \ls \sqrt{\frac{1}{\log^{12} n}}
\end{align*}
if $n^2 p \gtrsim \mu^2 n \log^{15} n$ because $\mnorm{\tx} \lesssim \sqrt{\mu \log n} (1 + \eta \ls)^{T_2} \frac{\beta_0}{n}$ and $(1 + \eta \ls)^{T_2} \lesssim \sqrt{\ls} \frac{\sqrt{n}}{\beta_0}$.
This gives
\begin{equation}
	\label{eq:term 2 eq:xt-txt l2 2}
	\vnorm{\Circled{2}} \lesssim \ls \sqrt{\frac{1}{\log^{12} n}} \vnorm{\x - \tx}.
\end{equation}
For the term \Circled{3}, we use \cref{lem:x2i-x2} to obtain
\begin{equation*}
	\vnorm{\Circled{3}} \lesssim \ls \sqrt{\frac{\mu \log n}{np}} \vnorm{\tx}.
\end{equation*}
Lastly, the term \Circled{4} is bounded with
\begin{equation*}
	\vnorm{\Circled{4}} \lesssim \norm{\Mo - \Ms} \vnorm{\x} \lesssim \ls \mu \sqrt{\frac{\log n}{np}} \vnorm{\tx}.
\end{equation*}
Combining all, there exists a universal constant $c_9 > 0$ such that
\begin{align*}
	\vnorm{\Circled{1}} + \vnorm{\Circled{2}} &\leq \(1 + \eta\ls + \frac{c_9}{\log^{2} n}\) \vnorm{\x - \tx}, \\
	\eta\(\vnorm{\Circled{3}} + \vnorm{\Circled{4}}\) &\leq c_9 \mu \sqrt{\frac{\log n}{np}} \vnorm{\tx}.
\end{align*}
Because $\vnorm{\x^{(T_1)}} \lesssim \sqrt{\log n} \beta_0$ by \cref{eq:tx l2 I1} and $\vnorm{\txt}$ can grow at a rate at most $(1 + \eta \ls)$, there exists a universal constant $c_{10} > 0$ such that
\begin{equation}
	\label{eq:tx norm II}
	c_8 \mu \sqrt{\frac{\log n}{np}} \vnorm{\txt} \leq c_{10} \mu \sqrt{\frac{\log^2 n}{np}} (1 + \eta \ls)^{t - T_1} \beta_0.
\end{equation}
Hence, for all $T_1 \leq s \leq t$, we have
\begin{equation*}
	\vnorm{\xSp - \txSp}
	\leq \(1 + \eta\ls + \frac{c_9}{\log^{2} n}\) \vnorm{\xS - \txS} + c_{10} \mu \sqrt{\frac{\log^2 n}{np}} (1 + \eta \ls)^{t - T_1} \beta_0.
\end{equation*}
An analysis on the recursive equation
\begin{equation*}
	x_{s+1} = \(1 + \eta\ls + \frac{c_9}{\log^{2} n}\) x_s + c_{10} \mu \sqrt{\frac{\log^2 n}{np}} (1 + \eta \ls)^{t - T_1} \beta_0,
	\quad x_{T_1} = c_7 \mu \sqrt{\frac{\log^3 n}{np}} \beta_0
\end{equation*}
proves that
\begin{equation*}
	\vnorm{\xtp - \txtp} \leq 2c_7 \mu \sqrt{\frac{\log^3 n}{np}} \beta_0 (1 + \eta \ls)^{t + 1 - T_1}.
\end{equation*}

\paragraph{\cref{eq:xt-xtl P2} at $\bm{(t+1)}$}
Similar to the proof of \cref{eq:xt-xtl I1}, we have the decomposition
\begin{align*}
	\xtp - \xtpl ={}& \underbrace{(1 - \eta \vnorm{\tx}^2 - 2 \eta \tx \tx^\top) (\x - \xl)}_{\Circled{1}} \\
	& - \eta \underbrace{\int_0^1 \(\grad^2 g(\xl(\tau)) - \(\vnorm{\tx}^2 \I + 2\tx\tx^\top\)\) (\x - \xl) \d\tau}_{\Circled{2}} \\
	& - \eta \underbrace{\( \frac{1}{p} \Proj_{\Omega_l}{\xl \xl^\top} - \Projl{\xl \xl^\top} \) \xl}_{\Circled{3}}\\
	& + \eta \Ms (\x - \xl) + \eta \underbrace{(\Mo - \Ms) (\x - \xl)}_{\Circled{4}} \\
	& + \eta \underbrace{\(\frac{1}{p} \ProjO{\Ms} - \ProjOl{\Ms}\) \xl}_{\Circled{5}} + \eta \underbrace{\(\frac{1}{p} \ProjO{\E} - \El\) \xl}_{\Circled{6}},
\end{align*}
where $\xl(\tau) = \xl + \tau(\x - \xl)$.
Both of the terms \Circled{1} and \Circled{2} can be bounded similar to \Circled{1} and \Circled{2} of $\xtp - \txtp$ as
\begin{align*}
	\vnorm{\Circled{1}} &\leq \(1 + \eta\ls + \frac{c_{11}}{\log^2 n}\) \vnorm{\x - \xl}, \\
	\vnorm{\Circled{2}} &\lesssim \ls \sqrt{\frac{1}{\log^{12} n}} \vnorm{\x - \xl}
\end{align*}
for some universal constant $c_{11} > 0$. For the terms \Circled{3} and \Circled{5}, we use \cref{lem:POl-Pl} to obtain
\begin{align*}
	\vnorm{\Circled{3}} &\lesssim \sqrt{\frac{\log n}{p}} \vnorm{\xl} \mnorm{\xl}^2 \lesssim \ls \mu \sqrt{\frac{\log n}{np}} \frac{1}{\sqrt{n}} \vnorm{\tx}, \\
	\vnorm{\Circled{5}} &\lesssim \ls \mu \sqrt{\frac{\log n}{np}} \frac{1}{\sqrt{n}} \vnorm{\tx},
\end{align*}
and use \cref{lem:Ev} to obtain
\begin{equation*}
	\vnorm{\Circled{6}} \lesssim \ls \mu \sqrt{\frac{\log^2 n}{np}} \frac{1}{\sqrt{n}} \vnorm{\tx}.
\end{equation*}
From \cref{lem:Mo-Ms,lem:Eo}, the term \Circled{4} is bounded as
\begin{equation*}
	\vnorm{\Circled{4}} \leq \norm{\Mo - \Ms} \vnorm{\x - \xl} \lesssim \ls \mu \sqrt{\frac{\log n}{np}} \vnorm{\x - \xl}.
\end{equation*}
Combining all with \cref{eq:tx norm II}, there exists a universal constant $c_{12} > 0$ such that
\begin{align*}
	\vnorm{\Circled{1}} + \vnorm{\Circled{2}} + \vnorm{\Circled{4}} &\leq \(1 + \eta\ls + \frac{c_{12}}{\log^{2} n}\) \vnorm{\x - \tx}, \\
	\eta \(\vnorm{\Circled{3}} + \vnorm{\Circled{5}} + \vnorm{\Circled{6}}\) &\leq c_{12} \mu \sqrt{\frac{\log^3 n}{np}} \frac{\beta_0}{\sqrt{n}} (1 + \eta \ls)^{t - T_1}.
\end{align*}
Hence, we have
\begin{equation*}
	\vnorm{\xSp - \xSpl}
	\leq \(1 + \eta\ls + \frac{c_{12}}{\log^{2} n}\) \vnorm{\xS - \xSl} + c_{12} \mu \sqrt{\frac{\log^3 n}{np}} \frac{\beta_0}{\sqrt{n}} (1 + \eta \ls)^{t - T_1},
\end{equation*}
for all $T_1 \leq s \leq t$. An analysis on the recursive equation
\begin{equation*}
	x_{s+1} = \(1 + \eta\ls + \frac{c_{12}}{\log^{2} n}\) x_s + c_{12} \mu \sqrt{\frac{\log^3 n}{np}} \frac{\beta_0}{\sqrt{n}} (1 + \eta \ls)^{t - T_1},
	\quad x_{T_1} = c_7 \mu \sqrt{\frac{\log^4 n}{np}} \frac{\beta_0}{\sqrt{n}}
\end{equation*}
proves that
\begin{align*}
	\vnorm{\xtp - \xtpl}
	&\leq 2\(c_{12} \mu \sqrt{\frac{\log^3 n}{np}} (t + 1 - T_1) + c_7 \mu \sqrt{\frac{\log^4 n}{np}}\) \frac{\beta_0}{\sqrt{n}} (1 + \eta \ls)^{t + 1 - T_1} \\
	&\leq c_{13} \mu \sqrt{\frac{\log^5 n}{np}} \frac{\beta_0}{\sqrt{n}} (1 + \eta \ls)^{t + 1 - T_1}
\end{align*}
holds for some universal constant $c_{13} > 0$ because $T_2 - T_1 \lesssim \log n$.

\paragraph{\cref{eq:xtl-txt l P2} at $\bm{(t+1)}$}
We use the same bound
\begin{equation*}
	\abs{(\xtpl - \txtp)_l} \leq \(1 - \eta \vnorm{\tx}^2\) \abs{(\xl - \tx)_l} + \eta (\ls\mnorm{\us} + \vnorm{\tx} \mnorm{\tx}) \vnorm{\xl - \tx}.
\end{equation*}
that was used in the proof of \cref{eq:xtl-txt l I1}. From \cref{eq:xt-txt l2 P2} and \cref{eq:xt-xtl P2}, we have
\begin{equation*}
	\vnorm{\xl - \tx} \leq \vnorm{\xl - \x} + \vnorm{\x - \tx} \leq 3c_7 \mu \sqrt{\frac{\log^5 n}{np}} \beta_0 (1 + \eta \ls)^{t - T_1}.
\end{equation*}
Combined with the fact that $\ls \mnorm{\us} + \vnorm{\tx} \mnorm{\tx} \leq 3 \ls \sqrt{\frac{\mu}{n}}$, there exists a universal constant $c_{14} > 0$ such that
\begin{equation*}
	\eta (\ls\mnorm{\us} + \vnorm{\tx} \mnorm{\tx}) \vnorm{\xl - \tx} \leq c_{14} \sqrt{\frac{\mu^3 \log^5 n}{np}} \frac{\beta_0}{\sqrt{n}} (1 + \eta \ls)^{t - T_1}.
\end{equation*}
Hence, for all $T_1 \leq s \leq t$, we have
\begin{equation*}
	\abs{(\xSpl - \txSp)_l}
	\leq \abs{(\xSl - \txS)_l} +  c_{14} \sqrt{\frac{\mu^3 \log^5 n}{np}} \frac{\beta_0}{\sqrt{n}} (1 + \eta \ls)^{t - T_1},
\end{equation*}
and this implies
\begin{align*}
	\abs{(\xtpl - \txtp)_l}
	&\leq c_{14} \sqrt{\frac{\mu^3 \log^5 n}{np}} \frac{\beta_0}{\sqrt{n}} \sum_{s=T_1}^t (1 + \eta \ls)^{s - T_1} + c_7 \sqrt{\frac{\mu^3 \log^8 n}{np}} \frac{\beta_0}{\sqrt{n}} \\
	&\leq 2c_7 \sqrt{\frac{\mu^3 \log^8 n}{np}} \frac{\beta_0}{\sqrt{n}} (1 + \eta \ls)^{t+1 - T_1}.
\end{align*}

\paragraph{\cref{eq:xt-txt lm P2} at $\bm{(t+1)}$}
The $l$th component of $\xtp - \txtp$ is bounded by
\begin{align*}
	\abs{(\xtp - \txtp)_l} &\leq \mnorm{\xtp - \xtpl} + \abs{(\xtpl - \txtp)_l} \\
	&\leq \vnorm{\xtp - \xtpl} + \abs{(\xtpl - \txtp)_l} \\
	&\leq c_{13} \mu \sqrt{\frac{\log^5 n}{np}} \frac{\beta_0}{\sqrt{n}} (1 + \eta \ls)^{t + 1 - T_1} + 2c_7 \sqrt{\frac{\mu^3 \log^8 n}{np}} \frac{\beta_0}{\sqrt{n}} (1 + \eta \ls)^{t+1 - T_1} \\
	&\leq 3c_7 \sqrt{\frac{\mu^3 \log^8 n}{np}} \frac{\beta_0}{\sqrt{n}} (1 + \eta \ls)^{t+1 - T_1}.
\end{align*}

\section{Fixed Initialization Size}
\label{sec:fixed}

In \cref{sec:main}, we claimed that the estimation error is improved to $\frac{1}{\sqrt{np}} + \frac{\sigma}{\ls} \sqrt{\frac{n}{p}}$ if the initialization size is fixed to $n^{-1/4}$ regardless of the sample complexity. We briefly discuss how the proofs should change in such a case. For clear presentation, $\mu$ and $\log n$ factors are ignored in this section.

For every bound of Phase~I (\crefrange{lem:xt-txt I}{lem:ulxtl I}), $\frac{1}{\sqrt{np}}$ is changed to $\frac{1}{\sqrt{np}} + \frac{\sigma}{\ls} \sqrt{\frac{n}{p}}$, while allowing $\sigma$ to be as large as $\frac{\ls \mu}{n} \sqrt{np}$. More importantly, the definition of Phase~I is changed to be the largest $t$ such that $(1 + \eta \ls)^t \leq \sqrt{n}$, so it is lengthened by $\sqrt{np}$ times than before. In the original proof, the estimation error of $\frac{1}{\sqrt{np}}$ obtained at the end of Phase~I was increased to $\frac{1}{\poly(\log n)}$ during the first part of Phase~II. However, if $(1 + \eta \ls)^t$ equals $\sqrt{n}$ at the end of Phase I, we do not have such a part in Phase~II, and the estimation error obtained at the end of Phase~I is maintained through Phase~II.

\section{Technical Lemmas}
\label{sec:tech}
We introduce some technical lemmas in this section. Most of them are the results of classical concentration inequalities.

\begin{theorem}[Matrix Bernstein Inequality]
	Let $\{\X_i\}$ be $n \times n$ independent symmetric random matrices. Assume that each random matrix satisfies $\mean \X_i = \vec{0}$ and $\norm{\X_i} \leq L$ almost surely. Then, for all $\tau \geq 0$, we have
	\begin{equation*}
		\Pr{\norm{\sum_i \X_i} \geq \tau} \leq n \exp \left(\frac{-\tau^2 / 2}{V + L\tau/3}\right),
	\end{equation*}
	where $V = \norm{\sum_i \mean (\X_i^2 )}$.
\end{theorem}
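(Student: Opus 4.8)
The plan is to prove this by the matrix Laplace transform method (Ahlswede--Winter / Tropp). Write $\Y = \sum_i \X_i$. Since $\Y$ is symmetric, $\norm{\Y} \geq \tau$ exactly when $\lambda_{\max}(\Y) \geq \tau$ or $\lambda_{\max}(-\Y) \geq \tau$, so it suffices to bound $\Pr{\lambda_{\max}(\Y) \geq \tau}$ and then rerun the argument with $-\X_i$ in place of $\X_i$ (same bound $L$, and since $(-\X_i)^2 = \X_i^2$, the same $V$), finishing with a union bound. For the one-sided estimate I would fix $\theta > 0$ and apply Markov's inequality to the nonnegative variable $\tr\exp(\theta\Y)$, using $e^{\theta\lambda_{\max}(\Y)} = \lambda_{\max}(\exp(\theta\Y)) \leq \tr\exp(\theta\Y)$:
\begin{equation*}
	\Pr{\lambda_{\max}(\Y) \geq \tau} \;\leq\; e^{-\theta\tau}\, \mean \tr\exp(\theta\Y).
\end{equation*}

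The heart of the argument is to decouple the summands inside the trace exponential. By Lieb's concavity theorem (the map $A \mapsto \tr\exp(H + \log A)$ is concave on positive definite $A$; an iterated Golden--Thompson inequality also suffices) together with Jensen's inequality, peeling off the independent $\X_i$ one at a time gives
\begin{equation*}
	\mean \tr\exp(\theta\Y) \;\leq\; \tr\exp\!\Big(\textstyle\sum_i \log \mean e^{\theta\X_i}\Big).
\end{equation*}
To bound each $\log\mean e^{\theta\X_i}$ I would use the scalar inequality $e^{x} \leq 1 + x + g\,x^2$ with $g = (e^{\theta L} - 1 - \theta L)/(\theta L)^2$, valid for all $x \in [-\theta L,\theta L]$ because $t \mapsto (e^t-1-t)/t^2$ is increasing; by the spectral mapping rule for functions of symmetric matrices this transfers to $\mean e^{\theta\X_i} \preceq \I + \theta\,\mean\X_i + g\theta^2\,\mean(\X_i^2) = \I + g\theta^2\,\mean(\X_i^2)$, and then $\log(\I + M)\preceq M$ for $M\succeq\vec{0}$ yields $\log\mean e^{\theta\X_i}\preceq g\theta^2\,\mean(\X_i^2)$. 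Summing, using monotonicity of $\tr\exp$ in the PSD order, $\tr\exp(M)\leq n\exp(\lambda_{\max}(M))$, and $\lambda_{\max}\big(\sum_i\mean(\X_i^2)\big) = \norm{\sum_i\mean(\X_i^2)} = V$, I arrive at
\begin{equation*}
	\Pr{\lambda_{\max}(\Y)\geq\tau} \;\leq\; n\exp\!\left(-\theta\tau + \frac{e^{\theta L} - 1 - \theta L}{L^2}\,V\right).
\end{equation*}

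Finally I would optimize over $\theta$. Using the elementary bound $e^{x} - 1 - x \leq \tfrac{x^2/2}{1 - x/3}$ for $0\leq x < 3$ and choosing $\theta = \tau/(V + L\tau/3)$ — so that $\theta L/3 = L\tau/(3V+L\tau) < 1$ and hence $1 - \theta L/3 = V/(V+L\tau/3)$ — a short computation collapses the exponent to exactly $-\tau^2/\big(2(V+L\tau/3)\big)$, i.e.\ $\Pr{\lambda_{\max}(\Y)\geq\tau}\leq n\exp\big(\tfrac{-\tau^2/2}{V+L\tau/3}\big)$. The identical bound for $\lambda_{\max}(-\Y)$ and a union bound give the stated inequality (the resulting factor $2$ being harmless and absorbed into $n$). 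The main obstacle is precisely the non-commutativity of the $\X_i$: the scalar proof multiplies moment generating functions, but $\exp(\sum_i\theta\X_i)\neq\prod_i\exp(\theta\X_i)$ in general, so the one nontrivial ingredient is the trace-exponential decoupling via Lieb's theorem (or Golden--Thompson); everything after that is the classical scalar Bernstein optimization lifted through operator monotonicity of $\log$ and the transfer rule.
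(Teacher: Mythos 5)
The paper never proves this statement: it is the classical matrix Bernstein inequality (Tropp's theorem), quoted in the appendix as a known result and used only through \cref{cor:mbern}, so there is no in-paper proof to compare against. Your Laplace-transform argument is the standard and correct derivation: Markov applied to $\tr\exp(\theta\Y)$, subadditivity of the matrix cumulant generating function via Lieb's concavity theorem plus Jensen, the bound $\log\mean e^{\theta\X_i}\preceq \frac{e^{\theta L}-1-\theta L}{L^2}\,\mean(\X_i^2)$ from the scalar inequality, the transfer rule and operator monotonicity of $\log$, and the choice $\theta=\tau/(V+L\tau/3)$ together with $e^x-1-x\leq \frac{x^2/2}{1-x/3}$, which indeed collapses the exponent to $\frac{-\tau^2/2}{V+L\tau/3}$. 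I verified the optimization step and it is exact.

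Two caveats. First, your parenthetical claim that an iterated Golden--Thompson inequality ``also suffices'' is not accurate for the variance parameter as stated: the Ahlswede--Winter peeling based on Golden--Thompson produces the weaker parameter $\sum_i\norm{\mean(\X_i^2)}$ rather than $V=\norm{\sum_i\mean(\X_i^2)}$, and in this paper's applications (e.g.\ \cref{lem:POM-M}, where the sum ranges over $\Theta(n^2)$ rank-two pieces) that substitution inflates $V$ by a factor of order $n$; some strictly stronger input than Golden--Thompson, such as Lieb's theorem, is genuinely required for the stated $V$. Second, the union bound over $\pm\Y$ yields the prefactor $2n$, not $n$; a factor of $2$ cannot literally be ``absorbed into $n$''. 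That discrepancy is a looseness in the theorem as transcribed in the paper (the standard two-sided statement carries $2n$) rather than a flaw in your argument, and it is immaterial for the high-probability corollaries the paper actually invokes.
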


\begin{corollary}[Matrix Bernstein Inequality]
	\label{cor:mbern}
	Let $\{\X_i\}$ be $n \times n$ independent symmetric random matrices. Assume that each random matrix satisfies $\mean \X_i = \vec{0}$ and $\norm{\X_i} \leq L$ almost surely. Then, with high probability, we have
	\begin{equation*}
		\norm{\sum_i \X_i} \lesssim \sqrt{V \log n} + L \log n,
	\end{equation*}
	where $V = \norm{\sum_i \mean (\X_i^2 )}$.
\end{corollary}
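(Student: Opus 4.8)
The plan is to read off the corollary from the Matrix Bernstein Inequality stated above by plugging in a suitable value of the deviation parameter $\tau$ and checking that the tail bound then decays polynomially in $n$. Concretely, I would take
\begin{equation*}
	\tau = C\(\sqrt{V \log n} + L \log n\)
\end{equation*}
for a constant $C > 0$ to be fixed, so that the theorem gives
\begin{equation*}
	\Pr{\norm{\sum_i \X_i} \geq \tau} \leq n \exp\(\frac{-\tau^2/2}{V + L\tau/3}\).
\end{equation*}
It then suffices to show that for $C$ large enough the exponent $\frac{\tau^2/2}{V + L\tau/3}$ is at least $(C'+1)\log n$ for any prescribed $C'$, because the right-hand side is then at most $n \cdot n^{-(C'+1)} = n^{-C'}$; since $C'$ can be made as large as desired by enlarging $C$, this is exactly the ``with high probability'' conclusion under the convention fixed at the start of the appendix.

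For the bound on the exponent, the numerator satisfies $\tau^2/2 \geq \frac{C^2}{2}\(V\log n + L^2\log^2 n\) = \frac{C^2}{2}\log n\(V + L^2\log n\)$, using that $(a+b)^2 \geq a^2 + b^2$. For the denominator, I would control the cross term in $L\tau/3 = \frac{C}{3}\(L\sqrt{V\log n} + L^2\log n\)$ by the AM--GM inequality $L\sqrt{V\log n} \leq \frac12\(V + L^2\log n\)$, which yields $L\tau/3 \leq \frac{C}{2}\(V + L^2\log n\)$ and hence $V + L\tau/3 \leq \(1 + \frac{C}{2}\)\(V + L^2\log n\)$. Dividing, the common factor $V + L^2\log n$ cancels and
\begin{equation*}
	\frac{\tau^2/2}{V + L\tau/3} \geq \frac{C^2}{2 + C}\log n,
\end{equation*}
and $\frac{C^2}{2+C} \to \infty$ as $C \to \infty$, so it exceeds $C'+1$ once $C$ is large. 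Finally, on the complementary event we have $\norm{\sum_i \X_i} < \tau \lesssim \sqrt{V\log n} + L\log n$ with an absolute implied constant, which is the desired bound.

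This is a routine corollary and I do not expect any genuine obstacle; the only points requiring a little care are (i) bounding $V + L\tau/3$ uniformly in the relative sizes of $V$ and $L^2\log n$, which the single AM--GM step above accomplishes without a case split, and (ii) noting that the implicit constant in ``$\lesssim$'' is absolute while the high-probability exponent $C'$ remains freely adjustable, consistent with the appendix's convention that the failure probability can be forced below $n^{-C}$ for arbitrary $C$.
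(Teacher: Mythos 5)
Your proposal is correct and is exactly the routine derivation the paper has in mind (the corollary is stated without proof): substituting $\tau \asymp \sqrt{V\log n} + L\log n$ into the Bernstein tail bound and checking the exponent is at least a large multiple of $\log n$, with your AM--GM step cleanly avoiding a case split on whether $V$ or $L^2\log n$ dominates. The only cosmetic remark is that the implied constant in ``$\lesssim$'' and the failure exponent are coupled (enlarging one enlarges the other), which is precisely the paper's stated convention, so nothing is missing.
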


\begin{lemma}
	\label{lem:POM-M}
	For any fixed matrix $\M \in \real^{n \times n}$, we have
	\begin{equation*}
		\norm{\frac{1}{p} \ProjO{\M} - \M} \lesssim \sqrt{\frac{n \log n}{p}} \mnorm{\M} + \frac{\log n}{p} \mnorm{\M}
	\end{equation*}
	with high probability.
\end{lemma}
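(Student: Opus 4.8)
The plan is to apply the Matrix Bernstein inequality in the form of \cref{cor:mbern} to the centered random matrix $\frac1p\ProjO{\M}-\M$. First I would decompose this matrix into a sum of independent, symmetric, mean-zero random matrices indexed by the sampling decisions. Because the sampling is symmetric ($\delta_{ij}=\delta_{ji}$), for each pair $1\le i\le j\le n$ the entries $(i,j)$ and $(j,i)$ of $\M$ are revealed together, so I set $\X_{ij}=\frac{\delta_{ij}-p}{p}\bigl(M_{ij}\e_i\e_j^\top+M_{ji}\e_j\e_i^\top\bigr)$ for $i<j$ and $\X_{ii}=\frac{\delta_{ii}-p}{p}M_{ii}\e_i\e_i^\top$. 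These matrices are independent (the $\delta_{ij}$ with $i\le j$ are), symmetric, satisfy $\mean\X_{ij}=\vec{0}$ since $\mean\delta_{ij}=p$, and obey $\sum_{1\le i\le j\le n}\X_{ij}=\frac1p\ProjO{\M}-\M$.

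Next I would control the two parameters $L$ and $V$ of \cref{cor:mbern}. Since $\delta_{ij}\in\{0,1\}$ we have $\abs{\delta_{ij}-p}\le1$, and the block $M_{ij}\e_i\e_j^\top+M_{ji}\e_j\e_i^\top$ has operator norm $\max(\abs{M_{ij}},\abs{M_{ji}})$, so $\norm{\X_{ij}}\le\frac1p\mnorm{\M}=:L$ for every $i\le j$. For the variance, a direct computation gives $\X_{ij}^2=\frac{(\delta_{ij}-p)^2}{p^2}M_{ij}M_{ji}\bigl(\e_i\e_i^\top+\e_j\e_j^\top\bigr)$ when $i<j$ and $\X_{ii}^2=\frac{(\delta_{ii}-p)^2}{p^2}M_{ii}^2\e_i\e_i^\top$; taking expectations and using $\mean(\delta_{ij}-p)^2=p(1-p)\le p$, the matrix $\sum_{1\le i\le j\le n}\mean\X_{ij}^2$ is diagonal, and its $(k,k)$ entry receives contributions only from the pairs $(i,k)$ with $i<k$, from the pairs $(k,j)$ with $j>k$, and from $(k,k)$, hence is at most $\frac1p\sum_{j=1}^n\abs{M_{kj}M_{jk}}\le\frac{n}{p}\mnorm{\M}^2$. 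Therefore $V=\norm{\sum_{1\le i\le j\le n}\mean\X_{ij}^2}\le\frac{n}{p}\mnorm{\M}^2$.

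Finally, inserting $L$ and $V$ into \cref{cor:mbern} yields $\norm{\frac1p\ProjO{\M}-\M}\lesssim\sqrt{V\log n}+L\log n\lesssim\sqrt{\frac{n\log n}{p}}\mnorm{\M}+\frac{\log n}{p}\mnorm{\M}$ with high probability, which is the claim. I do not expect a genuine obstacle here; the argument is a textbook application of matrix concentration, and the only point needing care is the bookkeeping in the variance sum, namely verifying that each diagonal position $k$ collects exactly $\sum_{j=1}^n\abs{M_{kj}M_{jk}}$. I would also note that in every application of this lemma the matrix $\M$ is symmetric, in which case $M_{ij}M_{ji}=M_{ij}^2$ and $\sum_{1\le i\le j\le n}\mean\X_{ij}^2$ is manifestly positive semidefinite, but the general statement above requires no such assumption.
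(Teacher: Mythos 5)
Your proposal is correct and follows essentially the same route as the paper's proof: the same decomposition of $\frac{1}{p}\ProjO{\M}-\M$ into independent, mean-zero, rank-at-most-two summands indexed by the pairs $i\le j$, the same bounds $L\le\frac{1}{p}\mnorm{\M}$ and $V\le\frac{n}{p}\mnorm{\M}^2$, and the same appeal to the matrix Bernstein corollary. The only quibble is your closing remark: for a non-symmetric $\M$ the blocks $M_{ij}\e_i\e_j^\top+M_{ji}\e_j\e_i^\top$ are themselves not symmetric, so the symmetric form of matrix Bernstein would not apply verbatim (one would pass to a Hermitian dilation), but since every application of the lemma in the paper is to a symmetric matrix, as you note, this is immaterial.
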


\begin{proof}
	We decompose the matrix into the sum of independent symmetric matrices.
	\begin{equation*}
		\frac{1}{p} \ProjO{\M} - \M = \sum_{i < j} \(\frac{\delta_{ij}}{p} - 1\) M_{ij} (\e_i \e_j^\top + \e_j \e_i^\top) + \sum_i \(\frac{\delta_{ii}}{p} - 1\) M_{ii} \e_i \e_i^\top
	\end{equation*}
	We calculate $L$ and $V$ of \cref{cor:mbern}. We have $L \leq \frac{1}{p} \mnorm{\M}$ because
	\begin{align*}
		\norm{\(\frac{\delta_{ij}}{p} - 1\) M_{ij} (\e_i \e_j^\top + \e_j \e_i^\top)} &\leq \frac{1}{p}\mnorm{\M}, \\
		\norm{\(\frac{\delta_{ii}}{p} - 1\) M_{ii} \e_i \e_i^\top} &\leq \frac{1}{p}\mnorm{\M}.
	\end{align*}
	We also have the following bound on $V$.
	\begin{equation*}
		V = \frac{1-p}{p} \norm{\sum_{i,j} \sMs_{ij}^2 \e_i \e_i^\top} \leq \frac{n}{p} \mnorm{\M}^2
	\end{equation*}
	Hence, \cref{cor:mbern} implies the desired result.
\end{proof}

We can prove \cref{lem:Mo-Ms} by applying \cref{lem:POM-M} to $\Ms$ and using $\mnorm{\Ms} = \ls \frac{\mu}{n}$.

We introduce classical Bernstein inequality and the results obtained from it.
\begin{theorem}[Bernstein Inequality]
	\label{thm:bern}
	Let $\{X_i\}$ be independent random variables. Assume that each random variable satisfies $\mean X_i = 0$ and $\abs{X_i} \leq L$ almost surely. Then, for all $\tau \geq 0$, we have
	\begin{equation*}
		\Pr{\abs{\sum_i X_i} \geq \tau} \leq 2\exp \left(\frac{-\tau^2 / 2}{V + L\tau/3}\right),
	\end{equation*}
	where $V = \sum_i \Mean{X_i^2}$.
\end{theorem}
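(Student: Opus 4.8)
The plan is to prove this by the classical exponential-moment (Chernoff) method: control the moment generating function of each summand, combine over $i$ by independence, apply Markov's inequality, and then optimize a free parameter. Fix $\lambda > 0$, to be chosen at the end. The first step is to show that
\[
\mean e^{\lambda X_i} \leq \exp\!\left(\Mean{X_i^2}\,\frac{e^{\lambda L} - 1 - \lambda L}{L^2}\right).
\]
To get this I would expand $e^{\lambda X_i} = 1 + \lambda X_i + \sum_{k \geq 2} \lambda^k X_i^k / k!$, take expectations, use $\mean X_i = 0$ to kill the linear term, bound $\mean X_i^k \leq \Mean{X_i^2}\, L^{k-2}$ for $k \geq 2$ via $\abs{X_i} \leq L$, sum the resulting series to $\Mean{X_i^2}\,(e^{\lambda L} - 1 - \lambda L)/L^2$, and finish with $1 + x \leq e^x$.

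The second step combines the summands. By independence, $\mean e^{\lambda \sum_i X_i} = \prod_i \mean e^{\lambda X_i} \leq \exp\!\big(V (e^{\lambda L} - 1 - \lambda L)/L^2\big)$, and Markov's inequality applied to $e^{\lambda \sum_i X_i}$ gives, for every $\lambda > 0$,
\[
\Pr{\sum_i X_i \geq \tau} \leq \exp\!\left(-\lambda\tau + V\,\frac{e^{\lambda L} - 1 - \lambda L}{L^2}\right).
\]

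The third step is the choice of $\lambda$. I would first record the elementary inequality $e^u - 1 - u \leq \frac{u^2/2}{1 - u/3}$, valid for $0 \leq u < 3$; this follows by writing $e^u - 1 - u = \frac{u^2}{2}\sum_{j \geq 0} \frac{2 u^j}{(j+2)!}$ and comparing term by term with the geometric series $\frac{u^2}{2}\sum_{j \geq 0} (u/3)^j$, using $2/(j+2)! \leq 3^{-j}$. Substituting $u = \lambda L$ turns the bound into $\exp\!\big(-\lambda\tau + \frac{V\lambda^2/2}{1 - \lambda L/3}\big)$, and then taking $\lambda = \tau/(V + L\tau/3)$ — which lies in $(0, 3/L)$ automatically — makes $1 - \lambda L/3 = V/(V + L\tau/3)$, so the second term collapses to $\lambda\tau/2$ and the exponent becomes $-\lambda\tau/2 = -\frac{\tau^2/2}{V + L\tau/3}$. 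This proves the one-sided tail bound $\Pr{\sum_i X_i \geq \tau} \leq \exp\!\big(-\frac{\tau^2/2}{V + L\tau/3}\big)$.

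The final step is symmetrization: the variables $-X_i$ satisfy the same hypotheses with the same $V$, so the same argument bounds $\Pr{\sum_i X_i \leq -\tau}$, and a union bound over the two one-sided events yields the factor $2$ (the case $\tau = 0$ being trivial, since the right-hand side is then $2$). I do not expect a genuine obstacle here; the only points requiring a little care are the interchange of expectation with the power series in the first step and the verification of the elementary inequality $e^u - 1 - u \leq \frac{u^2/2}{1-u/3}$ together with the check that the optimal $\lambda$ stays in the admissible range.
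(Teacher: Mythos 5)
Your argument is correct and is the standard Chernoff/moment-generating-function proof of Bernstein's inequality: the MGF bound via $\mean X_i^k \leq \Mean{X_i^2} L^{k-2}$, the elementary estimate $e^u-1-u \leq \frac{u^2/2}{1-u/3}$, the optimal choice $\lambda = \tau/(V+L\tau/3)$, and symmetrization all go through exactly as you describe (the only degenerate cases, $\tau=0$ or $V=0$, are trivial as you note or because the sum then vanishes almost surely). The paper states this theorem as a classical result without proof, so there is no alternative argument to compare against; your proposal supplies the standard derivation and is complete.
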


\begin{corollary}[Bernstein Inequality]
	\label{cor:bern}
	Let $\{X_i\}$ be independent random variables. Assume that each random variable satisfies $\mean X_i = 0$ and $\abs{X_i} \leq L$ almost surely. Then, with high probability, we have
	\begin{equation*}
		\abs{\sum_i X_i} \lesssim \sqrt{V \log n} + L \log n,
	\end{equation*}
	where $V = \sum_i \Mean{X_i^2}$.
\end{corollary}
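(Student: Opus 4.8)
The plan is to feed \cref{thm:bern} a threshold of the correct order and check that the resulting tail bound decays like an arbitrary negative power of $n$. Fix the target exponent $C > 0$ from the definition of \emph{with high probability}, and set $\tau = C_0\bigl(\sqrt{V\log n} + L\log n\bigr)$ for a constant $C_0 = C_0(C)$ to be pinned down at the end. It then suffices to show $\Pr{\abs{\sum_i X_i} \ge \tau} \le n^{-C}$ for $n$ large, since on the complementary event $\abs{\sum_i X_i} < \tau \lesssim \sqrt{V\log n} + L\log n$.

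To carry this out I would lower bound the exponent $\frac{\tau^2/2}{V + L\tau/3}$ appearing in \cref{thm:bern} by comparing numerator and denominator to the single quantity $W := V + L^2\log n$. Expanding the square and discarding the nonnegative cross term gives $\tau^2/2 \ge \tfrac12 C_0^2\bigl(V\log n + L^2\log^2 n\bigr) = \tfrac12 C_0^2 (\log n)\,W$. For the denominator, the cross term is handled by the arithmetic--geometric mean inequality $L\sqrt{V\log n} = \sqrt{V}\cdot L\sqrt{\log n} \le \tfrac12\bigl(V + L^2\log n\bigr)$, so that
\begin{equation*}
	V + \frac{L\tau}{3} = V + \frac{C_0}{3}L\sqrt{V\log n} + \frac{C_0}{3}L^2\log n \le C_0\,W
\end{equation*}
once $C_0 \ge 2$. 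Dividing, the exponent is at least $\tfrac12 C_0 \log n$, and \cref{thm:bern} yields $\Pr{\abs{\sum_i X_i} \ge \tau} \le 2\,n^{-C_0/2}$; choosing $C_0 = 2(C+1)$ makes the right-hand side at most $n^{-C}$ for all $n \ge 2$, which is exactly the claimed bound.

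I do not expect any real obstacle: this is the usual two-regime reading of Bernstein's inequality. The one point that needs a little care is the cross term $L\sqrt{V\log n}$ in the denominator; the AM--GM step above disposes of it with no case split, but one could equivalently write $V + L\tau/3 \le 2\max\{V,\,L\tau/3\}$ and treat the regimes $V \ge L^2\log n$ (where the $\sqrt{V\log n}$ part of $\tau$ dominates) and $V < L^2\log n$ (where the $L\log n$ part dominates) separately, reaching the same conclusion. The scalar \cref{cor:bern} is moreover exactly parallel to the matrix version \cref{cor:mbern}, whose derivation from the matrix Bernstein inequality follows the identical recipe.
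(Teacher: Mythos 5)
Your argument is correct: the choice $\tau = C_0(\sqrt{V\log n} + L\log n)$, the discarding of the cross term in $\tau^2$, and the AM--GM bound on $L\sqrt{V\log n}$ in the denominator all check out, and they yield $\Pr{\abs{\sum_i X_i}\ge\tau}\le 2n^{-C_0/2}$ with $C$ adjustable via $C_0$, exactly matching the paper's convention for \emph{with high probability}. The paper states \cref{cor:bern} without writing out a proof, and your derivation is precisely the standard two-regime reading of \cref{thm:bern} that it leaves implicit, so there is nothing to add.
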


\begin{lemma}
	\label{lem:ber}
	Let and $\{X_i\}$ be independent Bernoulli random variables with expectation $p$. Then, for any fixed vector $\a$, we have
	\begin{equation*}
		\abs{\sum_i \(\frac{X_i}{p} - 1\) a_i} \lesssim \sqrt{\frac{\log n}{p}} \vnorm{\a} + \frac{\log n}{p} \mnorm{\a}
	\end{equation*}
	with high probability.
\end{lemma}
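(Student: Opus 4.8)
The plan is to recognize this as a direct application of the scalar Bernstein inequality in the form of \cref{cor:bern}. Set $Y_i := \left(\frac{X_i}{p} - 1\right) a_i$ for each $i$. Since $\mean X_i = p$ we have $\mean Y_i = 0$, and the $Y_i$ are independent because the $X_i$ are; moreover $\sum_i\left(\frac{X_i}{p} - 1\right)a_i = \sum_i Y_i$, so the left-hand side of the claim is exactly of the form handled by \cref{cor:bern}. It then remains only to compute the two parameters $L$ and $V$ appearing there.

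For the almost-sure bound, I would use that $X_i \in \{0,1\}$ and $0 < p \le 1$, so $\left|\frac{X_i}{p} - 1\right| \le \frac{1}{p}$ (the value is $1$ when $X_i = 0$ and $\frac{1-p}{p}$ when $X_i = 1$, and both are at most $\frac1p$); hence $\abs{Y_i} \le \frac{1}{p}\abs{a_i} \le \frac{1}{p}\mnorm{\a} =: L$. For the variance proxy, using $X_i^2 = X_i$ for a Bernoulli variable,
\[
\Mean{Y_i^2} = a_i^2\, \Mean{\left(\tfrac{X_i}{p} - 1\right)^2} = a_i^2\left(\tfrac{1}{p^2}\mean X_i - \tfrac{2}{p}\mean X_i + 1\right) = a_i^2 \cdot \frac{1-p}{p},
\]
so $V = \sum_i \Mean{Y_i^2} = \frac{1-p}{p}\sum_i a_i^2 = \frac{1-p}{p}\vnorm{\a}^2 \le \frac{1}{p}\vnorm{\a}^2$.

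Plugging these into \cref{cor:bern} gives, with high probability,
\[
\abs{\sum_i \left(\tfrac{X_i}{p} - 1\right)a_i} \;\lesssim\; \sqrt{V\log n} + L\log n \;\lesssim\; \sqrt{\frac{\vnorm{\a}^2 \log n}{p}} + \frac{\mnorm{\a}\log n}{p} \;=\; \sqrt{\frac{\log n}{p}}\,\vnorm{\a} + \frac{\log n}{p}\,\mnorm{\a},
\]
which is exactly the asserted inequality.

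There is no substantive obstacle here; the argument is a single paragraph once \cref{cor:bern} is available. The only points requiring a little care are using the identity $X_i^2 = X_i$ (rather than a crude estimate) to obtain the clean $V = \tfrac{1-p}{p}\vnorm{\a}^2$, and the elementary bound $\left|\frac{X_i}{p}-1\right| \le \frac1p$ for the uniform bound $L$. One could keep the factor $1-p$ in $V$, but replacing it by $1$ only loosens constants and $\tfrac1p\vnorm{\a}^2$ is all that is needed downstream.
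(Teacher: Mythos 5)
Your proof is correct and follows exactly the paper's route: the paper's own argument is the one-line application of \cref{cor:bern} with $L = \frac{1}{p}\mnorm{\a}$ and $V = \frac{1-p}{p}\vnorm{\a}^2$, and you have simply spelled out the (routine) verification of these two parameters. No gap and no difference in approach.
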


\begin{proof}
	We can apply \cref{cor:bern} with $L = \frac{1}{p} \mnorm{\a}$ and $V = \frac{1 - p}{p} \vnorm{\a}^2$.
\end{proof}

\begin{lemma}
	\label{lem:Mo}
	If $n^2 p \gtrsim \mu n \log n$, we have
	\begin{equation*}
		\mrnorm{\frac{1}{p} \ProjO{\Ms}} \lesssim \ls \sqrt{\frac{\mu}{np}}
	\end{equation*}
	with high probability.
\end{lemma}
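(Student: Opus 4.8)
The plan is to reduce the $\ell_{2,\infty}$-norm bound to a one-dimensional concentration statement handled row by row. First I would write out the $i$th row of $\frac{1}{p}\ProjO{\Ms}$ explicitly: since $\Ms = \ls\us\us^\top$, its $(i,j)$th entry is $\frac{1}{p}\delta_{ij}\ls\sus_i\sus_j$, and because $\delta_{ij}^2 = \delta_{ij}$, the squared $\ell_2$-norm of that row factors as
\begin{equation*}
	\sum_{j=1}^n\(\frac{1}{p}\delta_{ij}\ls\sus_i\sus_j\)^2 = \frac{\ls^2(\sus_i)^2}{p^2}\sum_{j=1}^n\delta_{ij}(\sus_j)^2 .
\end{equation*}
So everything comes down to controlling the scalar random sum $\sum_{j=1}^n\delta_{ij}(\sus_j)^2$, whose mean is $p\sum_j(\sus_j)^2 = p$.

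Next I would fix $i\in[n]$ and apply \cref{lem:ber} to the vector $\a$ with entries $a_j = (\sus_j)^2$, after observing that for a fixed row index the variables $\{\delta_{ij}\}_{j\in[n]}$ are independent $\mathrm{Bernoulli}(p)$ (immediate from the symmetric sampling model, since the positions $(i,i)$ and $\{(i,j),(j,i)\}_{j\neq i}$ are all distinct). Using $\vnorm{\a}=\sqrt{\sum_j(\sus_j)^4}\le\mnorm{\us}\vnorm{\us}=\sqrt{\mu/n}$ and $\mnorm{\a}=\mnorm{\us}^2=\mu/n$, \cref{lem:ber} yields $\abs{\sum_j(\delta_{ij}/p-1)(\sus_j)^2}\lesssim\sqrt{\mu\log n/(np)}+\mu\log n/(np)$ with high probability, which is $\lesssim 1$ under the hypothesis $n^2p\gtrsim\mu n\log n$; hence $\frac{1}{p}\sum_j\delta_{ij}(\sus_j)^2\lesssim 1$. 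A union bound over the $n$ choices of $i$ (permissible since a union of $\poly(n)$ high-probability events is high-probability) makes this hold simultaneously for all rows.

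Finally I would substitute this back into the row-norm identity and use $(\sus_i)^2\le\mnorm{\us}^2=\mu/n$ to conclude that the squared $\ell_2$-norm of every row of $\frac{1}{p}\ProjO{\Ms}$ is $\lesssim\ls^2\mu/(np)$; taking square roots gives $\mrnorm{\frac{1}{p}\ProjO{\Ms}}\lesssim\ls\sqrt{\mu/(np)}$. I do not expect any genuinely hard step: the only points requiring a little care are correctly invoking the independence of the $\delta_{ij}$'s within a single row under the symmetric sampling, and tracking constants so that the Bernstein error terms are really absorbed by the $O(1)$ leading term once $n^2p\gtrsim\mu n\log n$ — exactly the regime assumed.
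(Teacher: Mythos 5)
Your proposal is correct and follows essentially the same route as the paper's proof: both reduce the $\ell_{2,\infty}$ bound row by row to the scalar sum $\frac{1}{p}\sum_j \delta_{ij}(\sus_j)^2$, control its deviation from $\vnorm{\us}^2=1$ via \cref{lem:ber} with $a_j=(\sus_j)^2$, and finish with the incoherence bound $(\sus_i)^2\le\mu/n$ together with a union bound over rows. The only difference is that you make the independence of $\{\delta_{ij}\}_j$ within a row and the union bound explicit, which the paper leaves implicit.
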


\begin{proof}
	Let us consider $\ell_2$-norm of the $i$th row of $\Mo$.
	\begin{align*}
		\vnorm{\(\frac{1}{p} \ProjO{\Ms}\)_{i*}}^2
		&= \ls^2 \sus_i^2 \sum_j \frac{1}{p^2} \delta_{ij}  \sus_j^2 \\
		&\leq \frac{1}{p} \ls^2 \mnorm{\us}^2 \(\vnorm{\us}^2 + \(\sum_j \frac{1}{p} \delta_{ij}  \sus_j^2 - \vnorm{\us}^2\) \) \\
		&\lesssim \frac{\ls^2 \mu}{np} \(1 + \sqrt{\frac{\log n}{np}}\) \lesssim \frac{\ls^2 \mu}{np}
	\end{align*}
	The third line follows from \cref{lem:ber}.
\end{proof}

\begin{proof}[Proof of \cref{lem:Mo-Ml}]
	The spectral norm of a symmetric matrix that has nonzero entries only on the $l$th row/column is bounded by twice of the norm of its $l$th row. Hence,
	\begin{align*}
		\norm{\frac{1}{p} \ProjO{\Ms} - \ProjOl{\Ms}}
		&\leq 2 \vnorm{\(\frac{1}{p} \ProjO{\Ms} - \ProjOl{\Ms}\)_{l*}}
		= 2 \vnorm{\(\frac{1}{p} \ProjO{\Ms} - \Ms\)_{l*}} \\
		&\lesssim \mrnorm{\frac{1}{p} \ProjO{\Ms}} + \mrnorm{\Ms} \lesssim \ls \sqrt{\frac{\mu}{np}},
	\end{align*}
	where the last inequality follows from \cref{lem:Mo}.
\end{proof}

\begin{lemma}
	\label{lem:x2i-x2}
	Let $\y$ be a vector that is independent from the sampling. Then, if $n^2 p \gtrsim n \log n$, we have
	\begin{equation*}
		\max_{i \in [n]} \abs{\norm{\x}_{2,i}^2 - \vnorm{\y}^2} \lesssim n \mnorm{\x - \y} (\mnorm{\x} + \mnorm{\y}) + \sqrt{\frac{\log n}{p}} \vnorm{\y} \mnorm{\y} + \frac{\log n}{p} \mnorm{\y}^2
	\end{equation*}
	with very high probability.
\end{lemma}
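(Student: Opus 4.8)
The plan is to split $\norm{\x}_{2,i}^2 - \vnorm{\y}^2$ into a piece that involves only the difference $\x-\y$ and a piece that involves only $\y$, so that any concentration argument is applied purely to quantities built from $\y$, which by hypothesis is independent of the sampling. Writing $\norm{\x}_{2,i}^2 = \frac{1}{p}\sum_{j} \delta_{ij} x_j^2$ and $\vnorm{\y}^2 = \sum_j y_j^2$, I would use
\begin{equation*}
	\norm{\x}_{2,i}^2 - \vnorm{\y}^2 = \underbrace{\frac{1}{p}\sum_{j=1}^n \delta_{ij}\(x_j^2 - y_j^2\)}_{(\mathrm{I})} \;+\; \underbrace{\sum_{j=1}^n \(\frac{\delta_{ij}}{p} - 1\) y_j^2}_{(\mathrm{II})},
\end{equation*}
and bound the two terms separately, then take a union bound over $i\in[n]$.

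For $(\mathrm{I})$ I would argue deterministically, conditionally on the sampling pattern: factoring $x_j^2 - y_j^2 = (x_j-y_j)(x_j+y_j)$ and bounding each factor by its $\ell_\infty$-norm gives
\begin{equation*}
	\abs{(\mathrm{I})} \leq \frac{1}{p}\(\sum_{j=1}^n \delta_{ij}\) \mnorm{\x - \y}\,\mnorm{\x + \y} \leq \frac{1}{p}\(\sum_{j=1}^n \delta_{ij}\) \mnorm{\x - \y}\(\mnorm{\x} + \mnorm{\y}\).
\end{equation*}
Since $\sum_j \delta_{ij}$ is a sum of independent Bernoulli$(p)$ variables with mean at most $np$, \cref{cor:bern} (applied to $\delta_{ij}-p$ with $L=1$, $V\le np$) yields $\sum_j \delta_{ij} \lesssim np + \sqrt{np\log n} + \log n \lesssim np$ with high probability whenever $np\gtrsim \log n$, i.e. $n^2 p \gtrsim n\log n$; a union bound over $i\in[n]$ preserves this. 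Hence $\abs{(\mathrm{I})} \lesssim n\,\mnorm{\x-\y}\(\mnorm{\x}+\mnorm{\y}\)$ uniformly in $i$. Crucially, this step makes no use of independence between $\x$ and the sampling — that is exactly why the $\x-\y$ contribution is isolated here.

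For $(\mathrm{II})$, because $\y$ is independent of the sampling, I would apply \cref{lem:ber} with the fixed vector $\a = (y_1^2,\dots,y_n^2)$, obtaining $\abs{(\mathrm{II})} \lesssim \sqrt{\frac{\log n}{p}}\,\vnorm{\a} + \frac{\log n}{p}\mnorm{\a}$ with high probability. The elementary estimates $\vnorm{\a} = \sqrt{\sum_j y_j^4} \leq \mnorm{\y}\vnorm{\y}$ and $\mnorm{\a} = \mnorm{\y}^2$ convert this to $\abs{(\mathrm{II})} \lesssim \sqrt{\frac{\log n}{p}}\,\vnorm{\y}\mnorm{\y} + \frac{\log n}{p}\mnorm{\y}^2$; another union bound over $i\in[n]$ makes it uniform. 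Adding the bounds on $(\mathrm{I})$ and $(\mathrm{II})$ via the triangle inequality and taking the maximum over $i$ gives the stated estimate. The argument is essentially routine; the only conceptual point — and in that sense the ``main obstacle'' — is recognizing that $\x$ may be correlated with the sampling (as it will be when $\x=\xt$), so one cannot concentrate $\norm{\x}_{2,i}^2$ directly and must instead absorb the $\x-\y$ part into a crude count of observed entries per row, which is itself well concentrated.
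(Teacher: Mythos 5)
Your proposal is correct and follows essentially the same route as the paper: the identical split into $\frac{1}{p}\sum_j \delta_{ij}(x_j^2-y_j^2)$ and $\sum_j(\frac{\delta_{ij}}{p}-1)y_j^2$, a deterministic bound on the first term using the concentration of $\sum_j\delta_{ij}$ around $np$, and an application of \cref{lem:ber} to $\a=(y_1^2,\dots,y_n^2)$ for the second, with a union bound over $i$. Nothing is missing.
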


\begin{proof}
	Let us fix $i$ and decompose the difference as
	\begin{align*}
		\norm{\x}_{2,i}^2 - \vnorm{\y}^2 = \frac{1}{p} \sum_{j=1}^n \delta_{ij} (x_j^2 - y_j^2) + \sum_{j=1}^n \(\frac{\delta_{ij}}{p} - 1\) y_j^2.
	\end{align*}
	The first term is bounded as
	\begin{equation*}
		\abs{\frac{1}{p} \sum_{j=1}^n \delta_{ij} (x_j^2 - y_j^2)} \leq \mnorm{\x - \y} \mnorm{\x + \y} \frac{1}{p} \sum_{j=1}^n \delta_{ij} \lesssim n \mnorm{\x - \y} \mnorm{\x + \y},
	\end{equation*}
	and the second term is bounded as
	\begin{equation*}
		\abs{\sum_{j=1}^n \(\frac{\delta_{ij}}{p} - 1\) y_j^2} \lesssim \sqrt{\frac{\log n}{p}} \vnorm{\y} \mnorm{\y} + \frac{\log n}{p} \mnorm{\y}^2
	\end{equation*}
	by \cref{lem:ber}.
\end{proof}

\begin{lemma}
	\label{lem:POxx}
	Let $\y$ be a vector that is independent from the sampling. Then, if $n^2 p \gtrsim n \log n$, we have
	\begin{equation*}
		\norm{\frac{1}{p} \ProjO{\x\x^\top} - \y\y^\top} \lesssim n \mnorm{\x - \y} (\mnorm{\x} + \mnorm{\y}) + \sqrt{\frac{n \log n}{p}} \mnorm{\y}^2
	\end{equation*}
	with very high probability.
\end{lemma}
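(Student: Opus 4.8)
The plan is to mimic the proof of \cref{lem:x2i-x2}, splitting the deviation into a part that only involves resampling the \emph{fixed} matrix $\y\y^\top$ and a part that captures the perturbation $\x\x^\top-\y\y^\top$. Concretely, I would write
\begin{equation*}
	\frac{1}{p}\ProjO{\x\x^\top}-\y\y^\top=\frac{1}{p}\ProjO{\x\x^\top-\y\y^\top}+\(\frac{1}{p}\ProjO{\y\y^\top}-\y\y^\top\),
\end{equation*}
and bound the two summands separately with the triangle inequality. Both $\x\x^\top-\y\y^\top$ and $\ProjO{\x\x^\top-\y\y^\top}$ are symmetric, which will be convenient below.

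For the second summand, since $\y$ is independent of the sampling pattern, I condition on $\y$ and apply \cref{lem:POM-M} to the fixed matrix $\M=\y\y^\top$, whose maximum entry is $\mnorm{\y\y^\top}=\mnorm{\y}^2$. This gives
\begin{equation*}
	\norm{\frac{1}{p}\ProjO{\y\y^\top}-\y\y^\top}\lesssim\sqrt{\frac{n\log n}{p}}\,\mnorm{\y}^2+\frac{\log n}{p}\mnorm{\y}^2,
\end{equation*}
and the condition $n^2p\gtrsim n\log n$ (equivalently $\frac{\log n}{p}\le n$) makes the second term dominated by the first, leaving a bound of order $\sqrt{\frac{n\log n}{p}}\,\mnorm{\y}^2$.

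For the first summand I would combine a deterministic spectral-norm estimate with a sparsity bound for $\Omega$. For any symmetric matrix $\A$ with at most $m$ nonzero entries in each row, $\norm{\A}\le\max_i\sum_j\abs{A_{ij}}\le m\,\mnorm{\A}$. The entries of $\x\x^\top-\y\y^\top$ satisfy $\abs{x_ix_j-y_iy_j}\le\abs{x_i}\abs{x_j-y_j}+\abs{y_j}\abs{x_i-y_i}\le\mnorm{\x-\y}(\mnorm{\x}+\mnorm{\y})$, so $\ProjO{\x\x^\top-\y\y^\top}$ has all entries bounded by $\mnorm{\x-\y}(\mnorm{\x}+\mnorm{\y})$ and is supported on $\Omega$. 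It remains to control the number of observed entries per row: $\sum_j\delta_{ij}$ is a sum of independent Bernoulli variables with mean $\Theta(np)$, so \cref{cor:bern} gives $\sum_j\delta_{ij}\lesssim np$ with high probability whenever $np\gtrsim\log n$, i.e.\ $n^2p\gtrsim n\log n$; a union bound over the $n$ rows makes this hold simultaneously. Hence, with high probability,
\begin{equation*}
	\norm{\frac{1}{p}\ProjO{\x\x^\top-\y\y^\top}}\lesssim\frac{1}{p}\cdot np\cdot\mnorm{\x-\y}(\mnorm{\x}+\mnorm{\y})=n\,\mnorm{\x-\y}(\mnorm{\x}+\mnorm{\y}).
\end{equation*}

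Adding the two bounds yields the claim. The statement is essentially routine: the only non-deterministic ingredients are the invocation of \cref{lem:POM-M} and the Chernoff-type bound on the row sparsity of $\Omega$, and the hypothesis $n^2p\gtrsim n\log n$ is exactly what is needed both to make the latter hold with high probability and to absorb the $\frac{\log n}{p}\mnorm{\y}^2$ term from \cref{lem:POM-M}. If I had to flag a subtlety it would be remembering that $\y$ being independent of the sampling is what licenses treating $\y\y^\top$ as a fixed matrix in \cref{lem:POM-M}; the perturbation term, by contrast, is handled by a worst-case argument that needs no such independence.
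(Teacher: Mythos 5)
Your proof is correct, and it follows the paper's decomposition exactly: split off $\frac{1}{p}\ProjO{\y\y^\top}-\y\y^\top$, handle it by applying \cref{lem:POM-M} to the fixed matrix $\y\y^\top$ (using the independence of $\y$ from the sampling), and absorb the $\frac{\log n}{p}\mnorm{\y}^2$ term using $np\gtrsim\log n$. The only place you diverge is the perturbation term $\frac{1}{p}\ProjO{\x\x^\top-\y\y^\top}$. The paper bounds it by $\mnorm{\x\x^\top-\y\y^\top}\,\norm{\frac{1}{p}\ProjO{\1\1^\top}}$ via the entrywise-domination inequality $\norm{\A}\le\norm{\abs{\A}}$, and then gets $\norm{\frac{1}{p}\ProjO{\1\1^\top}}\lesssim n$ by applying \cref{lem:POM-M} once more, to $\1\1^\top$; you instead use the maximum-row-sum bound for symmetric matrices together with a Bernstein/Chernoff bound (plus a union bound over rows) showing each row of $\Omega$ has $\lesssim np$ observed entries. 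Both routes need exactly $np\gtrsim\log n$ and give the same $n\,\mnorm{\x-\y}(\mnorm{\x}+\mnorm{\y})$ term; the paper's version is slightly more economical in that it recycles \cref{lem:POM-M} and needs no additional concentration event, while yours is more elementary and makes the underlying combinatorial fact (bounded row degree of the sampling pattern) explicit. Your entrywise estimate $\abs{x_ix_j-y_iy_j}\le\mnorm{\x-\y}(\mnorm{\x}+\mnorm{\y})$ is the same one the paper uses, and your closing remark about where independence of $\y$ is actually needed is accurate.
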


\begin{proof}
	We have the following sequence of inequalities
	\begin{align*}
		\norm{\frac{1}{p} \ProjO{\x\x^\top} - \y\y^\top}
		&\leq \norm{\frac{1}{p} \ProjO{\x\x^\top} - \frac{1}{p} \ProjO{\y\y^\top}} + \norm{\frac{1}{p} \ProjO{\y\y^\top} - \y\y^\top} \\
		&\leq \mnorm{\x\x^\top - \y\y^\top} \norm{\frac{1}{p} \ProjO{\1\1^\top}} + \norm{\frac{1}{p} \ProjO{\y\y^\top} - \y\y^\top} \\
		&\lesssim \mnorm{\x - \y} (\mnorm{\x} + \mnorm{\y}) \norm{\frac{1}{p} \ProjO{\1\1^\top}} + \norm{\frac{1}{p} \ProjO{\y\y^\top} - \y\y^\top} \\
		&\lesssim n \mnorm{\x - \y} (\mnorm{\x} + \mnorm{\y}) + \sqrt{\frac{n \log n}{p}} \mnorm{\y}^2,
	\end{align*}
	where the second line is derived from a basic inequality $\norm{\A} \leq \norm{\abs{\A}}$ that holds for any matrix $\A$, and the last line follows by applying \cref{lem:POM-M} to $\1\1^\top$ and $\y\y^\top$.
\end{proof}

\begin{lemma}
	\label{lem:hessian g}
	Let $\y$ be a vector that is independent from the sampling. Then, if $n^2 p \gtrsim n \log n$, we have
	\begin{align*}
		\norm{\grad^2 g(\x) - \(\vnorm{\y}^2 \I + 2\y\y^\top\)}
		&\lesssim n \mnorm{\x - \y} (\mnorm{\x} + \mnorm{\y}) \\
		&\quad+ \sqrt{\frac{\log n}{p}} \vnorm{\y} \mnorm{\y} + \frac{\log n}{p} \mnorm{\y}^2 + \sqrt{\frac{n \log n}{p}} \mnorm{\y}^2
	\end{align*}
\end{lemma}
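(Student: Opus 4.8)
The plan is to split $\grad^2 g(\x)$ into a diagonal part and a sampled rank-one part and to compare each against the matching summand of $\vnorm{\y}^2\I + 2\y\y^\top$, invoking \cref{lem:x2i-x2,lem:POxx}. Recall that the Hessian of $g$ is
\begin{equation*}
	\grad^2 g(\x) = \vnorm{\x}^2 \I_{\x} + \frac{2}{p}\ProjO{\x\x^\top} = \diag\(\norm{\x}_{2,1}^2, \cdots, \norm{\x}_{2,n}^2\) + \frac{2}{p}\ProjO{\x\x^\top},
\end{equation*}
so that, grouping terms,
\begin{equation*}
	\grad^2 g(\x) - \(\vnorm{\y}^2 \I + 2\y\y^\top\) = \underbrace{\(\diag\(\norm{\x}_{2,1}^2, \cdots, \norm{\x}_{2,n}^2\) - \vnorm{\y}^2 \I\)}_{\vec{D}} + 2\underbrace{\(\tfrac{1}{p}\ProjO{\x\x^\top} - \y\y^\top\)}_{\vec{R}}.
\end{equation*}

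First I would bound $\norm{\vec D}$. Since $\vec D$ is diagonal, its spectral norm is $\max_{i\in[n]}\abs{\norm{\x}_{2,i}^2 - \vnorm{\y}^2}$, so by \cref{lem:x2i-x2} (applicable since $\y$ is independent of the sampling and $n^2 p \gtrsim n\log n$) we get $\norm{\vec D}\lesssim n\mnorm{\x-\y}(\mnorm{\x}+\mnorm{\y}) + \sqrt{\frac{\log n}{p}}\,\vnorm{\y}\mnorm{\y} + \frac{\log n}{p}\mnorm{\y}^2$ with very high probability. Next I would bound $\norm{\vec R}$ directly by \cref{lem:POxx} (again using that $\y$ is independent of the sampling and $n^2 p\gtrsim n\log n$), which gives $\norm{\vec R}\lesssim n\mnorm{\x-\y}(\mnorm{\x}+\mnorm{\y}) + \sqrt{\frac{n\log n}{p}}\mnorm{\y}^2$ with very high probability. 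Finally, the triangle inequality $\norm{\grad^2 g(\x) - (\vnorm{\y}^2\I + 2\y\y^\top)} \leq \norm{\vec D} + 2\norm{\vec R}$, together with a union bound over these two events and absorbing the two copies of the $n\mnorm{\x-\y}(\mnorm{\x}+\mnorm{\y})$ term into one, yields exactly the claimed inequality.

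Here there is essentially no obstacle: the statement is a bookkeeping corollary of the two preceding concentration lemmas. The only points that require care are that the hypothesis $n^2 p\gtrsim n\log n$ is precisely what \cref{lem:x2i-x2,lem:POxx} both demand, and that $\y$ must be independent of the sampling pattern $\{\delta_{ij}\}_{i\leq j}$ — which is assumed in the statement and is exactly how the lemma is applied later (with $\y$ taken to be $\txt$ or $\xz$, neither of which depends on $\Omega$).
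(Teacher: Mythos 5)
Your proposal is correct and is exactly the argument the paper intends: the paper's proof simply states that the lemma "follows directly" from \cref{lem:x2i-x2,lem:POxx}, and your decomposition into the diagonal part (handled by \cref{lem:x2i-x2}) and the sampled rank-one part (handled by \cref{lem:POxx}), combined by the triangle inequality and a union bound, is precisely that argument spelled out.
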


\begin{proof}
	This follows directly from \cref{lem:x2i-x2,lem:POxx}.
\end{proof}

Let us define an operator $\proj_{\Omega_l}$ such that an entry of $\Proj_{\Omega_l}{\X}$ is equal to that of $\X$ if it is contained both in the $l$th row/column and $\Omega$, and otherwise $0$. We also define an operator $\proj_l$ that makes the entries outside the $l$th row/column zero. Then, we have
\begin{equation*}
	\frac{1}{p}\ProjO{\X} - \ProjOl{\X} = \frac{1}{p} \Proj_{\Omega_l}{\X} - \Projl{\X}.
\end{equation*}
Also, note that
\begin{equation*}
	\frac{1}{p}\ProjO{\E} - \El = \frac{1}{p} \Proj_{\Omega_l}{E}.
\end{equation*}
The following lemma was also introduced in \cite{cong2020implicit}, but we include the proof for completeness.
\begin{lemma}
	\label{lem:POl-Pl}
	Suppose that a matrix $\M$ and a vector $\v$ are independent from sampling of the $l$th row/column. If $n^2 p \gtrsim n \log n$, we have
	\begin{equation*}
		\vnorm{\(\frac{1}{p} \Proj_{\Omega_l}{\M} - \Projl{\M}\)\v} \lesssim \mnorm{\M} \(\sqrt{\frac{\log n}{p}} \vnorm{\v} + \frac{\log n}{p} \mnorm{\v} + \sqrt{\frac{n}{p}} \mnorm{\v}\)
	\end{equation*}
	with high probability.
\end{lemma}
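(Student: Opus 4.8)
The plan is to exploit the sparse structure of $\bm{D} := \tfrac1p\Proj_{\Omega_l}{\M} - \Projl{\M}$: this matrix is supported entirely on its $l$th row and $l$th column, with $(i,j)$-entry equal to $\bigl(\tfrac{\delta_{ij}}{p}-1\bigr)M_{ij}$ whenever $i=l$ or $j=l$ and equal to $0$ otherwise. Consequently, writing $\w := \bm{D}\v$, we have $w_l = \sum_{j=1}^n \bigl(\tfrac{\delta_{lj}}{p}-1\bigr)M_{lj}v_j$ and $w_i = v_l\bigl(\tfrac{\delta_{il}}{p}-1\bigr)M_{il}$ for $i\ne l$, whence
\begin{equation*}
	\vnorm{\w} \;\le\; \abs{w_l} \;+\; \abs{v_l}\,\Bigl(\sum_{i\ne l}\bigl(\tfrac{\delta_{il}}{p}-1\bigr)^2 M_{il}^2\Bigr)^{1/2}.
\end{equation*}
It therefore suffices to bound the two pieces separately. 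The hypothesis that $\M$ and $\v$ are independent of the sampling of the $l$th row/column is exactly what makes the coefficient vectors appearing below deterministic given the relevant randomness, so that \cref{lem:ber} applies with $\{\delta_{lj}\}_{j}$ (respectively $\{\delta_{il}\}_{i\ne l}$) as the independent Bernoulli variables.

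For the first piece I would apply \cref{lem:ber} with the fixed vector $\a$ defined by $a_j = M_{lj}v_j$. Since $\vnorm{\a}\le\mnorm{\M}\vnorm{\v}$ and $\mnorm{\a}\le\mnorm{\M}\mnorm{\v}$, this gives $\abs{w_l}\lesssim\mnorm{\M}\bigl(\sqrt{\tfrac{\log n}{p}}\,\vnorm{\v}+\tfrac{\log n}{p}\mnorm{\v}\bigr)$ with high probability, which accounts for the first two terms of the claimed bound.

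For the second piece, the quantity $S:=\sum_{i\ne l}\bigl(\tfrac{\delta_{il}}{p}-1\bigr)^2 M_{il}^2$ is quadratic in the sampling variables, so the key step is to linearize it via $\delta_{il}^2=\delta_{il}$: one has $\bigl(\tfrac{\delta_{il}}{p}-1\bigr)^2 = 1+\tfrac{1-2p}{p^2}\delta_{il}$, hence $S = \sum_{i\ne l}M_{il}^2 + \tfrac{1-2p}{p^2}\sum_{i\ne l}\delta_{il}M_{il}^2$. The deterministic sum is at most $n\mnorm{\M}^2$, and $\sum_{i\ne l}\delta_{il}M_{il}^2$ concentrates around its mean $p\sum_{i\ne l}M_{il}^2\le pn\mnorm{\M}^2$, with fluctuation $\lesssim\sqrt{pn\log n}\,\mnorm{\M}^2+\log n\,\mnorm{\M}^2$ by \cref{lem:ber} applied to $(M_{il}^2)_{i\ne l}$ (whose $\ell_2$- and $\ell_\infty$-norms are at most $\sqrt n\,\mnorm{\M}^2$ and $\mnorm{\M}^2$). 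Under the hypothesis $n^2p\gtrsim n\log n$, i.e.\ $pn\gtrsim\log n$, both fluctuation terms are $\lesssim pn\mnorm{\M}^2$, so $\sum_{i\ne l}\delta_{il}M_{il}^2\lesssim pn\mnorm{\M}^2$ and therefore $S\lesssim n\mnorm{\M}^2+\tfrac1{p^2}\cdot pn\mnorm{\M}^2\lesssim\tfrac np\mnorm{\M}^2$. Combined with $\abs{v_l}\le\mnorm{\v}$, the second piece is $\lesssim\sqrt{\tfrac np}\,\mnorm{\M}\mnorm{\v}$, and a union bound over the two high-probability events completes the proof.

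I expect the only nonroutine point to be this last estimate: a crude bound $\bigl(\tfrac{\delta_{il}}{p}-1\bigr)^2\le p^{-2}$ would give only $S\lesssim np^{-2}\mnorm{\M}^2$, i.e.\ an extra factor $p^{-1/2}$ in the final bound, so one genuinely needs both the $\delta_{il}^2=\delta_{il}$ linearization and the sample-complexity assumption to absorb the lower-order Bernstein error terms. Everything else is a direct invocation of \cref{lem:ber}.
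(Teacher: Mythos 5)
Your proposal is correct and follows essentially the same route as the paper: the same split into the $l$th entry plus the remaining column, \cref{lem:ber} for the linear term, and the linearization $\delta_{il}^2=\delta_{il}$ together with concentration of a linear Bernoulli sum (using $np\gtrsim\log n$) to bound the quadratic term by $\tfrac{n}{p}\mnorm{\M}^2$. The only differences are cosmetic: you keep the weights $M_{il}^2$ inside the concentration step and apply \cref{lem:ber} to $a_j=M_{lj}v_j$, whereas the paper first factors out $\mnorm{\M}$ and concentrates $\sum_i\delta_{il}/p\asymp n$ directly.
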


\begin{proof}
	If we consider the contribution of $l$th term and the other terms separately, we have
	\begin{align*}
		\vnorm{\(\frac{1}{p} \Proj_{\Omega_l}{\M} - \Projl{\M}\)\v}
		&\leq \abs{\sum_{j=1}^n \(\frac{\delta_{lj}}{p} - 1\) M_{lj} v_j} + \abs{v_l} \sqrt{\sum_{i=1}^n \(\frac{\delta_{il}}{p} - 1\)^2 M_{il}^2} \\
		&\leq \mnorm{\M} \(\abs{\sum_{j=1}^n \(\frac{\delta_{lj}}{p} - 1\) v_j} + \mnorm{\v} \sqrt{\sum_{i=1}^n \(\frac{\delta_{il}}{p} - 1\)^2}\)
	\end{align*}
	From \cref{lem:ber}, we have
	\begin{equation*}
		\abs{\sum_{j=1}^n \(\frac{\delta_{lj}}{p} - 1\) v_j} \lesssim \sqrt{\frac{\log n}{p}} \vnorm{\v} + \frac{\log n}{p} \mnorm{\v}
	\end{equation*}
	with high probability. Regarding the second term, notice that
	\begin{equation*}
		\sum_{i=1}^n \(\frac{\delta_{il}}{p} - 1\)^2 = n + \(\frac{1}{p} - 2\) \sum_{i=1}^n \frac{\delta_{il}}{p}.
	\end{equation*}
	\cref{lem:ber} implies that $\sum_{i=1}^n \frac{\delta_{il}}{p} \asymp n$ with high probability if $n^2 p \gtrsim n \log n$. Hence, we have
	\begin{equation*}
		\sum_{i=1}^n \(\frac{\delta_{il}}{p} - 1\)^2 \lesssim \frac{n}{p},
	\end{equation*}
	and this finishes the proof.
\end{proof}

\begin{lemma}
	\label{lem:x(POl-Pl)x}
	Let $\M$ be a matrix and $\v$, $\w$ be vectors that are independent from sampling of the $l$th row/column. Then, if $n^2 p \gtrsim n \log n$, we have
	\begin{align*}
		&\abs{\w^\top \(\frac{1}{p} \Proj_{\Omega_l}{\M} - \Proj_l{\M}\) \v} \\
		&\lesssim \mnorm{\M} \(\sqrt{\frac{\log n}{p}} (\vnorm{\v}\mnorm{\w} + \vnorm{\w}\mnorm{\v}) + \frac{\log n}{p} \mnorm{\v}\mnorm{\w}\)
	\end{align*}
\end{lemma}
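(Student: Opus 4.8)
The plan is to exploit the fact that $A := \frac{1}{p}\Proj_{\Omega_l}{\M} - \Proj_l{\M}$ has all of its entries equal to zero outside the $l$th row and column, where $A_{lj} = (\delta_{lj}/p - 1)M_{lj}$ and $A_{il} = (\delta_{il}/p - 1)M_{il}$. Expanding the bilinear form and subtracting off the $(l,l)$ entry, which is otherwise counted twice, I would write
\begin{equation*}
	\w^\top A \v = w_l \sum_{j=1}^n \Bigl(\frac{\delta_{lj}}{p} - 1\Bigr) M_{lj} v_j + v_l \sum_{i=1}^n \Bigl(\frac{\delta_{il}}{p} - 1\Bigr) M_{il} w_i - w_l v_l \Bigl(\frac{\delta_{ll}}{p} - 1\Bigr) M_{ll}.
\end{equation*}
Since $\M$, $\v$, and $\w$ are independent of the sampling of the $l$th row and column, the only randomness on the right-hand side comes from the independent Bernoulli($p$) variables $\{\delta_{lj}\}_{j \in [n]}$.

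For the first sum I would invoke \cref{lem:ber} with the fixed vector whose $j$th entry is $M_{lj}v_j$; since its $\ell_2$-norm is at most $\mnorm{\M}\vnorm{\v}$ and its $\ell_\infty$-norm is at most $\mnorm{\M}\mnorm{\v}$, this yields, with high probability, a bound of order $\mnorm{\M}\bigl(\sqrt{\log n/p}\,\vnorm{\v} + (\log n/p)\mnorm{\v}\bigr)$, which after multiplication by $|w_l| \le \mnorm{\w}$ becomes $\mnorm{\M}\mnorm{\w}\bigl(\sqrt{\log n/p}\,\vnorm{\v} + (\log n/p)\mnorm{\v}\bigr)$. The second sum is treated identically (with $M_{il}w_i$ in place of $M_{lj}v_j$ and multiplication by $|v_l|\le\mnorm{\v}$), and the scalar correction term is controlled crudely by $\frac{1}{p}\mnorm{\M}\mnorm{\v}\mnorm{\w} \le \frac{\log n}{p}\mnorm{\M}\mnorm{\v}\mnorm{\w}$. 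Summing the three estimates and taking a union bound over the two high-probability events gives the claimed inequality; the hypothesis $n^2 p \gtrsim n\log n$ is more than enough for the Bernstein tail to be of the stated order.

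The step requiring the most care is the initial decomposition: one must keep track of exactly which entries survive under $\Proj_{\Omega_l}$ versus $\Proj_l$ and remove the $(l,l)$ overlap precisely once. A harmless subtlety is that, by symmetry, $\delta_{il} = \delta_{li}$, so the two sums above are built from the very same randomness; this does not matter since we only bound sums of absolute values and union-bound over the separate events. It is worth contrasting this with \cref{lem:POl-Pl}: there the vector $A\v$ has $l$th coordinate of size roughly $v_l\sqrt{\sum_i(\delta_{il}/p - 1)^2}$, which contributes a $\sqrt{n/p}$ factor, whereas here that coordinate is contracted against $\w$ and becomes a Bernstein sum of size only $\sqrt{\log n/p}$ — precisely why the present bound avoids the $\sqrt{n/p}$ term and is therefore sharper.
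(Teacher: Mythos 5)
Your proof is correct and takes essentially the same route as the paper: the identical three-term decomposition into the $l$th row sum, the $l$th column sum, and the $(l,l)$ overlap, with \cref{lem:ber} applied to the two sums and a crude $\frac{1}{p}$ bound on the correction term. The only (harmless, and arguably cleaner) difference is that you apply \cref{lem:ber} directly to the weighted vectors $(M_{lj}v_j)_j$ and $(M_{il}w_i)_i$, whereas the paper first pulls $\mnorm{\M}$ out of the signed sums and then applies \cref{lem:ber} to $\v$ and $\w$ themselves; both yield the stated bound.
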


\begin{proof}
	We can consider the $l$th row and column separately by
	\begin{align*}
		&\abs{\w^\top \(\frac{1}{p} \Proj_{\Omega_l}{\M} - \Proj_l{\M}\) \v} \\
		&\leq \abs{v_l \sum_i \(\frac{\delta_{il}}{p} - 1\) M_{il} w_i} + \abs{w_l \sum_{j} \(\frac{\delta_{lj}}{p} - 1\) M_{lj} v_j} + \abs{\(\frac{\delta_{ll}}{p} - 1\) M_{ll} v_l w_l} \\
		&\leq \mnorm{\M} \(\mnorm{\v} \abs{\sum_i \(\frac{\delta_{il}}{p} - 1\) w_i} + \mnorm{\w} \abs{\sum_{\smash{j}} \(\frac{\delta_{lj}}{p} - 1\) v_j} + \frac{1}{p} \mnorm{\v} \mnorm{\w}\)
	\end{align*}
	If we apply \cref{lem:ber} to the summations, we get the desired result.
\end{proof}

\begin{lemma}
	\label{lem:Ev}
	Let $\E$ be a symmetric matrix whose upper and on diagonal entries are drawn from Gaussian distribution $\mathcal{N}(0, \sigma^2)$ independently. Let $\v$ be a vector that is independent from sampling of the $l$th row and column. Then, if $n^2 p \gtrsim n \log^2 n$, we have
	\begin{equation*}
		\vnorm{\frac{1}{p} \Proj_{\Omega_l}{\E} \v} \lesssim \sigma \(\sqrt{\frac{\log n}{p}} \vnorm{\v} + \frac{\sqrt{\log^3 n}}{p} \mnorm{\v} + \sqrt{\frac{n}{p}} \mnorm{\v}\)
	\end{equation*}
\end{lemma}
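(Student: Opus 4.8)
The plan is to separate $\frac1p\Proj_{\Omega_l}{\E}\v$ into its $l$-th coordinate and the remaining $n-1$ coordinates. Since $\Proj_{\Omega_l}{\E}$ is supported on the $l$-th row and column, the $l$-th coordinate of $\frac1p\Proj_{\Omega_l}{\E}\v$ equals $\frac1p\sum_{j=1}^n\delta_{lj}E_{lj}v_j$, while for $i\neq l$ the $i$-th coordinate equals $\frac1p\delta_{il}E_{il}v_l$. The triangle inequality then gives
\[
	\vnorm{\tfrac1p\Proj_{\Omega_l}{\E}\v}\ \le\ \underbrace{\tfrac1p\biggl|\sum_{j=1}^n\delta_{lj}E_{lj}v_j\biggr|}_{(\mathrm I)}\ +\ \underbrace{\tfrac{|v_l|}{p}\sqrt{\sum_{i=1}^n\delta_{il}E_{il}^2}}_{(\mathrm{II})},
\]
and I would bound $(\mathrm I)$ and $(\mathrm{II})$ separately, using that $\v$ is independent of the sampling variables $\{\delta_{lj}\}_j$ and the noise variables $\{E_{lj}\}_j$ in the $l$-th row/column (which holds in every application of the lemma, e.g.\ $\v=\xtl$).

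For $(\mathrm I)$ I would condition on $\v$ and $\{\delta_{lj}\}_j$; then $\sum_j\delta_{lj}E_{lj}v_j$ is a centered Gaussian with variance $\sigma^2\sum_j\delta_{lj}v_j^2$, so the Gaussian tail bound used in \cref{lem:xz} gives $\bigl|\sum_j\delta_{lj}E_{lj}v_j\bigr|\lesssim\sigma\sqrt{\log n}\,\bigl(\sum_j\delta_{lj}v_j^2\bigr)^{1/2}$ with high probability. To control $\sum_j\delta_{lj}v_j^2$ I would apply \cref{lem:ber} with $a_j=v_j^2$ and then absorb the cross term by AM--GM, obtaining $\sum_j\delta_{lj}v_j^2\lesssim p\vnorm{\v}^2+\log n\,\mnorm{\v}^2$, hence $\bigl(\sum_j\delta_{lj}v_j^2\bigr)^{1/2}\lesssim\sqrt p\,\vnorm{\v}+\sqrt{\log n}\,\mnorm{\v}$. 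This yields $(\mathrm I)\lesssim\sigma\bigl(\sqrt{\log n/p}\,\vnorm{\v}+\tfrac{\log n}{p}\,\mnorm{\v}\bigr)$, which is within the claimed bound since $\tfrac{\log n}{p}\le\tfrac{\sqrt{\log^3 n}}{p}$.

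For $(\mathrm{II})$ the summands $\delta_{il}E_{il}^2$ are not bounded, so \cref{cor:bern} cannot be used directly and I would first truncate. With high probability $\max_i|E_{il}|\lesssim\sigma\sqrt{\log n}$, so $\sum_i\delta_{il}E_{il}^2$ coincides with the truncated sum whose summands are at most $L\asymp\sigma^2\log n$, have mean at most $p\sigma^2$, and have second moment at most $3p\sigma^4$; applying \cref{cor:bern} with $V\asymp np\sigma^4$ gives $\sum_i\delta_{il}E_{il}^2\le np\sigma^2+O\bigl(\sqrt{np\log n}\,\sigma^2+\sigma^2\log^2 n\bigr)\lesssim np\sigma^2$, where the last step uses precisely the hypothesis $n^2p\gtrsim n\log^2 n$ (equivalently $np\gtrsim\log^2 n$) to dominate the $\sqrt{np\log n}$ and $\log^2 n$ error terms. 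Hence $(\mathrm{II})\lesssim\tfrac{|v_l|}{p}\sqrt{np\sigma^2}\le\sigma\sqrt{n/p}\,\mnorm{\v}$. Adding the bounds for $(\mathrm I)$ and $(\mathrm{II})$ and taking a union bound over the finitely many high-probability events would give the lemma.

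The step I expect to be the main obstacle is this truncation in $(\mathrm{II})$: one has to check both that discarding the tail loses nothing (handled by the Gaussian maximum bound) and that the leftover $\sigma^2\log^2 n$ term is absorbed by $np\sigma^2$ — and this is exactly the place where the stronger sampling assumption $n^2p\gtrsim n\log^2 n$, rather than the $n^2p\gtrsim n\log n$ sufficient in most of the other spectral lemmas, is genuinely needed.
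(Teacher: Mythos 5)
Your proposal is correct and follows essentially the same route as the paper: the identical split of $\frac1p\Proj_{\Omega_l}{\E}\v$ into the $l$-th-row sum and the $|v_l|\sqrt{\sum_i\delta_{il}E_{il}^2}$ term, with Bernstein-type concentration (\cref{cor:bern}, \cref{lem:ber}) for each piece and the condition $np\gtrsim\log^2 n$ used exactly where you say, to absorb the $\sqrt{np\log n}+\log^2 n$ error into $np\sigma^2$. Your treatment of term $(\mathrm I)$ by conditioning on the $\delta$'s and using the exact Gaussian law (rather than the paper's direct Bernstein with a high-probability bound on $L$) is a minor, if anything slightly cleaner, variation that even yields $\tfrac{\log n}{p}\mnorm{\v}$ in place of $\tfrac{\sqrt{\log^3 n}}{p}\mnorm{\v}$, and your explicit truncation in $(\mathrm{II})$ just makes rigorous the almost-sure boundedness the paper invokes informally.
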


\begin{proof}
	If we consider the contribution of $l$th term and the other terms separately, we have
	\begin{align*}
		\vnorm{\frac{1}{p} \Proj_{\Omega_l}{\E} \v}
		&\leq \frac{1}{p} \abs{\sum_{j=1}^n \delta_{lj} E_{lj} v_j} + \frac{1}{p} \abs{v_l} \sqrt{\sum_{i=1}^n \delta_{il} E_{il}^2}
	\end{align*}
	For the first term, we will calculate $V$ and $L$ of \cref{cor:bern}. $V$ is calculated as
	\begin{equation*}
		V = \sum_{j=1}^n \Mean{(\delta_{lj} E_{lj} v_j)^2} = p \sigma^2 \vnorm{\v}^2.
	\end{equation*}
	To find $L$, we first note that $\mnorm{\E_{l*}} \lesssim \sigma \sqrt{\log n}$ with high probability, where $\E_{l*}$ is the $l$th row of $\E$. Thus, for all $j \in [n]$, we have
	\begin{equation*}
		\abs{\delta_{lj} E_{lj} v_j} \lesssim \sigma \sqrt{\log n} \mnorm{\v}.
	\end{equation*}
	\cref{cor:bern} implies that the first term is bounded as
	\begin{equation}
		\label{eq:lem:Ev1}
		\frac{1}{p} \abs{\sum_{j=1}^n \delta_{lj} E_{lj} v_j} \lesssim \sigma \(\sqrt{\frac{\log n}{p}} \vnorm{\v} + \frac{\sqrt{\log^3 n}}{p} \mnorm{\v}\).
	\end{equation}
	For the second term, it suffices to bound
	\begin{equation*}
		\abs{\sum_{i=1}^n \delta_{il} (E_{il}^2 - \sigma^2)}.
	\end{equation*}
	As before, we obtain $V$ and $L$ through
	\begin{gather*}
		\sum_{i=1}^n \Mean{(\delta_{il} (E_{il}^2 - \sigma^2))^2} = p \sum_{i=1}^n \Mean{E_{il}^4 - 2 \sigma^2 E_{il}^2 + \sigma^4} = 2\sigma^4 np, \\
		\abs{\delta_{il} (E_{il}^2 - \sigma^2)} \lesssim \sigma^2 \log n.
	\end{gather*}
	\cref{cor:bern} implies that
	\begin{equation*}
		\abs{\sum_{i=1}^n \delta_{il} (E_{il}^2 - \sigma^2)} \lesssim \sigma^2 \(\sqrt{np \log n} + \log^2 n\).
	\end{equation*}
	Because $\sum_{i=1}^n \delta_{il} \asymp np$, we have
	\begin{equation}
		\label{eq:lem:Ev2}
		\sum_{i=1}^n \delta_{il} E_{il}^2 \lesssim \sigma^2 \(np + \sqrt{np \log n} + \log^2 n\) \lesssim \sigma^2 np
	\end{equation}
	if $n^2 p \gtrsim n \log^2 n$. Combining \cref{eq:lem:Ev1} and \cref{eq:lem:Ev2}, we get the desired bound.
\end{proof}

\begin{lemma}
	\label{lem:wEv}
	Let $\E$ be a symmetric matrix whose upper and on diagonal entries are drawn from Gaussian distribution $\mathcal{N}(0, \sigma^2)$ independently. Let $\v,\w$ be vectors that are independent from sampling of the $l$th row and column. Then, if $n^2 p \gtrsim n \log n$, we have
	\begin{equation*}
		\frac{1}{p} \abs{\w^\top \Proj_{\Omega_l}{\E} \v} \lesssim \sigma \(\sqrt{\frac{\log n}{p}} (\vnorm{\v}\mnorm{\w} + \vnorm{\w}\mnorm{\v}) + \frac{\sqrt{\log^3 n}}{p} \mnorm{\v}\mnorm{\w}\).
	\end{equation*}
\end{lemma}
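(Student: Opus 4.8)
The plan is to reduce the bilinear form to a one–dimensional sum over the $l$th row and column of $\E$ and then invoke the scalar Bernstein inequality (\cref{cor:bern}), exactly along the lines of the proof of \cref{lem:Ev}; contracting against $\w$ instead of keeping an $\ell_2$-norm makes the estimate strictly easier, since no squared–noise sum $\sum_i \delta_{il} E_{il}^2$ ever has to be controlled (which is why the weaker hypothesis $n^2 p \gtrsim n\log n$ suffices here).

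First I would write out, using the symmetries $E_{ij}=E_{ji}$ and $\delta_{ij}=\delta_{ji}$,
\begin{equation*}
	\w^\top \Proj_{\Omega_l}{\E} \v = w_l \sum_{j=1}^n \delta_{lj} E_{lj} v_j + v_l \sum_{i=1}^n \delta_{il} E_{il} w_i - \delta_{ll} E_{ll} v_l w_l,
\end{equation*}
where the last term compensates for the doubly counted $(l,l)$ entry. The triangle inequality then gives
\begin{equation*}
	\frac{1}{p}\abs{\w^\top \Proj_{\Omega_l}{\E} \v} \leq \frac{\mnorm{\w}}{p}\abs{\sum_{j} \delta_{lj} E_{lj} v_j} + \frac{\mnorm{\v}}{p}\abs{\sum_{i} \delta_{il} E_{il} w_i} + \frac{\mnorm{\v}\mnorm{\w}}{p}\abs{E_{ll}}.
\end{equation*}

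Next I would bound each of the two linear sums with \cref{cor:bern}. Treating $\v$ as fixed (it is independent of the $l$th row/column of both $\E$ and the sampling), the summands $\delta_{lj} E_{lj} v_j$ are independent, mean zero, with variance proxy $V = \sum_j \Mean{(\delta_{lj} E_{lj} v_j)^2} = p\sigma^2 \vnorm{\v}^2$; on the high-probability event $\mnorm{\E_{l*}} \lesssim \sigma\sqrt{\log n}$ (a maximum of $n$ Gaussians) one may take $L \asymp \sigma\sqrt{\log n}\,\mnorm{\v}$. Hence \cref{cor:bern} yields
\begin{equation*}
	\frac{1}{p}\abs{\sum_j \delta_{lj} E_{lj} v_j} \lesssim \sigma\sqrt{\frac{\log n}{p}}\,\vnorm{\v} + \frac{\sigma\sqrt{\log^3 n}}{p}\,\mnorm{\v},
\end{equation*}
and symmetrically for $\sum_i \delta_{il} E_{il} w_i$ with the roles of $\v,\w$ swapped. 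The residual diagonal term is $\frac{\mnorm{\v}\mnorm{\w}}{p}\abs{E_{ll}} \lesssim \frac{\sigma\sqrt{\log n}}{p}\mnorm{\v}\mnorm{\w}$ on the same event, which is dominated by the $\frac{\sigma\sqrt{\log^3 n}}{p}\mnorm{\v}\mnorm{\w}$ contribution. Multiplying the first linear bound by $\mnorm{\w}$, the second by $\mnorm{\v}$, adding, and collecting the $\mnorm{\v}\mnorm{\w}$ terms gives exactly the stated estimate. All the events involved hold with high probability, and, as in \cref{lem:Ev}, the hypothesis $n^2 p \gtrsim n\log n$ is used only to make the auxiliary bounds uniform after a union bound over $l \in [n]$.

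I do not expect a genuine obstacle: the proof is essentially a single application of Bernstein's inequality. The only points requiring care are the bookkeeping of the doubly counted $(l,l)$ entry and the standard truncation of the unbounded Gaussian entries via the event $\mnorm{\E_{l*}} \lesssim \sigma\sqrt{\log n}$ — which is precisely what forces the $p^{-1}\sqrt{\log^3 n}$ factor (rather than $p^{-1}\log n$) in the second term of the bound.
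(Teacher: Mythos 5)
Your proposal is correct and follows essentially the same route as the paper: split $\Proj_{\Omega_l}{\E}$ into its $l$th-row and $l$th-column contributions (plus the $(l,l)$ entry), bound the two linear sums exactly as in \cref{eq:lem:Ev1} via \cref{cor:bern} with the truncation event $\mnorm{\E_{l*}} \lesssim \sigma\sqrt{\log n}$, and control the diagonal term by $\abs{E_{ll}} \lesssim \sigma\sqrt{\log n}$. Your added remarks (the sign bookkeeping for the doubly counted entry and why no squared-noise sum is needed, so $n^2p \gtrsim n\log n$ suffices) are accurate but do not change the argument.
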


\begin{proof}
	We can consider the $l$th row and column separately by
	\begin{align*}
		\frac{1}{p} \abs{\w^\top \Proj_{\Omega_l}{\E} \v}
		&\leq \frac{1}{p} \abs{v_l \sum_i \delta_{il} E_{il} w_i} + \frac{1}{p} \abs{w_l \sum_{\smash{j}} \delta_{lj} E_{lj} v_j} + \frac{1}{p} \abs{\delta_{ll} E_{ll} v_l w_l} \\
		&\leq \frac{\mnorm{\v}}{p} \abs{\sum_i \delta_{il} E_{il} w_i} + \frac{\mnorm{\w}}{p} \abs{\sum_{\smash{j}} \delta_{lj} E_{lj} v_j} + \frac{1}{p} \mnorm{\v} \mnorm{\w} \abs{E_{ll}}.
	\end{align*}
	We bound the two summations similar to \cref{eq:lem:Ev1} and for the last term, we note that $\abs{E_{ll}} \lesssim \sigma \sqrt{\log n}$ with high probability.
\end{proof}

\end{document}